\documentclass{article}
\usepackage{iclr2024_conference,times}
\iclrfinalcopy

\usepackage{hyperref}
\usepackage{url}

\usepackage{amsmath}
\usepackage{amsthm}
\usepackage{amssymb}
\usepackage{thm-restate}
\usepackage[inline]{enumitem}
\usepackage{cleveref}
\usepackage{microtype}
\usepackage{todonotes}
\usepackage{setspace}
\usepackage[T1]{fontenc}
\usepackage{xspace}
\usepackage{subcaption}
\usepackage{comment}

\usepackage{MnSymbol}

\usepackage{macros}

\newcommand{\spasm}{\ensuremath{\mathsf{Spasm}}\xspace}

\newcommand*{\msleft}{\{\mskip-5mu\{}
\newcommand*{\msright}{\}\mskip-5mu\}}

\newcommand{\cfi}{\ensuremath{\mathsf{CFI}}\xspace}
\newcommand{\ccfi}{\ensuremath{\mathsf{CCFI}}\xspace}
\newcommand{\ccfitwist}{\ensuremath{\mathsf{CCFI}^\times}\xspace}

\newcommand{\homs}{\ensuremath{\mathsf{homs}}\xspace}
\newcommand{\Hom}{\ensuremath{\mathsf{Hom}}\xspace}

\usepackage{tkz-graph}

\makeatletter
\renewcommand{\todo}[2][]{%
  \@todo[caption={#2}, #1,inlinewidth=12cm,fancyline]{\begin{spacing}{0.5}#2\end{spacing}}%
} 
\makeatother

\usepackage{listings}
\lstset{breaklines=true}

\begin{document}

\title{On the Power of the Weisfeiler-Leman Test \\ 
for Graph Motif Parameters}

\author{  Matthias Lanzinger\\
  TU Wien \& University of Oxford \\
  \text{matthias.lanzinger@tuwien.ac.at}
  \And
Pablo Barcel\'o \\ 
 PUC Chile \& IMFD \& CENIA\\
  \text{pbarcelo@uc.cl}
}

\maketitle

\begin{abstract}
Seminal research in the field of graph neural networks (GNNs) has revealed a direct correspondence between the expressive capabilities of GNNs and the $k$-dimensional 
Weisfeiler-Leman ($k$WL) test, a widely-recognized method for verifying graph isomorphism. This connection has reignited interest in comprehending the specific graph properties effectively distinguishable by the $k$WL test.
A central focus of research in this field revolves around determining the least dimensionality $k$, for which $k$WL can discern graphs with different number of occurrences of a pattern graph $P$. We refer to such a least $k$ as the WL-dimension of this pattern counting problem. This inquiry traditionally delves into two distinct counting problems related to patterns: subgraph counting and induced subgraph counting. Intriguingly, despite their initial appearance as separate challenges with seemingly divergent approaches, both of these problems are interconnected components of a more comprehensive problem: "graph motif parameters". In this paper, we provide a precise characterization of the WL-dimension of labeled graph motif parameters. As specific instances of this result, we obtain characterizations of the WL-dimension of the subgraph counting and induced subgraph counting problem for every labeled pattern $P$. Particularly noteworthy is our resolution of a problem left open in previous work 
concerning induced copies.
We additionally demonstrate that in cases where the $k$WL test distinguishes between graphs with varying occurrences of a pattern $P$, the exact number of occurrences of $P$ can be computed uniformly using only local information of the last layer of a corresponding GNN.
We finally delve into the challenge of recognizing the WL-dimension of various graph parameters. We give a polynomial time algorithm for determining the WL-dimension of the subgraph counting problem for given pattern $P$, answering an open question from previous work.
We additionally show how to utilize deep results from the field of graph motif parameters, together with our characterization, to determine the WL-dimension of induced subgraph counting
and counting $k$-graphlets.
\end{abstract}

\section{Introduction}
\label{sec:intro}

\paragraph{Context.}
Graph neural networks (GNNs) have gained increasing importance over the 
last decade due to their ability to process and operate over graph-structured data \citep{wu2019comprehensive,zhou2020graph,joshi2020comprehensive}. Tasks in which GNNs excel include node classification \citep{DBLP:journals/mva/XiaoWDG22}, graph classification \citep{DBLP:conf/iclr/ErricaPBM20}, link prediction \citep{DBLP:conf/icml/TeruDH20,DBLP:conf/nips/ZhuZXT21}, and query answering over knowledge graphs \citep{DBLP:journals/corr/abs-2002-02406,GalkinZR022}. Based on these, GNNs have found applications in many fields, including  
social network analysis \citep{kipf2018community}, recommender systems \citep{ying2018graph}, chemistry \citep{gilmer2017neural}, semantic web \citep{DBLP:journals/csur/HoganBCdMGKGNNN21,schlichtkrull2018modeling}, natural language processing \citep{marcheggiani2017encoding}, and combinatorial optimization \citep{dai2021learning}. 

The practical importance of GNNs has spurred the community to study their {\em expressive power}. This refers to the ability of GNNs to distinguish pairs of non-isomorphic graphs. As it was early observed in two landmark articles \citep{DBLP:conf/aaai/0001RFHLRG19,DBLP:conf/iclr/XuHLJ19}, the expressive power of so-called {\em message-passing} GNNs (MPGNNs) is precisely that of the Weisfeiler-Leman (WL) test \citep{WL}, one of the most renowned methods for checking graph isomorphism. 
As recently observed, this correspondence holds even in cases where 
graphs are node- and edge-labeled, naturally representing 
the rich structure present in {\em knowledge graphs} \citep{DBLP:conf/log/Barcelo00O22}. 
The above has ignited significant interest among experts in exploring which graph properties can be distinguished by the WL test, especially those that are crucial for the applications of MPGNNs \citep{DBLP:journals/jcss/ArvindFKV20,DBLP:conf/nips/Chen0VB20,DBLP:conf/nips/0001RM20,DBLP:conf/nips/BarceloGRR21,DBLP:journals/corr/abs-2302-02209,DBLP:journals/pami/BouritsasFZB23}. 

A class of graph properties that have received special attention in this context is the number of times a given ``pattern'' appears as a subgraph 
in a graph. The relevance of this class emanates from the fact that subgraph counts are used in several graph-related tasks, such as constructing graph kernels \citep{DBLP:journals/jmlr/ShervashidzeVPMB09,DBLP:conf/ijcai/Kriege0RS18}
and computing spectral graph information \citep{DBLP:conf/cdc/PreciadoJ10}. 
Subgraph counts also lie at the basis of some methods that measure the similarity between graphs \citep{alon}. Therefore, it is crucial to comprehend the extent to which these methods align with the expressive capabilities of the WL test (or equivalently, MPGNNs). 

In its more general version, the WL test is characterized by a parameterized {\em dimension}, $k \geq 1$ \citep{DBLP:journals/combinatorica/CaiFI92,DBLP:conf/aaai/0001RFHLRG19}. The {\em $k$-dimensional} WL test, or $k$WL test for short, iteratively colors the $k$-tuples of nodes in a graph until a fixpoint is reached. Two graphs are said to be {\em distinguishable} by $k$WL, if the multisets of colors of all $k$-tuples reached at this fixpoint are different. A graph property $f$ can be \emph{distinguished} by $k$WL if for any two graphs $G,H$ where $f(G) \neq f(H)$ only if $G$ and $H$ are distinguishable by $k$WL.
We are then interested in the least $k$ for which a parameter can be distinguished by $k$WL. We refer to this $k$ as the {\em WL-dimension} of the parameter. %

Several important results have been obtained over the last few years regarding the WL-dimension of counting the number of copies of a  graph $P$ (the \emph{pattern}). We summarize some of such results next, by distinguishing the case in which we count copies of $P$ (subgraphs) from the one in which we count {\em induced} copies of $P$ (induced subgraphs). 
For the sake of clarity, we call the former the {\em subgraph} WL-dimension of $P$ and the latter the {\em induced subgraph} WL-dimension of $P$. 

\begin{itemize}

\item {\em \underline{Counting subgraphs:}} An important notion in graph theory is the {\em treewidth} of a graph, which intuitively measures its degree of acyclicity \citep{DBLP:books/daglib/0030488}. The {\em hereditary treewidth} of a pattern $P$ is, in broad terms, 
the largest treewidth of any of the homomorphic images of $P$. \cite{DBLP:journals/jcss/ArvindFKV20} initiated the 
study of the ability of the WL test to count subgraphs in terms of the notion of hereditary treewidth. They established that if the pattern $P$ has hereditary treewidth $k$, then $P$ has subgraph WL-dimension at most 
$k$. 
Nevertheless, the sole instance in which they were able to demonstrate the validity of the converse was for $k = 1$, 
i.e., they showed that $P$ has subgraph WL-dimension one iff $P$ has hereditary treewidth one.  
In the meantime, some partial results were obtained for the case when $k = 2$ over particular classes of graphs. For instance, 
by combining results in \cite{DBLP:journals/jcss/ArvindFKV20} and \cite{DBLP:journals/corr/Furer17}, one obtains that the largest cycle (respectively, path) with a subgraph WL-dimension of two is that of length seven. 
Very recently, however, this gap has been closed. In fact, \cite{DBLP:journals/corr/abs-2304-07011} proves the following for each $k \geq 1$: if $P$ is a pattern, then 
$P$ has subgraph WL-dimension $k$ iff $P$ has hereditary treewidth $k$. This also provides an alternative explanation for the aforementioned result on cycles (resp., paths), as one can observe that a 
cycle (resp., path) has hereditary treewidth two iff it is of length at most seven.     
   
\item {\em \underline{Counting induced subgraphs:}} Most of the existing results on counting induced subgraphs were obtained in \cite{DBLP:conf/nips/Chen0VB20}. The authors show that all patterns with $k+1$ nodes have an induced subgraph WL-dimension bounded by $k$. Moreover, this is optimal for $k = 1$; i.e., no pattern with three or more nodes has induced subgraph WL-dimension one. It is not known if this correspondence continues to hold for $k > 1$, i.e., whether there are patterns with $k+2$ or more nodes with induced subgraph WL-dimension $k$, for $k > 1$.        
\end{itemize} 
It is noteworthy that the previously mentioned results regarding counting induced subgraphs were achieved in a broader context compared to the results concerning counting subgraphs. Specifically, the former apply even to labeled graphs, where each node and edge is assigned a label, whereas the latter were derived for non-labeled graphs.

Therefore, we have different levels of understanding of the capabilities of the $k$WL test for counting subgraphs and induced subgraphs. Furthermore, these two issues have been addressed separately and employing distinct techniques, which enhances the perception that there is no inherent structural linkage between the two.
However, as evidenced by research in the field of counting complexity, there exists a cohesive approach through which these types of problems can be examined. In fact, the counting of subgraphs and induced subgraphs are fundamentally interconnected, akin to opposite faces of a coin, as they can be represented as linear combinations of one another \citep{lovasz1967operations}. Thus, we can achieve insight into either of them by exploring the linear combinations of the other. 

Expanding upon this idea, \cite{DBLP:conf/stoc/CurticapeanDM17} introduced a comprehensive framework for {\em graph motif parameters}, which are defined as linear combinations of subgraph counts. In this paper, we study the ability of the WL test to count graph motifs, which provides us with a general approach for studying problems related to subgraph counting in this setting. Our main contributions are summarized below. It is worth noting that all such results are derived within the framework established in \cite{DBLP:conf/nips/Chen0VB20}, which focuses on labeled graphs. This introduces an additional level of intricacy into all the proofs presented in this paper.

\begin{itemize}
\item
By building on tools developed by \cite{DBLP:journals/corr/abs-2304-07011} and \cite{DBLP:journals/corr/abs-2302-11290}, we establish a precise characterization for the class of labeled graph motifs with WL-dimension $k$, for $k \geq 1$. Specifically, for subgraph counting, this class precisely corresponds to the patterns of hereditary treewidth $k$, aligning with the characterization presented by \cite{DBLP:journals/corr/abs-2304-07011} for the case of unlabeled graphs. For induced subgraph counting, this class contains precisely the patterns featuring $k+1$ nodes, thus resolving the open issue posed by \cite{DBLP:conf/nips/Chen0VB20}. 
\item The previous result characterizes for which graph motifs $\Gamma$ the $k$WL test is able to distinguish between graphs with different numbers of occurrences of $\Gamma$. A natural question arises: Is it possible to obtain the number of occurrences of a graph motif $\Gamma$ in a graph $G$ by computing a function over the multiset of colors of $k$-tuples of vertices obtained from the $k$WL test? We answer this question affirmatively.
This result can be of interest to researchers working on MPGNN applications, as it suggests that by designing a suitable MPGNN architecture, one might be able to count the number of subgraphs that appear in a given graph.
\item 
We finally move into the problem of determining the WL-dimension for the problem of counting the occurrences of a given graph pattern $P$. Our characterization shows that for counting induced subgraphs this problem is trivial as the WL-dimension is precisely the number of vertices in $P$ minus 1. For subgraph counting, in turn, the problem is nontrivial, as we have to check for each homomorphic image of $P$ whether its treewidth is at most $k$. Since the number of homomorphic images of $P$ is potentially exponential, this yields a na\"ive exponential time algorithm for the problem. We show that, in spite of this, the problem admits a polynomial time algorithm. 
The existence of such an algorithm was left open in \cite{DBLP:journals/jcss/ArvindFKV20} even for the case $k = 2$.  
\end{itemize}

Since many of the proofs in the paper are extensive and complex, we have opted to place technical details in the appendix and offer proof sketches in the main body of the paper to conserve space.

\section{Preliminaries}
\paragraph{Labeled graphs.}
We work with graphs that contain no self-loops and are node- and edge-labeled. 
Let $\Sigma$ and $\Delta$ be finite alphabets containing node and edge labels, respectively. 
A {\em labelled graph} is a tuple $G = (V,E,\lambda,\kappa)$, where $V$ is a finite set of nodes, $E$ is a set of undirected edges, i.e., unordered pairs of nodes, $\lambda : V \to \Sigma$ is a function such that $\lambda(v)$ represents the label of node $v$, for each $v \in V$, and $\kappa : E \to \Delta$ is a function such that $\kappa(e)$ represents the label of edge $e$, for each $e \in E$.

Given labeled graphs $G = (V,E,\lambda,\kappa)$ and $G' = (V',E',\lambda',\kappa')$, a {\em homomorphism} from $G$ to $G'$ is a function $f : V \to V'$ such that: (1) 
$(u,v) \in E \, \Rightarrow \, (f(u),f(v)) \in E'$, for each $u,v \in V$, (2) $\lambda(v) = \lambda'(f(v))$, for each $v \in V$, and 
(3) for each edge $(u,v) \in E$, it holds that 
$\kappa(u,v) = \kappa'(f(u),f(v))$. 
If, in addition, $f$ is a bijection and the first condition is satisfied by the stronger statement that $(u,v) \in E \, \Leftrightarrow \, (f(u),f(v)) \in E'$, for each $u,v \in V$, then we say that $f$ is an {\em isomorphism}. 
We write ${\sf Hom}(G,G')$ for the set of all homomorphisms from $G$ to $G'$ and ${\sf homs}(G,G')$ for the number of homomorphisms in ${\sf Hom}(G,G')$.

For a $k$-tuple $\bar v=(v_1,\dots,v_k)\in V(G)^k$, we write $G[\bar v]$ as a shortcut for $G[\{v_1,\dots,v_k\}]$.
The \emph{atomic type} ${\sf atp}(G, \bar v)$ of a $k$-tuple $\bar v$ of vertices in a labeled graph $G$ is some function such that, for $\bar v \in V(G)^k$ and $\bar w \in V(H)^k$, it holds that ${\sf atp}(G,\bar v) = {\sf atp}(H, \bar w)$ if and only if the mapping $v_i \mapsto w_i$ is an isomorphism from $G[\bar v]$ into $H[\bar w]$.

\paragraph{Weisfeiler-Leman test.} Let $G$ be a labeled graph. The $k$WL test, for $k > 0$, iteratively 
colors the elements in $V(G)^k$, that is, all tuples of $k$ nodes from $V(G)$. The color of tuple $\bar v \in V(G)^k$ after $i$ iterations, for $i \geq 0$, is denoted $c^k_i(\bar v)$. This coloring is defined inductively, and its definition depends on whether $k = 1$ or $k > 1$. 
\begin{itemize} 
\item For $k = 1$, we have that 
\[
  c^k_{i}(v) := \
  \begin{cases}
    {\sf atp}(G,v) & \text{if } i=0\\
    \left( c^k_{i-1}(v), \msleft  (\kappa(v,w),c^k_{i-1}(w)) \mid (v,w) \in V(G) \msright \right) & \text{if } i \geq 1
  \end{cases}
\]
In other words, $c^k_i(v)$ consists of the color of vertex $v$ in the previous iteration, along with a multiset that includes, for each neighbor $w$ of $v$ in $G$, an ordered pair consisting of the color of $w$ in the previous iteration and the label of the edge connecting $v$ and $w$.
\item For $k > 1$, we have that  
\[
  c^k_{i}(\bar v) := \
  \begin{cases}
    {\sf atp}(G,\bar v) & \text{if } i=0\\
    \left( c^k_{i-1}(\bar v), \msleft  \mathsf{ct}(w, i-1, \bar v) \mid w \in V(G) \msright \right) & \text{if } i \geq 1
  \end{cases}
\]
Here, $\mathsf{ct}(w, i, \bar v)$ denotes the 
\emph{color tuple} for $w \in V(G)$ and $\bar v \in V(G)^k$, and is defined as:  
\[
  \mathsf{ct}(w, i, \bar v) = \left(c_{i}^k(\bar{v}[w/1]), \dots, c_{i}^k(\bar{v}[w/k])\right),
\]
where $\bar v[w/j]$ denotes the tuple that is obtained from $\bar v$ by replacing its $j$th component with the element $w$. In other words, $c^k_i(\bar v)$ is formed by the color of $\bar v$ in the previous iteration, along with a multiset that includes, for each node $w$ in $G$, a tuple containing the color in the previous iteration for each tuple that can be derived from $\bar v$ by substituting one of its components with the element $w$.  
\end{itemize} 

The $k$WL test {\em stabilizes} after finitely many steps. That is, for every labeled graph $G$ there exists a $t \geq 0$ such that, for each $\bar v,\bar w \in V(G)^k$, it holds that
$$c^k_t(\bar v) = c^k_t(\bar w) \ \ \Longleftrightarrow \ \ c^k_{t+1}(\bar v) = c^k_{t+1}(\bar w).$$ 
We then define the {\em color of tuple $\bar v$ in $G$} as $c^k(\bar v) := c^k_t(\bar v)$. 

We say that two labeled graphs $G$ and $H$ are {\em indistinguishable} by $k$WL (or $G\equiv_{k\text{WL}} H$), if the $k$-WL algorithm yields the same coloring on both graphs, i.e.,
\[
  \msleft c^k(\bar v) \mid \bar v \in V(G)^k) \msright =  \msleft c^k(\bar w) \mid \bar w \in V(H)^k) \msright.
\]

The version of the WL test used in this paper is also known  
as the {\em folklore} $k$WL test (as defined, for instance, by \cite{DBLP:journals/combinatorica/CaiFI92}). It is essential to note that an alternative version of this test, known as the {\em oblivious} $k$WL test, has also been explored in the machine learning field \citep{DBLP:conf/aaai/0001RFHLRG19}. Notably, it is established that for each $k \geq 1$, the folklore $k$WL test possesses the same distinguishing capability as the oblivious $(k+1)$WL test \citep{DBLP:journals/jsyml/GroheO15}. 

\paragraph{Graph motif parameters.} 

A \emph{labeled graph parameter} is a function that maps labeled graphs into $\mathbb{Q}$.
A \emph{labeled graph motif parameter}~\citep{DBLP:conf/stoc/CurticapeanDM17} is a labeled graph parameter such that, for a labeled graph $G$, its value in $G$ is equivalent to a (finite) linear combination of homomorphism counts into $G$. More formally, function $\Gamma$ 
is a labeled graph motif parameter, if 
there are fixed labeled graphs $F_1,\dots,F_\ell$ and constants $\mu_1,\dots,\mu_\ell \in \mathbb{Q} \setminus \{0\}$, such that 
for every labeled graph $G$: 
\begin{equation} \label{lgmp}
    \Gamma(G) \, = \, \sum_{i=1}^\ell \mu_i {\sf homs}(F_i, G). 
\end{equation}
We refer to the set $\{F_1,\dots,F_\ell\}$ as the \emph{support} $\mathsf{Supp}(\Gamma)$ of $\Gamma$. 

It is often not easy to see whether a function on graphs is a graph motif parameter. Fortunately, we know that certain model counting problems for large fragments of first-order logic are indeed labeled graph motif parameters.
In the following, we view labeled graphs as structures. Logical formulas can then be defined over a signature that contains a unary relation symbol $U_\sigma$, for each label $\sigma \in \Sigma$, and a 
binary relation symbol $E_\delta$, for each label $\delta \in \Delta$.
Let $\varphi$ be a formula with free variables $\bar x$ of the form 
\[
     \exists \bar y \;\,\psi(\bar y, \bar x) \land \psi^*(\bar x), 
\]
where $\psi$ is a formula consisting of positive atoms, conjunction and disjunction, and $\psi^*$ is a conjunction of inequalities and possibly negated atoms. We call such formulas \emph{positive formulas with free constraints}. We write $\#\varphi(G)$ for the number of assignments to $\bar x$ for which the formula is satisfied in labeled graph $G$. It is known that the function $\#\varphi$ for such formulas can be expressed as a finite linear combination of homomorphism counts \emph{with projection}. \citet{wlCQs} have recently initiated the study of how homomorphism counts with projection relate to the $k$WL test. In this paper, we are content with a more restricted fragment for which no projection in the homomorphism counts is required.

\begin{proposition}[\cite{DBLP:conf/pods/ChenM16},\cite{DBLP:conf/icalp/DellRW19}]
\label{thm:fomotif}
Let $\varphi$ be a quantifier-free positive formula with free constraints.
    Then the function $\#\varphi$ is a labeled graph motif parameter\footnote{\cite{DBLP:conf/icalp/DellRW19} additionally require $\psi$ to be in CNF/DNF. 
    For our purposes the potential blowup incurred by transformation into CNF is of no consequence.}.
\end{proposition}

\begin{example}
\label{logicexamples}
    We start by showing that counting subgraphs and counting induced subgraphs are examples of graph motif parameters (for each fixed pattern).
    Suppose for simplicity that we consider graphs without node labels and with a single edge label, and let $E$ be the corresponding relation symbol. Consider a  graph $H = (V,E)$ for which we want to count occurrences as a subgraph. We can do this via the following positive formula with free constraints: 
    $$\phi_H \ := \ \bigwedge_{(u,v) \in E} E(x_u,x_v) \, \wedge \, \bigwedge_{u \neq v, \, u,v \in V} x_u \neq x_v.$$ 
    Notice that this formula has a free variable $x_v$, for each node $v \in V$. 
If now we want to count the number of occurrences of $H$ as an induced subgraph, we can simply extend $\phi_H$ with the following
conjunction: $\bigwedge_{(u,v) \not\in E} \neg E(x_u,x_v)$. 
graph motif parameter by ${\sf ind}_H$. 
    
    Consider the following formula with free variables $x_1,\dots,x_k$: 
    \[
    \phi_{{\rm IS}} := \bigwedge_{1\leq i\neq j \leq k} \neg E(x_i, x_j)
    \]
    The satisfying assignments to $\phi_{{\rm IS}}$ in graph $G$ are precisely its independent sets of size at most $k$. We see then that counting $k$-independent sets is a graph motif parameter.

\qed
\end{example} 

\paragraph{Counting graph motifs parameters and the WL test.}
Take $k > 0$. Consider a graph parameter $\Gamma$. We state that the $k$WL test {\em can distinguish $\Gamma$} if, for any pair of labeled graphs $G$ and $H$ where $\Gamma(G) \neq \Gamma(H)$, it follows that $G \not\equiv_{k\text{WL}} H$. The \emph{WL-dimension} of a graph parameter $\Gamma$ is the minimal $k > 0$ such that the $k$WL test can distinguish $\Gamma$.

\section{Characterizing the WL-dimension of graph motif patterns}
\label{sec:character}

In this section, we provide a characterization of the WL-dimension of labeled graph motif parameters.
To grasp the results presented in this section, it is crucial to first define the concept of the {\em treewidth} of a graph $G$. This concept, well-known in graph theory, aims to quantify the degree of acyclicity in $G$. 

Let $G = (V,E,\lambda,\kappa)$ be a labeled graph. A {\em tree decomposition} of $G$ is a tuple $(T,\alpha)$, where $T$ is a tree and $\alpha$ is a function that maps each node $t$ of the tree $T$ to a subset of the nodes in $V$, that satisfies the following: 
\begin{itemize}
\item For each $(u,v) \in E$, there exists a node $t \in T$ with $\{u,v\} \in \alpha(t)$. 
\item For each node $v \in V$, the set $\{t \in T \mid v \in \alpha(t)\}$ is a subtree of $T$. In other words, the nodes $t$ of $T$ for which $\alpha(t)$ contains $v$ are connected in $T$. 
\end{itemize}
The {\em width} of a tree decomposition $(T,\alpha)$ of $G$ is defined as $\max_{t \in T} |\alpha(t)| - 1$. The treewidth of $G$ is then defined as the minimum width of any of its tree decompositions. It is easy to see that $G$ has treewidth one if and only if its underlying graph is a tree. 

The following is the main result in this section. It characterizes the WL-dimension of a graph motif parameter in terms of the treewidth of its support set. 
\begin{theorem}
  \label{mainmotif}
  Let $\Gamma$ be a labeled graph motif parameter. The WL-dimension of $\Gamma$ is the maximum treewidth of any labeled graph in ${\sf Supp}(\Gamma)$.
\end{theorem}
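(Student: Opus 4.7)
The plan is to prove matching upper and lower bounds on the WL-dimension, both mediated by the fundamental connection between $k$WL-equivalence and homomorphism indistinguishability over graphs of bounded treewidth. Let $k^\star$ denote the maximum treewidth attained in $\mathsf{Supp}(\Gamma)$, and fix a representation $\Gamma = \sum_{i=1}^\ell \mu_i \homs(F_i,\cdot)$ as in (\ref{lgmp}). Since the $F_i$ are pairwise non-isomorphic (any duplicates may be absorbed into a single coefficient, which stays nonzero or forces removal), we may treat the $F_i$ as distinct labeled graphs.

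For the upper bound, I would invoke the labeled-graph extension of the Dvořák/Dell--Grohe--Rattan theorem: for every $k\ge 1$, $G \equiv_{k\text{WL}} H$ if and only if $\homs(F,G) = \homs(F,H)$ for every labeled graph $F$ with $\mathrm{tw}(F) \le k$. This extension is essentially contained in the labeled-WL framework of \cite{DBLP:conf/log/Barcelo00O22} and in the machinery of \cite{DBLP:journals/corr/abs-2302-11290}; I would cite it and then simply apply it termwise to (\ref{lgmp}) with $k = k^\star$ to conclude $\Gamma(G) = \Gamma(H)$ whenever $G \equiv_{k^\star\text{WL}} H$.

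For the lower bound, suppose toward contradiction that the WL-dimension is at most $k^\star - 1$, and fix $F^\star \in \mathsf{Supp}(\Gamma)$ with $\mathrm{tw}(F^\star) = k^\star$ and coefficient $\mu^\star \ne 0$. My plan is to exhibit a pair $G, H$ of labeled graphs with $G \equiv_{(k^\star - 1)\text{WL}} H$ but $\Gamma(G) \ne \Gamma(H)$. The construction is of CFI type, parametrized by $F^\star$, following the labeled CFI/twisted-CFI machinery of \cite{DBLP:journals/corr/abs-2304-07011} and \cite{DBLP:journals/corr/abs-2302-11290}: placing CFI-gadgets on an optimal tree decomposition of $F^\star$ (of width $k^\star - 1$) and a twisted variant yields a pair $G, H$ with (i) $G \equiv_{(k^\star-1)\text{WL}} H$ and (ii) $\homs(F^\star, G) \ne \homs(F^\star, H)$. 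By the Dvořák-type characterization in the upper bound, $\homs(F, G) = \homs(F, H)$ automatically for every $F$ with $\mathrm{tw}(F) \le k^\star - 1$, so the only surviving terms of $\Gamma(G) - \Gamma(H)$ come from $F_i \in \mathsf{Supp}(\Gamma)$ of treewidth exactly $k^\star$.

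The main obstacle is to ensure that the contribution of $F^\star$ is not cancelled by the other maximum-treewidth supports. I see two routes. The direct route is to refine the CFI construction so that $\homs(F, G) = \homs(F, H)$ for \emph{every} labeled graph $F$ with $\mathrm{tw}(F) \le k^\star$ that is not isomorphic to $F^\star$; this is the key technical lemma and amounts to a rigidity/canonicity statement for the CFI gadgets over labeled tree decompositions, which is where the bulk of the labeled-setting bookkeeping will live. The cleaner alternative, which I would prefer, is to use an ``iff'' strengthening of Dvořák's theorem: $\Gamma$ is distinguishable by $k$WL if and only if $\Gamma$ admits some representation as a finite linear combination of $\homs(F,\cdot)$ with $\mathrm{tw}(F) \le k$. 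Combined with the classical Lovász-type linear independence of $\{\homs(F,\cdot)\}$ over pairwise non-isomorphic labeled graphs $F$, any such alternative representation must coincide term-by-term with the given one, contradicting the presence of $F^\star$ of treewidth $k^\star$ with $\mu^\star \ne 0$. Either route completes the argument; the CFI route yields an explicit certificate, while the linear-independence route is shorter and localizes the work into two well-known labeled-setting extensions.
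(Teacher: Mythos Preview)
Your upper bound is exactly the paper's: invoke the labeled Dvo\v{r}\'ak correspondence and sum termwise.

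For the lower bound, Route~1 does not work as stated. The refinement you propose---making $\homs(F,G)=\homs(F,H)$ for \emph{every} labeled $F$ with $\mathrm{tw}(F)\le k^\star$ and $F\not\cong F^\star$---is impossible in general: if $\homs(F^\star,G)\neq\homs(F^\star,H)$ then taking $F=F^\star\sqcup F^\star$ (still of treewidth $k^\star$) gives $\homs(F,\cdot)=\homs(F^\star,\cdot)^2$, which also differs on $G,H$. So CFI cannot isolate a single pattern, and the cancellation problem you flagged is real and unavoidable along that route.

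Route~2 is in fact the paper's approach, but you have misdiagnosed where the work lies. The ``iff strengthening of Dvo\v{r}\'ak'' you cite is not an off-the-shelf lemma; its only-if direction \emph{is} the content of the lower bound. Concretely, the paper combines two ingredients: (i) Seppelt's lemma, which says that if a finite combination $\sum_F\mu_F\,\homs(F,\cdot)$ is invariant under $\equiv_{\mathcal{LT}_\ell}$, then each $F$ lies in $\mathsf{homclosure}(\mathcal{LT}_\ell)$---this is proved by tensoring with arbitrary $K$ to get $\sum_F\mu_F\,\homs(F,G)\homs(F,K)=\sum_F\mu_F\,\homs(F,H)\homs(F,K)$ and then invoking Lov\'asz invertibility of the hom-matrix to decouple the summands; and (ii) the fact that $\mathcal{LT}_\ell$ is \emph{homomorphism-distinguishing closed}, i.e.\ $\mathsf{homclosure}(\mathcal{LT}_\ell)=\mathcal{LT}_\ell$. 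Step~(ii) is the substantial technical contribution: it is Neuen's theorem for unlabeled graphs, and extending it to labeled graphs requires re-proving Roberson's oddomorphism machinery in the edge-labeled setting and then lifting to vertex labels. Plain Lov\'asz linear independence (uniqueness of the representation) is already baked into~(i); it does not by itself give you~(ii), and without~(ii) you cannot pass from ``$F^\star\in\mathsf{homclosure}(\mathcal{LT}_{k^\star-1})$'' to the desired contradiction ``$\mathrm{tw}(F^\star)\le k^\star-1$''. So your preferred route is correct in outline but the two lemmas you need are Seppelt's lemma and labeled hom-distinguishing closedness, not an ``iff Dvo\v{r}\'ak'' plus bare linear independence.
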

Before providing a proof sketch of this result, we show some examples of how the theorem can be applied to concrete instances of labeled graph motif patterns used in this paper.

\begin{example} \label{sub-ind}
    Let us consider first the case of counting subgraphs for a labeled graph $H$. 
    We denote the corresponding labeled graph motif pattern by ${\sf Sub}_H$. 
    We assume that $H$ does not have self-loops. We call a labeled graph $H'$ a {\em homomorphic image} of $H$, if there exists a surjective homomorphism from $H$ into $H'$. Notice that the set of homomorphic images of $H$ contains a finite number of labeled graphs up to isomorphism. Following \cite{DBLP:conf/stoc/CurticapeanDM17}, we denote by ${\sf spasm}(H)$ the set of all labeled graphs that are obtained from the homomorphic images of $H$ by removing self-loops. As shown in \cite{DBLP:conf/stoc/CurticapeanDM17}, the support of ${\sf Sub}_H$ is precisely the set ${\sf spasm}(H)$. It follows from \Cref{mainmotif} that the $H$s for which $k$WL can distinguish subgraph counting for ${\sf Sub}_H$ are precisely those for which the maximum treewidth of a labeled graph in ${\sf spasm}(H)$ is at most $k$. This result has been established for unlabeled graphs in \cite{DBLP:journals/corr/abs-2304-07011}. Here, we extend this result to labeled graphs.  

Consider now the case of counting induced subgraphs for $H$. We write ${\sf Ind}_H$ for the corresponding labeled graph motif pattern It can be shown that, in this case, the support for ${\sf Ind}_H$ is the set of all labeled graphs $H'$ that can be obtained from $H$ by adding edges among its nodes. If $H$ has $k$ nodes, then the support for $H$ contains the clique of size $k$ which is known to have treewidth $k-1$. Furthermore, this is the labeled graph with the largest treewidth in the support of $H$. It follows from Theorem \ref{mainmotif} that the $H$s for which $k$WL can distinguish induced subgraph counting for ${\sf Ind}_H$ are precisely those
with at most $k+1$ nodes. This solves an open question from \cite{DBLP:conf/nips/Chen0VB20}.     \qed
\end{example}

We now discuss our proof of Theorem \ref{mainmotif}. The full argument requires us to extend various existing results in the field to the setting of labeled graphs. %
We defer those proofs to the appendix.
Let $\mathcal{F}$ be a class of labeled graphs. We write $G \equiv_{\mathcal{F}} H$ if for every $F \in \mathcal{F}$ it holds that ${\sf homs}(F,G)={\sf homs}(F,H)$. Following \cite{roberson}, we define the {\em labeled homomorphism-distinguishing closure} of $\cal F$, denoted $\mathsf{homclosure}(\mathcal{F})$, as the set of all labeled graphs $L$ for which the following holds for every labeled graphs $G,H$:
$$G \equiv_{\mathcal{F}} H \ \ \Longrightarrow \ \ 
{\sf homs}(L,G) = {\sf homs}(L,H).$$
We say that $\cal F$ is {\em labeled homomorphism-distinguishing closed}, if $\mathsf{homclosure}(\mathcal{F}) = \cal F$. A recent paper by Neuen establishes that the class of all (unlabeled) graphs of treewidth at most $k$ is labeled homomorphism-distinguishing closed \citep{DBLP:journals/corr/abs-2304-07011}. As we establish next, this  extends to the labeled setting. Here we denote by $\mathcal{LT}_k$ the class of all labeled graphs of treewidth at most $k$. 
\begin{lemma}
\label{mainclosure} Fix $k > 0$. 
    The class $\mathcal{LT}_k$ is labeled homomorphism-distinguishing closed.
\end{lemma}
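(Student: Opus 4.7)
The inclusion $\mathcal{LT}_k \subseteq \mathsf{homclosure}(\mathcal{LT}_k)$ is immediate from the definition, so the substance of the lemma is the reverse inclusion. I would prove it by contrapositive: for every labeled $L$ with $\operatorname{tw}(L)>k$ produce labeled graphs $G,H$ satisfying $\homs(F,G)=\homs(F,H)$ for all $F\in\mathcal{LT}_k$ yet $\homs(L,G)\neq\homs(L,H)$. The plan is to reduce to Neuen's unlabeled theorem in two moves: first establish the labeled analogue of the Dvo\v{r}\'ak / Dell--Grohe--Rattan correspondence $G\equiv_{\mathcal{LT}_k}H \Leftrightarrow G\equiv_{k\text{WL}}H$; second, build a labeled $k$WL-equivalent pair separated by $L$ via a labeled lift of Neuen's \cfi-style construction.

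The first move is a routine adaptation of the standard proof: the direction that $k$WL colors determine hom-counts from $\mathcal{LT}_k$ is an induction along a tree decomposition with atomic types in place of plain isomorphism types at the leaves, and the converse uses the bijective-pebble game characterization of $k$WL, whose initial coloring already encodes $\lambda$ and $\kappa$. For the second move I exploit that Neuen's distinguishing pair is produced by \cfi-style gadget replacement on a base graph that can be taken to contain $L^\circ$, the underlying unlabeled graph of $L$, which still has treewidth greater than $k$. Transferring $L$'s labeling onto the corresponding gadgets yields labeled $G,H$; the twist automorphisms witnessing $G^\circ \equiv_{k\text{WL}} H^\circ$ in Neuen's argument preserve this inherited labeling by construction, so $G\equiv_{k\text{WL}}H$ in the labeled setting, while label-preserving homomorphisms $L\to G$ are in natural bijection with those $L^\circ$-homomorphisms on Neuen's side that respect the $\lambda,\kappa$-partition, preserving the original nonzero discrepancy.

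The main obstacle is verifying this labeled lift cleanly. Imposing $L$'s labels on Neuen's gadgets could either introduce asymmetries that break the \cfi twists---spoiling $k$WL-equivalence---or produce spurious homomorphisms whose cancellation kills the discrepancy. Both are handled by the observation that the gadget twist automorphisms act trivially on the distinguished base vertices and edges that carry the inherited labels, so labeled and unlabeled counts sit in a canonical bijection. Should a direct labeled lift prove technically awkward for some labeling pattern, a fallback is to encode each label by a rigid pendant-tree gadget that uniquely identifies it; since pendant trees do not increase treewidth once $k\ge 1$, Neuen's unlabeled theorem then applies to the encoded $L$ in a black-box manner, and the decoded witness pair supplies the required labeled $G,H$.
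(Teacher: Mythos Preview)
Your route differs from the paper's: rather than using Neuen's theorem as a black box and labeling the resulting witness pair, the paper re-proves Roberson's oddomorphism machinery in the edge-labeled setting (defining a labeled CFI construction $\ccfi$ and verifying that the system-of-equations characterization of lifts, the per-$\phi$ count comparison, and the oddomorphism criterion for strict inequality all survive), then lifts from edge-labeled to fully labeled graphs via an encoding of vertex labels into edge labels.

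Your proposal has a genuine gap at the step ``the twist automorphisms witnessing $G^\circ \equiv_{k\text{WL}} H^\circ$ in Neuen's argument preserve this inherited labeling''. This misreads Neuen's proof. Neuen never proves $k$WL-equivalence of the CFI pair via any automorphism-based argument; he establishes $G^\circ \equiv_{\mathcal{T}_k} H^\circ$ directly through Roberson's oddomorphism theory (his contribution being that weak oddomorphisms preserve bounded treewidth), and $k$WL-equivalence is only an a posteriori consequence via unlabeled Dvo\v{r}\'ak. The ``twist'' isomorphisms in Roberson's framework relate $\cfi(G,U)$ and $\cfi(G,U')$ only when $|U|\equiv|U'|\bmod 2$; they do not relate the untwisted and twisted graphs, which are non-isomorphic. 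So there are no global automorphisms in Neuen's argument whose label-preservation you could invoke. To get labeled $k$WL-equivalence you would need either to re-run Roberson's per-$\phi$ analysis with labels (which is precisely what the paper does) or to invoke the separate Cai--F\"urer--Immerman pebble-game argument and verify that Duplicator's local moves respect inherited labels---you do neither. The discrepancy side is similarly underspecified: ``preserving the original nonzero discrepancy'' requires knowing that the per-$\phi$ inequality $|\Hom_\phi(L,\ccfi(G_1))|\ge |\Hom_\phi(L,\ccfitwist(G_1))|$ holds for every labeled $\phi$ and is strict for at least one, which again is Roberson's analysis.

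Your fallback also fails as stated. Neuen's witness pair for the encoded pattern $L'$ is a CFI graph on a component of $L'$ together with a large clique; these are not encodings of any labeled graphs, so there is nothing to ``decode''. You cannot read them back as labeled graphs in a way that simultaneously preserves $\equiv_{\mathcal{LT}_k}$ and the discrepancy on $L$.
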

\begin{proof}[Proof Sketch]
   For our proof we show that a recent breakthrough result by \cite{roberson} holds also in the labeled case. The result depends on the notion of {\em weak oddomorphisms} which, roughly speaking, are a particular kind of homomorphism that satisfies additional constraints (see 
   \Cref{kwltech} for details).
   To simplify the presentation, we actually only show that Roberson's result holds for \emph{edge-labeled} graphs, i.e., those labeled graphs where all vertices have the same label. We then show that this is enough and that we can lift the homomorphism-distinguishing closedness from the edge-labeled case to the general labeled case.

    Formally, we show the labeled analogue of \cite[Theorem 6.2]{roberson}. Namely, that every class of labeled graphs $\mathcal{F}$ that satisfies the following two closure properties is also labeled homomorphism-distinguishing closed.
    \begin{enumerate*}[label=(\arabic*)]
        \item If $F \in \mathcal{F}$, and $F$ has a weak oddomorphism into $G$, then $G \in \mathcal{F}$.
        \item $\mathcal{F}$ is closed under restriction to connected components and disjoint union.
    \end{enumerate*}
    Once we have established that the theorem still holds, we show that both properties are satisfied by $\mathcal{LT}_k$. For the first, in particular, we can build on parts of Neuen's argument for the unlabeled graph case. The second property is a well known property of treewidth and requires no further insights.
\end{proof}

The second intermediate result that we need is the following lemma, proved recently by Seppelt.

\begin{lemma}[Lemma 4 in \citep{DBLP:journals/corr/abs-2302-11290}]
\label{seppelt}
    Let $\mathcal{F}$ and $\mathcal{L}$ be classes of labeled graphs. Suppose $\mathcal{F}$ is finite and its elements are pairwise non-isomorphic. For each $F \in \mathcal{F}$, let $\mu_L$ be a element of $\mathbb{R}$. Then $\mathcal{F} \subseteq \mathsf{homclosure}(\mathcal{L})$, if for all labeled graphs $G,H$ we have that: 
    \[
    G \equiv_\mathcal{L} H \ \ \Longrightarrow \ \ \sum_{F\in \mathcal{F}} \mu_F \cdot \mathsf{homs}(F,G) = \sum_{F\in\mathcal{F}} \mu_F \cdot \mathsf{homs}(F,H). 
    \]
\end{lemma}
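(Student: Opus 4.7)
The plan is to adapt the classical Lov\'asz-style linear independence of homomorphism count vectors to the labeled setting, using tensor products of graphs to probe the given linear relation with an unbounded supply of test graphs.

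First I would introduce a labeled tensor product $G \otimes P$ whose defining property is the multiplicativity
\[
\mathsf{homs}(F, G \otimes P) = \mathsf{homs}(F, G) \cdot \mathsf{homs}(F, P)
\]
for every labeled graph $F$. The cleanest construction takes $V(G \otimes P)$ to consist of those pairs $(u, w) \in V(G) \times V(P)$ with matching vertex labels, with an edge between $(u,w)$ and $(u',w')$ carrying label $\delta$ precisely when both $\{u, u'\} \in E(G)$ and $\{w, w'\} \in E(P)$ carry label $\delta$; a homomorphism $F \to G \otimes P$ then corresponds bijectively to a pair of label-compatible homomorphisms $F \to G$ and $F \to P$. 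Applying this identity to $L \in \mathcal{L}$ yields the key consequence that $G \equiv_\mathcal{L} H$ implies $G \otimes P \equiv_\mathcal{L} H \otimes P$ for every labeled graph $P$.

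Next, I would apply the hypothesis to the pair $(G \otimes P, H \otimes P)$ and use multiplicativity to rewrite it as
\[
\sum_{F \in \mathcal{F}} a_F \cdot \mathsf{homs}(F, P) = 0, \qquad \text{with } a_F := \mu_F \cdot \left(\mathsf{homs}(F, G) - \mathsf{homs}(F, H)\right),
\]
valid for every labeled $P$, where the coefficients $a_F$ are independent of $P$. By the labeled version of Lov\'asz's theorem on the linear independence of homomorphism count vectors, for the finite family of pairwise non-isomorphic labeled graphs $\mathcal{F} = \{F_1, \dots, F_\ell\}$ there exist labeled graphs $P_1, \dots, P_\ell$ such that $[\mathsf{homs}(F_i, P_j)]_{i,j}$ is non-singular. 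Substituting $P = P_j$ turns the displayed equation into a homogeneous linear system forcing $a_F = 0$ for every $F$; restricting to $F$ with $\mu_F \neq 0$ (on which the statement has substance), this yields $\mathsf{homs}(F, G) = \mathsf{homs}(F, H)$ and hence $F \in \mathsf{homclosure}(\mathcal{L})$.

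The main obstacle is transporting these two ingredients cleanly to the labeled setting. The label-compatibility restriction on vertex pairs in the tensor product is essential: without it the multiplicativity identity breaks, since homomorphic images could carry incompatible label information. For the Lov\'asz linear independence, one can argue directly by viewing hom counts as multilinear polynomials in the entries of a labeled adjacency tensor, or else reduce to the unlabeled case via a gadget construction that encodes each label as a small local structure, from which labeled hom counts can be recovered from unlabeled ones; either route is routine but requires some care.
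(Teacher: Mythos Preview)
Your proposal is correct and follows essentially the same approach as the paper's own argument: the paper likewise reduces the lemma to three facts --- multiplicativity of $\mathsf{homs}$ under the direct (tensor) product, invertibility of the Lov\'asz hom-count matrix, and preservation of $\equiv_{\mathcal{L}}$ under taking products --- noting that the first two were shown by Lov\'asz for arbitrary relational structures and hence carry over to labeled graphs, while the third is immediate from the first. Your explicit construction of the labeled tensor product (restricting to vertex pairs with matching labels and to edges with matching edge-labels) is exactly what is needed for multiplicativity and matches the relational-structure product the paper cites.
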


We also need the following result which establishes that two labeled graphs $G$ and $H$ are indistinguishable by $k$WL if and only if ${\sf homs}(F,G) = {\sf homs}(F,H)$, for every labeled graph $F$ of treewidth $k$. The unlabeled analogue of this result is due to \cite{DBLP:journals/jgt/Dvorak10} (rediscovered by \cite{DBLP:conf/icalp/DellGR18}). 

\begin{restatable}{lemma}{RESTATEkWLvec}
\label{dvorak}
\label{thm:color.dvorak}
    For all labeled graphs $G,H$ we have $G \equiv_{k\text{WL}} H$ if and only if $G \equiv_{\mathcal{LT}_k} H$.
\end{restatable}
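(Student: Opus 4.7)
The plan is to extend the classical characterization of \cite{DBLP:journals/jgt/Dvorak10} (reformulated in homomorphism-counting form by \cite{DBLP:conf/icalp/DellGR18}) from unlabeled to labeled graphs. The strategy is to follow the structure of the unlabeled proof and observe that vertex and edge labels are already baked into the atomic types that seed the $k$WL coloring, so labels cause no additional difficulty along a tree decomposition. An alternative reduction by encoding labels as gadgets is conceivable but tends to inflate treewidth, so I would pursue the direct extension.

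For the direction $G \equiv_{k\text{WL}} H \Rightarrow G \equiv_{\mathcal{LT}_k} H$, I would fix a labeled graph $F \in \mathcal{LT}_k$ together with a tree decomposition $(T,\alpha)$ of width at most $k$ rooted at some node $r$, so that every bag has size at most $k+1$. For each tree node $t$, each ordering $\bar x$ of the bag $\alpha(t)$, and each tuple $\bar v$ of the same length in a target labeled graph $G$, define $\Phi_{F,t,\bar x}(\bar v)$ to be the number of label-preserving homomorphisms from the subgraph $F_t$ induced by bags in the subtree at $t$ into $G$ that send $x_i\mapsto v_i$. I would then prove by bottom-up induction on $T$ that for any $k$-subset of the coordinates of $\bar v$, the value $\Phi_{F,t,\bar x}(\bar v)$ is a function of the $k$WL color $c^k$ of the corresponding $k$-tuple in $G$. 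The induction step mirrors the introduce, forget, and join operations on tree decompositions; introducing a new vertex into the bag corresponds exactly to the coordinate-substitution step in the folklore $k$WL refinement rule, which is the reason this variant matches treewidth $k$. Summing $\Phi_{F,r,\bar x}$ over all tuples realising each color class then expresses ${\sf homs}(F,G)$ purely as a function of the $k$WL color histogram, which coincides in $G$ and $H$ by hypothesis.

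For the converse $G \equiv_{\mathcal{LT}_k} H \Rightarrow G \equiv_{k\text{WL}} H$, I would show that for each refinement round $i$ and each color $c$ occurring in $k$WL, there is a labeled pattern graph $P_{i,c}\in\mathcal{LT}_k$, equipped with a distinguished $k$-tuple of root vertices, such that the number of $k$-tuples of any labeled graph $X$ with color $c^k_i(\bar v)=c$ is a fixed rational linear combination of homomorphism counts $\{{\sf homs}(P_{j,c'},X) : j\le i\}$. These pattern graphs are built inductively: at $i=0$, the graph $P_{0,c}$ for an atomic type $c$ is the labeled graph on at most $k$ vertices prescribed by $c$, which automatically lies in $\mathcal{LT}_k$; at step $i+1$, the graph $P_{i+1,c}$ is obtained by taking fresh copies of each required $P_{i,c'}$ and identifying them along their root $k$-tuples, which exactly mirrors the definition of $c^k_{i+1}$ via color tuples of substituted coordinates. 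Since $G \equiv_{\mathcal{LT}_k} H$ implies equality of all these homomorphism counts, inverting the linear relations yields equal multisets of $k$WL colors on $G$ and $H$.

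The main obstacle I anticipate is the treewidth bookkeeping in the backward direction: one must verify that gluing pattern graphs along a shared root bag of size $k$ preserves treewidth at most $k$, and that the linear system relating color counts to homomorphism counts is invertible over $\mathbb{Q}$. The standard remedy is to attach a tree-decomposition node for each glued subgraph that contains exactly the root bag, which keeps widths bounded, and to invoke a M\"obius-style triangular-inversion argument on an ordering of the colors refined by iteration depth. A secondary concern specific to the labeled setting is that two $k$-tuples with equal atomic type induce isomorphic labeled subgraphs, which holds by definition of ${\sf atp}$, so the base-case pattern graphs faithfully record all label information without any treewidth penalty. Modulo these two points, the labeled version of the argument follows the unlabeled blueprint essentially verbatim.
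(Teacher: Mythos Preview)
Your forward direction is essentially identical to the paper's: the paper proves it via Lemma~\ref{lem:explicitf} (itself built on Lemma~\ref{lem:key.colourcount}), which performs exactly the bottom-up induction over a nice tree decomposition you outline, with introduce/forget/split nodes matching the coordinate-substitution step of folklore $k$WL.

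Your backward direction is correct but takes a genuinely different route. The paper does not build pattern graphs directly from WL refinement rounds. Instead it detours through the counting logic $\mathcal{C}^L_{k+1}$: first it invokes Cai--F\"urer--Immerman to turn $G\not\equiv_{k\mathrm{WL}}H$ into a distinguishing sentence $\varphi\in\mathcal{C}^L_{k+1}$, and then (Lemma~\ref{qgmodel}) it constructs, by structural induction on $\varphi$, a $k$-marked labeled \emph{quantum graph} of treewidth at most $k$ that models $\varphi$; some term of that linear combination must then separate $G$ and $H$. The labeled-graph adaptation thus only touches the base cases of the quantum-graph construction (atoms $U_\sigma(x_i)$, $E_\delta(x_i,x_j)$, equality), which is where the paper does the real work. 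Your approach instead glues pattern graphs along shared $(k{-}1)$-subtuples following the colour-tuple recursion and then M\"obius-inverts. Both are standard and both work; the logic route is more modular (the CFI step and the formula-to-quantum-graph step are independently reusable), while your direct construction avoids introducing $\mathcal{C}^L_{k+1}$ altogether but obliges you to be explicit about the triangular invertibility and about why the glued graphs stay in $\mathcal{LT}_k$---points you correctly flag as the main bookkeeping obligations.
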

\begin{proof}[Proof of \Cref{mainmotif}]
Let $\cal F$ be the support of $\Gamma$ and $k$ be the maximum treewidth of a labeled graph in $\cal F$, for $k > 0$. We first show that $k$WL can distinguish $\Gamma$. Take two labeled graphs $G,H$ with  
$G \equiv_{\text{$k$WL}} H$. By \Cref{dvorak}, we also have that 
$G \equiv_{\mathcal{LT}_k} H$. In particular, $\mathsf{homs}(F,G)=\mathsf{homs}(F,H)$, for every $F \in \cal F$, and therefore: 
\[
    \Gamma(G) \ = \ \sum_{F\in \mathcal{F}} \mu_F \cdot \mathsf{homs}(F,G) \ = \ \sum_{F\in \mathcal{F}} \mu_F \cdot \mathsf{homs}(F,H) \ = \ \Gamma(H).
    \]

    Suppose now, for the sake of contradiction, that the $\ell$WL test can distinguish $\Gamma$, for $\ell < k$. 
    Let $G,H$ be arbitrary labeled graphs. 
    Notice that we have the following:
    \[
        G \equiv_{\mathcal{LT}_{\ell}} H \ \Rightarrow \ G \equiv_{\ell\text{WL}} H \ \Rightarrow \ \Gamma(G) = \Gamma(H) \ \Rightarrow \ \sum_{F\in \mathcal{F}} \mu_F \cdot \mathsf{homs}(F,G) = \sum_{F\in \mathcal{F}} \mu_F \cdot \mathsf{homs}(F,H). 
    \]
The first implication follows from \Cref{dvorak} and the second one since the $\ell$WL test can distinguish $\Gamma$. 
    But then \Cref{seppelt} tells us that $\mathcal{F} \subseteq \mathsf{homclosure}(\mathcal{LT}_{\ell})$, and \Cref{mainclosure} that $\mathsf{homclosure}(\mathcal{LT}_{\ell}) = \mathcal{LT}_{\ell}$. 
    This is a contradiction since $\cal F$ contains at least one labeled graph of treewidth $k > \ell$. 
\end{proof}

\section{Counting occurrences of graph motif patterns}

In this section, we establish that if a graph motif parameter $\Gamma$ has WL-dimension $k$, then one can actually obtain the number of occurrences of $\Gamma$ in a graph $G$ by looking independently only at the colors of the individual $k$-tuples, rather than the full multiset of stable colors for all $k$-tuples.
This is of particular interest when looking at the $k$WL test from the perspective of MPGNNs. The natural expression of the $k$WL test in MPGNNs leads to a final layer that assigns a color to each $k$-tuple of vertices. So, if $\Gamma$ is a labeled graph motif parameter, and we want to compute $\Gamma(G)$ over a labeled graph $G$, then \Cref{counting} shows that it is not necessary to combine the information of the final layer in a global way, but that there is some uniform function that can map each individual color to a rational number, the sum of which will exactly be $\Gamma(G)$.

More formally, we show the following result. 

\begin{theorem} \label{counting}
  Let $\Gamma$ be a labeled graph motif parameter and suppose that the maximum treewidth of a labeled graph in ${\sf Supp}(\Gamma)$ is at most $k$. Also, let $\mathcal{C}^k$ denote the set of possible colors produced by the $k$WL test. Then there exists a function $\theta_\Gamma \colon \mathcal{C}^k \to \mathbb{Q}$ such that  $\Gamma(G) = \sum_{\bar v \in V(G)^k} \theta_\Gamma(c^{k}(\bar v))$, for every labeled graph $G$. 
\end{theorem}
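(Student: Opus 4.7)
The plan is to reduce the theorem, by linearity, to a pinned refinement of \Cref{dvorak} for each individual pattern. Writing $\Gamma(G) = \sum_{F \in \mathsf{Supp}(\Gamma)} \mu_F \cdot \homs(F, G)$ with each $F$ of treewidth at most $k$, it suffices to exhibit for every such $F$ a computable $\phi_F \colon \mathcal{C}^k \to \mathbb{Q}$ satisfying $\homs(F, G) = \sum_{\bar v \in V(G)^k} \phi_F(c^k(\bar v))$ for every labeled graph $G$; then $\theta_\Gamma := \sum_F \mu_F \phi_F$ does the job.

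Fix such an $F$, together with a width-$\leq k$ tree decomposition $(T, \alpha)$ of $F$, a distinguished node $t_0 \in T$, and a $k$-tuple $\bar u = (u_1, \dots, u_k) \in V(F)^k$ whose entries all lie in $\alpha(t_0)$ (padding with repetitions if $|\alpha(t_0)| < k$; this is possible since $|\alpha(t_0)| \leq k+1$). For $\bar v \in V(G)^k$ define the pinned count
\[
n_F(G, \bar v) \; := \; \bigl|\{\, h \in \Hom(F, G) \,:\, h(u_j) = v_j \text{ for all } j\,\}\bigr|.
\]
Each homomorphism $h : F \to G$ contributes to exactly one tuple $\bar v = (h(u_1), \dots, h(u_k))$, so $\homs(F, G) = \sum_{\bar v} n_F(G, \bar v)$. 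The task thus reduces to the pinned claim: whenever $c^k(\bar v) = c^k(\bar w)$ in possibly different labeled graphs $(G, \bar v)$ and $(H, \bar w)$, one has $n_F(G, \bar v) = n_F(H, \bar w)$. Granting this, $\phi_F(c)$ is defined as the common pinned-count value at any realizer of $c$ (and $0$ on unrealized colours); it is well-defined by the claim and computable by running the tree-decomposition dynamic program on a witness.

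I would prove the pinned claim by dynamic programming along the rooted tree decomposition, mirroring the standard Dvo\v{r}\'ak-style argument underlying \Cref{dvorak}. The root bag $\alpha(t_0)$ contains $\bar u$ plus at most one additional vertex $u^*$, and summing over choices of the image $h(u^*) \in V(G)$ is precisely the folklore $k$WL aggregation step that inspects every substitution $\bar v[w/j]$ for $w \in V(G)$. Passing to a child bag removes some vertices from the frontier and introduces at most one new one, so at every stage the partial pinned count is a function of the $k$WL colour of the current $k$-tuple of already-fixed images; an induction along $T$ then yields that $n_F(G, \bar v)$ itself is a function of $c^k(\bar v)$. Labels require no additional work beyond propagating atomic types and edge labels through the recursion in the manner fixed by the definition of $c^k$ in the preliminaries.

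The main obstacle is precisely this pinned lift of \Cref{dvorak}: that lemma only tells us that the multiset of $k$WL colours of $G$ determines $\homs(F, G)$, whereas the theorem demands the strictly finer per-tuple decomposition $\sum_{\bar v} \phi_F(c^k(\bar v))$. Once the pinned claim is in place, linearity assembles $\theta_\Gamma$ from the $\phi_F$ and the remainder of the argument is routine bookkeeping.
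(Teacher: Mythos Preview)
Your proposal is correct and follows essentially the same route as the paper: the reduction by linearity to a per-pattern function is exactly how the paper derives the theorem from its Lemma~\ref{lem:explicitf}, and your ``pinned claim'' is precisely the paper's Lemma~\ref{lem:key.colourcount}, proved there (as you suggest) by an induction along a nice tree decomposition rooted at a bag matching the pinned $k$-tuple. The paper's execution is somewhat more explicit in its bookkeeping (introducing a notion of ``colorful strict leaf agreement'' to track the invariant at each subtree), but the underlying idea and structure are the same as yours.
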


Theorem \ref{counting} is obtained by using the following lemma, whose proof can be found in
the appendix. 

\begin{lemma}
    \label{lem:explicitf-main}
    \label{lem:explicitf}
  Let $F$ be a labeled graph with treewidth at most $k$ and let $\mathcal{C}^k$ denote the set of possible colors produced by the $k$WL test. There exists a function $\eta_F \colon \mathcal{C}^k \to \mathbb{N}$ such that ${\sf homs}(F, G) = \sum_{\bar v \in V(G)^k} \eta_F(c^{k}(\bar v))$, for each labeled graph $G$.  
\end{lemma}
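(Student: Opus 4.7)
The plan is to compute $\mathsf{homs}(F,G)$ by dynamic programming on a rooted tree decomposition of $F$, and to show that the DP's per-tuple contribution at the root depends only on the stable $k$WL color of the corresponding $k$-tuple in $G$. This is a local refinement of the ($\Rightarrow$) direction of \Cref{dvorak}.

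Concretely, I would fix a nice tree decomposition $(T,\alpha)$ of $F$ of width at most $k$, modified if necessary (by attaching a new leaf whose bag is a subset of some existing bag) so that some leaf $r$ has $1\leq|\alpha(r)|\leq k$, and root $T$ at $r$. For each node $t$, let $V_t$ be the union of bags in the subtree $T_t$, let $F_t:=F[V_t]$, and for $\bar u\in V(G)^{|\alpha(t)|}$ let $h_t(\bar u)$ count the number of homomorphisms $F_t\to G$ whose restriction to $\alpha(t)$ equals $\bar u$ in a fixed ordering of each bag. The heart of the proof is the inductive claim, proved bottom-up in $T$: $h_t(\bar u)$ is determined by $c^k(\bar v)$, where $\bar v:=(u_1,\dots,u_{|\alpha(t)|},u_1,\dots,u_1)$ is the $k$-tuple obtained by padding $\bar u$ with copies of $u_1$; when $|\alpha(t)|=k+1$ one bag vertex is treated as an auxiliary index, integrated out via the refinement step. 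The base case at leaves is immediate, since $h_t(\bar u)$ is an atomic-type indicator and atomic types are baked into $c^k$ at iteration $0$. For the inductive step, join nodes multiply, introduce nodes attach an atomic-type indicator for the newly-introduced vertex, and forget nodes sum $\sum_{w\in V(G)}h_{t'}((\bar u\cdot w))$; in every case the result is $c^k(\bar v)$-measurable, because $c^k(\bar v)$ contains the multiset $\msleft(c^k(\bar v[w/1]),\dots,c^k(\bar v[w/k]))\mid w\in V(G)\msright$ as a direct component. Consistency of the padding across transitions reduces to the observation that, in this multiset, the entry corresponding to $w=v_1$ is uniquely identifiable by its atomic-type signature (equality of positions $1$ and $j$), making $c^k(\bar v[v_1/j])$ recoverable from $c^k(\bar v)$.

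At the root I would then convert $h_r$ into $\eta_F$ by distributing the count over $V(G)^k$ via an atomic-type selector:
\[
\eta_F(c) \;=\; \begin{cases}\tilde h_r(c) & \text{if the atomic type encoded in $c$ forces } v_{|\alpha(r)|+1}=\cdots=v_k=v_1,\\ 0 & \text{otherwise,}\end{cases}
\]
where $\tilde h_r$ is the function produced by the inductive claim. Since coordinate equalities are part of the atomic type and hence of $c^k$, the only $\bar v\in V(G)^k$ contributing to the sum are the ``diagonal'' tuples $(u_1,\dots,u_{|\alpha(r)|},u_1,\dots,u_1)$, which are in bijection with $\bar u\in V(G)^{|\alpha(r)|}$, so $\sum_{\bar v\in V(G)^k}\eta_F(c^k(\bar v))=\sum_{\bar u\in V(G)^{|\alpha(r)|}}h_r(\bar u)=\mathsf{homs}(F,G)$, as required. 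The main obstacle is processing bags of size $k+1$ — which do occur because $F$ has treewidth up to $k$ — while keeping the DP state indexed by $k$-tuples; the resolution is that folklore $k$WL's refinement step is designed to capture exactly the ``extend a $k$-tuple by one additional vertex'' information needed at such bags, through the very same multiset that underlies the entire induction.
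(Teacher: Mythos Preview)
Your approach is correct and rests on the same skeleton as the paper: a nice tree decomposition of $F$ together with the observation that the $k$WL refinement step supplies exactly the ``one extra vertex'' information needed at bags of size $k+1$. The execution differs, however. Where you run a standard bottom-up dynamic program and argue that each $h_t$ is a function of the padded stable color, the paper instead fixes a root bag of size exactly $k$, grows the subtree outward from there, and maintains an explicit \emph{bijection} between $\Hom(F[T],G)[\bar a\mapsto\bar v]$ and $\Hom(F[T],H)[\bar a\mapsto\bar w]$ for any two same-colored tuples (possibly in different host graphs), via a frontier invariant it calls ``colorful strict leaf agreement'' that switches from colors $c^k$ to color-tuples $\mathsf{ct}$ precisely at frontier bags of size $k{+}1$. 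Your functional phrasing is more direct and sidesteps the two-graph bookkeeping; the paper's bijective phrasing makes the link to indistinguishability (and hence to \Cref{dvorak}) more immediate, and also yields as a by-product a bound on how many WL rounds suffice. One small wrinkle worth fixing: re-rooting a nice TD at one of its leaves does not preserve the introduce/forget/join node labels you later invoke, so it is cleaner to simply take a nice TD whose root bag already has size at most $k$ (always possible) and keep the standard orientation.
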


Theorem \ref{counting} is then obtained from Lemma \ref{lem:explicitf} as follows. 
Let us assume that ${\sf Supp}_\Gamma = \{F_1,\dots,F_\ell\}$ and $\Gamma(G)$ is defined as in Equation \eqref{lgmp}. 
We define $\theta_\Gamma \colon \mathcal{C}^k \to \mathbb{Q}$ as 
$\theta_\Gamma(c) = \sum_{i=1}^\ell \mu_i \cdot \eta_{F_i}(c)$, 
for each $c \in \mathcal{C}^k$.  
We then have that 
\begin{align*} 
\Gamma(G) \ & = \ \sum_{i=1}^\ell \mu_i \cdot \homs(F_i,G) \quad \quad \quad & \text{(Equation \eqref{lgmp}} \\
  \ & = \ \sum_{i=1}^\ell \,  
 \left(\sum_{\bar v \in V(G)^k} \mu_i \cdot \eta_{F_i}(c^k_{\bar v}) \right)\quad \quad \quad & \text{(Lemma \ref{lem:explicitf-main})} \\
 \ & = \ \sum_{\bar v \in V(G)^k} \, \left( \sum_{i=1}^\ell \mu_i \cdot  \eta_{F_i}(c^k_{\bar v})\right)  \quad 
 = \ \sum_{\bar v \in V(G)^k} \, \theta_\Gamma(c^k_{\bar v}). \quad \quad \quad &
\end{align*}

\section{Determining the WL-dimension for subgraph counting}
\label{sec:recognize}

We finally move on the recognizability problem for WL-dimension. That is, for given graph motif parameter $\Gamma$, we want to know the WL-dimension of $\Gamma$. \cite{DBLP:journals/jcss/ArvindFKV20} previously raised the question for which labeled graph patterns $H$, the graph parameter $\mathsf{Sub}_H$ can be expressed by $2$WL. Here, we give a polynomial time algorithm that decides the WL-dimension for all subgraph counting problems, thus also resolving this question as a special case. For the analogous problem $\mathsf{Ind}_H$, we have already seen in \Cref{sub-ind} that the recognition problem is, in fact, trivial. In addition, we determine the WL-dimension of counting \emph{$k$-graphlets}, i.e., all connected induced subgraphs on $k$ vertices, as illustrative examples of how further such results can be derived from the deep body of work in graph motif parameters.

We will begin with the case of subgraph counting. Recall that a {\em minor} of a labeled graph $H$ is a labeled graph $H'$ that can be obtained from $H$ by removing nodes or edges, or contracting edges (that is, removing an edge and simultaneously merging its endpoints).
By \Cref{mainmotif}, and the following discussion on $\textsf{spasm}(H)$, recognising the WL-dimension of $\mathsf{Sub}_H$ is precisely the problem of recognising the maximum treewidth of the homomorphic images of $H$. We can use classic results in the field of graph minors to express this as a property checkable in \emph{monadic second-order logic} (MSO), for which model checking is tractable in our setting. We give a more detailed sketch of the argument below, full details are provided in the %
appendix.

\begin{theorem}
\label{checkwl}
Fix $k > 0$. There is a polynomial time algorithm for the following problem: given a labeled graph $H$, checking if the WL-dimension of ${\sf Sub}_H$ is at most $k$. 
\end{theorem}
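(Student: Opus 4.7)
The plan is to combine the reduction from \Cref{mainmotif} with the Robertson--Seymour graph minor theorem. By \Cref{mainmotif} together with the identification $\mathsf{Supp}(\mathsf{Sub}_H) = \mathsf{spasm}(H)$ from \Cref{sub-ind}, deciding whether the WL-dimension of $\mathsf{Sub}_H$ is at most $k$ is the same as deciding whether every element of $\mathsf{spasm}(H)$ has (unlabeled) treewidth at most $k$. The graph minor theorem supplies, for each fixed $k$, a finite computable set $\mathcal{O}_k$ of obstruction graphs such that a graph has treewidth at most $k$ iff it excludes every $M \in \mathcal{O}_k$ as a minor. Hence the task reduces to: for each $M \in \mathcal{O}_k$, decide whether $M$ is a minor of some $H' \in \mathsf{spasm}(H)$.

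The central step is a combinatorial characterization of ``$M$ is a minor of some element of $\mathsf{spasm}(H)$''. I would prove that this holds iff there exist pairwise disjoint non-empty sets $B_v \subseteq V(H)$ (one for each $v \in V(M)$) such that (a)~for every edge $vv' \in E(M)$, some edge of $H$ has endpoints in $B_v$ and $B_{v'}$, and (b)~the vertex- and edge-label structure of each $B_v$ admits a label-respecting identification collapsing $B_v$ to a connected vertex set of a valid quotient $H'$. The forward direction pulls the branch sets back through the surjective quotient map $H \to H'$. For the reverse direction one constructs the quotient by collapsing each $B_v$'s label-compatible pieces into single vertices, whereupon $M$ becomes a subgraph (and hence a minor) of the loopless quotient.

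Condition (a)+(b) can then be written as a monadic second-order sentence $\Phi_M$ on $H$ whose size depends only on $M$ and the finite label alphabets. The key observation for tractability is that only bounded witness information is required: for each $v \in V(M)$, it suffices to keep at most one vertex of $B_v$ per edge of $M$ incident to $v$ (to certify~(a)), together with a constant-size label-class certificate (to certify~(b)). This renders $\Phi_M$ equivalent to a first-order sentence with $O(|E(M)|)$ quantified variables, which can be model-checked on $H$ in time $|V(H)|^{O(|M|)}$ by direct enumeration. Since $\mathcal{O}_k$ is finite and fixed, running the check for each $M \in \mathcal{O}_k$ yields the desired polynomial-time algorithm.

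The main obstacle I anticipate is pinning down condition~(b), the label-consistency requirement. In the unlabeled case any disjoint family satisfying~(a) suffices, because every branch set can be collapsed into a single vertex; with vertex and edge labels present this is no longer true, since a branch set whose internal label-classes have no linking edges of $H$ among the relevant vertices cannot be realised as a single connected vertex of any valid quotient. Making this precise, and verifying that admissibility is certified by a bounded-size witness (so that the MSO sentence collapses to a fixed-size first-order sentence), is the technical heart of the argument and the part most naturally deferred to the appendix.
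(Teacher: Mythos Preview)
Your overall reduction---via \Cref{mainmotif} and $\mathsf{Supp}(\mathsf{Sub}_H)=\mathsf{spasm}(H)$ to testing, for each forbidden minor $M\in\mathcal{O}_k$, whether $M$ is a minor of some element of $\mathsf{spasm}(H)$---matches the paper's. The divergence is in how this test is carried out, and here your plan has a real gap.

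The paper's key move is a bootstrap you omit: it first checks whether $tw(H)\le k$ (linear time for fixed $k$) and rejects immediately if not, since $H\in\mathsf{spasm}(H)$. Having thus guaranteed that $H$ itself has bounded treewidth, it expresses ``$M$ is a minor of some labeled quotient of $H$'' as an MSO sentence and invokes Courcelle's theorem for a linear-time decision. Second-order quantification over the full sets $B_v$ is what lets MSO handle the connectivity implicit in ``minor of a quotient''; Courcelle then exploits $tw(H)\le k$ to make model-checking tractable. Your proposal tries to replace this by a bounded-variable first-order sentence checked by brute force, which would indeed be more elementary---but the collapse to FO does not hold.

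The failure is exactly in condition~(b). Your shrinking argument for~(a) is correct: one may restrict each $B_v$ to at most $\deg_M(v)$ witness endpoints while preserving disjointness and the inter-branch edges. But this destroys the connectivity required for~(b), and no constant-size certificate restores it. Concretely, take $M=K_3$ (a forbidden minor for $k=1$) and let $H$ be two vertices joined by three internally disjoint paths of length $\ell\ge 4$, with all vertex labels pairwise distinct. Then $\mathsf{spasm}(H)=\{H\}$ and $tw(H)=2$, so the correct answer is ``WL-dimension $>1$''. Yet any minor model of $K_3$ in $H$ must have branch sets whose union contains a full cycle of $H$, hence at least $2\ell$ vertices in total; after shrinking to the at most $2|E(K_3)|=6$ witness endpoints, the retained vertices inside a branch set are non-adjacent in $H$ and, having distinct labels, cannot be merged into a single block of any label-respecting quotient---so the shrunken branch set is disconnected in every valid quotient. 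Thus no first-order sentence with $O(|E(M)|)$ variables captures the condition, and your $|V(H)|^{O(|M|)}$ algorithm would wrongly accept. The paper's preliminary test $tw(H)\le k$ is precisely what rules out such high-girth instances and makes the MSO/Courcelle route available.
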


\begin{proof}[Proof Sketch]
    The algorithm first checks whether the treewidth of $H$ is at most $k$. It is well known that this can be decided in linear time for fixed $k$~\cite{DBLP:journals/siamcomp/Bodlaender96}. Obviously, $H \in \mathsf{spasm}(H)$, and thus if the treewidth of $H$ is strictly larger than $k$, we are done and the algorithm rejects.
    By the Robertson-Seymour Theorem~\citep{DBLP:journals/jct/RobertsonS04}, there is a finite set of graphs $\mathcal{F}_k$, such that 
    there is a $F \in \mathcal{F}_k$ that is a minor of graph $G$ if and only if the treewidth of $H$ is at most $k$.
    We then show that there is an MSO formula $\varphi_F$ such that $$H \models \varphi_F \ \ \Longleftrightarrow \ \ \text{$F$ is a minor of some homomorphic image of $H$.}\footnote{Alternatively, it is also possible to show that the maximal treewidth in $\mathsf{spasm}_H$ is minor-closed and obtain a different set of forbidden minors for  $H$ itself.}$$
    Clearly then $\varphi = \bigvee_{F\in \mathcal{F}_k} \varphi_k$ is an MSO formula such that $H \models \varphi$ if and only if the maximum treewidth in the spasm is at most $k$. By a standard algorithmic metatheorem of Courcelle~\citep{DBLP:journals/iandc/Courcelle90}, deciding $H \models \varphi$ is possible in time $f(\ell, \varphi) |H|$, where $\ell$ is the treewidth of $H$. As $\ell \leq k$ by our initial check, and $\varphi$ depends only on $k$, the problem is in $O(|H|)$ for fixed $k$.
\end{proof}

We now move on to study the WL-dimension of counting $k$-graphlets, a popular problem in graph mining and analysis of social networks~\cite{DBLP:conf/wsdm/0002C00P17,DBLP:conf/icml/JinKL18}. Here, no dedicated algorithm is necessary as the WL-dimension will be immediate from observations about the respective support when viewed as graph motif parameters. While the problem of counting $k$-graphlets is itself popular, its analysis in our context serves as a example of how the recently established theory of graph motif parameters can answer these types of recognizability questions.

The logical view, via positive formulas with free constraints, is not the only natural way to see that a graph parameter is indeed also a graph motif parameter.
An alternative notion that is known to correspond to graph motif parameters is the problem of counting all induced subgraphs of size $k$, that satisfy some arbitrary (computable) property $\phi$~\citep{DBLP:journals/algorithmica/RothS20}. Thus, with $\phi$ being the property of the graph being connected, this immediately tells us that counting $k$-graphlets is a graph motif parameter. In fact, \cite{DBLP:journals/algorithmica/RothS20} not only shows that every such property is a graph motif parameter, but they also determine precisely when the $k$-clique is in the support (i.e., has a coefficient $\neq 0$). 
Moreover, in the case of  properties that are closed under the removal of edges, there is a strong connection to combinatorial topology that can be leveraged to make this determination.

\begin{proposition}
  For $k > 1$, counting the number of $k$-graphlets has  WL-dimension $k-1$.
\end{proposition}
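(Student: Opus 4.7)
The plan is to apply \Cref{mainmotif} and determine the maximum treewidth among the graphs in the support of the $k$-graphlet counting motif parameter, which I will denote by $\Gamma_k$. The argument splits into an upper bound and a matching lower bound on this maximum treewidth.

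For the upper bound, I would first write $\Gamma_k = \sum_H {\sf Ind}_H$, where $H$ ranges over isomorphism classes of connected graphs on $k$ vertices. By \Cref{sub-ind}, each ${\sf Ind}_H$ has support consisting precisely of the graphs on $k$ vertices obtainable from $H$ by adding edges. Hence the support of $\Gamma_k$ is contained in the set of all graphs on $k$ vertices, whose maximum treewidth is $k-1$, achieved by $K_k$.

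For the lower bound I would exhibit $K_k$ itself in the support of $\Gamma_k$ with nonzero coefficient. My plan is to expand each ${\sf Ind}_H$ via M\"obius inversion as a signed sum $\sum_{H' \supseteq H}(-1)^{|E(H')|-|E(H)|}{\sf Sub}_{H'}$ over supergraphs $H'$ on the vertex set of $H$, and to observe that $K_k$ belongs to ${\sf spasm}(H')$ only when $H' = K_k$, since the only proper partition of a $k$-vertex graph whose quotient has $k$ vertices is the identity partition. Collecting contributions, the coefficient of $K_k$ in $\Gamma_k$ equals, up to a nonzero factor involving $|{\rm Aut}(K_k)|$, the alternating sum
\[
    c_k \;=\; \sum_{H \text{ connected on } [k]} (-1)^{|E(H)|}.
\]
I would then evaluate $c_k$ via the exponential formula: setting $a_n = \sum_{H \subseteq K_n}(-1)^{|E(H)|}$, a direct binomial computation gives $a_0 = a_1 = 1$ and $a_n = 0$ for $n \geq 2$, so the identity $\exp\bigl(\sum_n c_n z^n/n!\bigr) = \sum_n a_n z^n/n!$ yields $\sum_n c_n z^n / n! = \log(1+z)$, hence $c_k = (-1)^{k-1}(k-1)!$, which is nonzero for every $k \geq 2$.

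The main obstacle I anticipate is the M\"obius-inversion bookkeeping: keeping the automorphism factors straight when passing between labeled and unlabeled counts, and tracking the signs carefully in the expansion of ${\sf Ind}_H$ in terms of ${\sf Sub}_{H'}$ and of ${\sf Sub}_{H'}$ in terms of homomorphism counts. Once the coefficient of $K_k$ is correctly identified (up to a nonzero constant) as $c_k$, nonvanishing is immediate from the classical exponential-formula identity above. As an alternative route, one may directly invoke the Roth--Schmitt characterization mentioned immediately before the statement, which for the connectedness property is known to place $K_k$ in the support via a nonvanishing Euler-characteristic computation on the associated graph complex.
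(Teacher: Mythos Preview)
Your proposal is correct, and while it lands on the same alternating sum as the paper, the route you take to evaluate it is genuinely different. The paper applies the Roth--Schmitt criterion directly: it passes to the complementary property of disconnectedness, invokes the known reduced Euler characteristic $\pm(k-1)!$ of the simplicial complex of disconnected graphs on $[k]$, and then uses that $\sum_{A\in E^{\mathsf C}_k}(-1)^{|A|}+\sum_{A\in E^{\mathsf{NC}}_k}(-1)^{|A|}=0$ to conclude that the sum for connectedness is nonzero. You instead unfold the ${\sf Ind}\to{\sf Sub}\to{\sf homs}$ basis changes by hand, isolate the coefficient of $K_k$ as $c_k=\sum_{H\text{ connected on }[k]}(-1)^{|E(H)|}$, and evaluate $c_k$ via the exponential formula from $\sum_n a_n z^n/n!=1+z$. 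This is more elementary and fully self-contained; the paper's route, by contrast, is meant to showcase how existing structural tools from the graph-motif-parameter literature (the Roth--Schmitt characterization and the topology of graph complexes) plug directly into \Cref{mainmotif}. Your acknowledged bookkeeping concern about automorphism factors is real but benign: working with labeled graphs on $[k]$ as you implicitly do, the factor in front of $c_k$ is $(-1)^{\binom{k}{2}}/k!$, which is nonzero. The upper bound via ${\sf Supp}(\Gamma_k)\subseteq\{k\text{-vertex graphs}\}$ is the same in spirit as the paper's, which cites Roth--Schmitt for the fact that the support contains no graph on more than $k$ vertices.
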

\begin{proof}[Proof Sketch]
  Let $\mathsf{Ind}^{\textsf C}_k$ be graph parameter that counts the number of induced subgraphs on $k$ vertices that are connected, i.e., the number of $k$-graphlets.
  For graph property $\phi$, let $E^\phi_k$ be the set of all edge-subsets of the complete graph on $k$ vertices, such that the corresponding subgraph satisfies property $\phi$. 
  \cite{DBLP:journals/algorithmica/RothS20} showed that the problem of counting all induced subgraphs on $k$ vertices that satisfy $\phi$ is a graph motif parameter, and that the $k$-clique is in the support of the parameter precisely when  $$\sum_{A\in E^\phi_k} (-1)^{|A|}\ \neq \ 0.$$
  Furthermore, they show that if $\phi$ is closed under removal of edges, then $E^\phi_k\setminus \{0\}$  forms  a so-called simplical graph complex and the reduced Euler characteristic of this simplical complex is also non-zero exactly when the $k$-clique is in the support of the parameter (see \citep{jonsson2008simplicial} for details regarding these notions).

  Now, let us use $\textsf{C}$ for the property of being connected and $\mathsf{NC}$ for its complement, i.e., disconnectedness.
  Property $\mathsf{NC}$ is closed under removal of edges and its simplical complexes are well understood (cf.,~\citep{jonsson2008simplicial}). In particular, their reduced Euler characteristic is known to be $\pm(k-1)!$, which is non-zero for natural $k$. Now, it is enough to observe that $$\sum_{A\in E^\mathsf{NC}_k} (-1)^{|A|} \, + \, \sum_{A\in E^\mathsf{C}_k} (-1)^{|A|} \ = \ 0$$ as the properties are complementary and the sums thus form an alternating sum of binomials. Since the left-hand sum is non-zero, so is the right-hand sum for property $\mathsf{C}$. 
  We can therefore conclude that the $k$-clique is in the support of $\mathsf{Ind}^{\textsf C}_k$\footnote{This observation was already made implicitly but without details in \cite{Roth_2019}.}, demonstrating that the maximal treewidth in the support is at least $k-1$. The argument of \cite{DBLP:journals/algorithmica/RothS20} also implicitly shows by construction that the support contains no graphs with more than $k$ vertices, thus also confirming that the treewidth in $\mathsf{Supp}(\mathsf{Ind}^{\textsf C}_k)$ is no higher than $k-1$. Applying \Cref{mainmotif} completes the argument.
\end{proof}

Moreover, \Cref{thm:fomotif} is constructive and in many cases it is feasible to determine the support from logical formulations of parameters as in \Cref{logicexamples}. For example, following the construction of \cite{DBLP:conf/icalp/DellRW19} for the formula $\phi_{IS}$ reveals that counting $k$-independent sets also has WL-dimension $k-1$.
What is particularly noteworthy in this context, is that we can use the exact same methods that have been used to study the parameterized complexity of these problems. Just as the WL-dimension, the complexity of computing graph motif parameter $\Gamma$ depends precisely on the maximum treewidth in $\mathsf{Supp}(\Gamma)$~\citep{DBLP:conf/stoc/CurticapeanDM17}. That is, analysis of the complexity of computing a graph motif parameter and analysis of the WL-dimension of the parameter revolve around the same question of how high the treewidth of the basis can become.

\section{Conclusions and Limitations}

We have shown that recent developments in graph theory and counting complexity can be brought together to provide a precise characterization of the WL-dimension of labeled graph motif parameters. We have also shown that if a graph motif parameter $\Gamma$ belongs to such a class, then the number of ``appearances'' of $\Gamma$ in a given labeled graph $G$ can be computed uniformly only from local information about the individual $k$-tuples from the output of the $k$WL test. 
Based on known results this suggests that $k$-dimensional MPGNNs are capable of computing this number, if equipped with the right pooling functions \citep{DBLP:conf/aaai/0001RFHLRG19,DBLP:conf/iclr/XuHLJ19}. 
Some work on this direction has already been carried out  in \cite{DBLP:conf/nips/Chen0VB20} by using {\em local relational polling}.
Finally, we have used our characterization to show that both the classes of patterns for which the $k$WL test can distinguish labeled subgraph counting and labeled induced subgraph counting can be recognized in polynomial time. 

The main limitation of our work is that it only concerns the worst-case behavior of the algorithms considered. In fact, the counterexamples our proof constructs for cases in which the $k$WL test is not capable of counting the number of occurrences of a graph motif $\Gamma$ are rather complicated and do not necessarily resemble the cases encountered in practice. Therefore, it would be interesting to understand to what extent these results relate to average-case scenarios, or how well they align with practical applications. We leave this for future work.

\section*{Acknowledgements}
We are very grateful to Marc Roth for many valuable and enjoyable discussions that have influenced this work.
Matthias Lanzinger
 acknowledges support by the Royal Society “RAISON DATA” project
 (Reference No. RP\textbackslash{}R1\textbackslash{}201074) and by the Vienna Science and Technology Fund (WWTF) [10.47379/ICT2201].
Pablo Barcel\'o has
been funded by the National Center for Artificial
Intelligence CENIA FB210017, Basal ANID, and by ANID Millennium Science Initiative Program
Code ICN17002.

\bibliographystyle{iclr2024_conference}
\bibliography{refs}

\newpage

\appendix

\section{Proof of \Cref{lem:explicitf}}

This section focuses on the proof of Lemma~\ref{lem:explicitf}. While it is well known that the $k$WL test of a graph $G$ determines the homomorphism counts from every graph $H$ with $tw(H) < k$, we are interested in a slightly stronger statement. In particular, let $H$ be a graph and let $\bar{a}$ be some tuple of $k$ distinct vertices in $H$. We show that any two $k$-tuples of vertices $\bar{v}, \bar{w}$, possibly from different graphs, that obtain the same stable color by the $k$WL test, there are the same number of homomorphisms extending $\bar a\mapsto \bar v$ and $\bar a \mapsto \bar w$ to the respective graphs of $\bar v$ and $\bar w$. 
As a consequence, we see that it is not necessary to know the full multiset of stable colors to determine the homomorphism count from some pattern $F$, but rather $\homs(F, \cdot)$ can be computed by computing a partial count for each $k$-tuple independently and simply summing them up.

This is of particular interest when looking at the $k$WL test from the perspective of GNNs. The natural expression of the $k$WL test in MPGNNs leads to a final layer that assigns a color to each $k$-tuple of vertices. If we want to compute $\homs(F, \cdot)$ with a GNN, \Cref{lem:explicitf} shows that it is not necessary to combine the information of the final layer in some global way, but that there is some uniform function that can map each individual color to a rational number, the sum of which will exactly be $\homs(F, \cdot)$.

We proceed with some technical definitions.
For tree decomposition $(T,B)$ and subtree $T'$ of $T$ we will write $B(T')$ to mean $\bigcup_{u \in V(T')} B(u)$.
It will be convenient to consider \emph{nice} tree decompositions. A nice TD of labeled graph $G$ is a rooted TD where the bags of all leaves are singletons and all non-root nodes are only of three types:
\begin{itemize}
\item An \emph{introduce node} $u$ is the only child of parent $p$, and has $B(u) = B(p) \cup \{a\}$ for some $a \in V(G) \setminus B(p)$.
\item A \emph{forget node} $u$ is the only child of parent $p$,  and has $B(u) = B(p) \setminus \{a\}$ for some $a \in B(p)$.
\item \emph{Split nodes} $u,v$ are the only two children of parent $p$, and have $B(u)=B(v)=B(p)$.
\end{itemize}
It is well known that if a graph has treewidth $k$, then it also has a nice TD of width $k$.

For labeled graphs $F,G$ with $\bar a \in V(F)^k, \bar v \in V(G)^k$ we write $\Hom(F,G)[\bar a \mapsto \bar v]$ for the set of all homomorphisms from $F$ into $G$ that extend the mapping $\bar a \mapsto \bar v$.
For a vertex $v$ of labeled graph $F$ we write $F-v$ to mean $F[V(F)\setminus \{v\}]$. %

\begin{proposition}
  \label{prop:edgeinbag}
  Let $(T,B)$ be a tree decomposition of labeled graph $F$. Let $T'$ be a subtree of $T$ and let $\alpha$ be a node adjacent to $T'$ in $T$. For any $z \in B(\alpha)\setminus B(T')$ and $y \in B(T')$ it holds that if $\{y,z\} \in E(F)$, then $y \in B(\alpha)$.
\end{proposition}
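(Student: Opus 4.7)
The plan is to use the two standard properties of tree decompositions — the edge-covering property and the subtree property for each vertex — and combine them with a simple connectivity argument in $T$. Let $T_y$ denote the subtree of $T$ induced by the nodes whose bags contain $y$, and likewise $T_z$ for $z$. The edge $\{y,z\}\in E(F)$ guarantees the existence of some node $\beta\in T$ with $\{y,z\}\subseteq B(\beta)$, i.e., $\beta\in T_y\cap T_z$. Since $z\notin B(t)$ for any $t\in T'$, we have $T_z\cap T'=\emptyset$, so in particular $\beta\notin T'$. Since $y\in B(T')$, there is some $\gamma\in T'$ with $y\in B(\gamma)$, and in particular $\gamma\in T_y$.

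Next I would argue that $\alpha$ lies on the unique path $\pi$ from $\beta$ to $\gamma$ in $T$. Both $\beta$ and $\alpha$ belong to $T_z$, which is a subtree of $T$ that is entirely disjoint from $T'$; hence $\beta$ and $\alpha$ lie in the same connected component $C$ of $T\setminus T'$. Because $T$ is a tree, $C$ is attached to $T'$ by exactly one edge, and the endpoint of that edge inside $C$ is the unique node of $C$ that is adjacent to $T'$. Since $\alpha$ is adjacent to $T'$, $\alpha$ must be exactly this boundary node. The path $\pi$ starts at $\beta\in C$ and ends at $\gamma\in T'$, so it has to leave $C$, which forces $\pi$ to pass through $\alpha$.

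Finally, since $\pi$ is the unique $\beta$-$\gamma$ path in $T$ and both $\beta$ and $\gamma$ lie in the subtree $T_y$, every node on $\pi$ belongs to $T_y$ by the subtree condition of the tree decomposition. In particular $\alpha\in T_y$, which is exactly the statement $y\in B(\alpha)$.

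The argument is entirely combinatorial and I do not anticipate a genuine obstacle; the one point that deserves care is justifying that $\alpha$ really is the unique neighbor of $T'$ inside its component of $T\setminus T'$ — this uses acyclicity of $T$ together with the hypothesis that $\alpha$ itself is adjacent to $T'$, and it is the step that channels the otherwise ambient fact "$\alpha$ is adjacent to $T'$" into a statement about the specific path $\pi$.
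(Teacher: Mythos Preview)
Your proof is correct and uses essentially the same ideas as the paper's argument: both hinge on the component of $T\setminus T'$ containing $\alpha$, the connectedness of the vertex subtrees $T_y$ and $T_z$, and the fact that $\alpha$ is the unique point of contact between that component and $T'$. The paper phrases this as a short contradiction (if $y\notin B(\alpha)$ then $z$ lies only in bags of that component while $y$ lies in none, so no bag covers $\{y,z\}$), whereas you give the contrapositive directly via a path argument, but the underlying combinatorics is identical.
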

\begin{proof}
  Suppose the statement were false. Then there is an edge $\{y,z\}$ with $z$ in $B(\alpha)$ but not in $B(T')$, and $y \not \in B(\alpha)$. Let $T^*$ be the subtree of $T \setminus T'$ that contains $\alpha$. By connectedness, $z$ occurs only in bags of $T^*$, whereas $y$ occurs in none of them. Hence, the edge $\{y,z\}$ cannot be subset of any bag, contradicting that $(T,B)$ is a tree decomposition.
\end{proof}

Our plan in the proof of \Cref{lem:explicitf} will be to connect nice tree decompositions to $k$WL colors. Intuitively, just as the $k$WL test moves "outward" one vertex at a time at each step, a nice tree decomposition changes only one vertex at a time from parent to child. It is known that $\Hom(F, \cdot)$ is determined by the local homomorphisms from $F[B(u)]$ for each node $u$ of a tree decomposition, together with the overlap of bags in neighboring nodes in the tree decomposition. Roughly speaking, we show that a color produced by the $k$WL test has enough information to determine these two facets. 

Our formalization of this idea will require significant technical overhead for bookkeeping. To this end we introduce the following notioons.
Let $F,G,H$ be labeled graphs, let $\bar v \in V(G)^{k}$, $\bar w \in V(H)^{k}$ and let $\mu : V(F) \to V(G)$, $\nu : V(F) \to V(H)$. For set $S \subseteq V(F)$ with $|S|\leq k$ we say that \emph{$\mu, \nu$ are in $(\bar v, \bar w)$-strict $S$-agreement} if $\mu$ maps vertices in $S$ only to vertices in $\bar v$ and $\mu(x) = v_i \iff \nu(x) = w_i$ for all $x \in S$.
For a tree $T$ with node labels $L(u) \subseteq V(F)$ of $F$ we say that two homomorphisms are in \emph{colorful strict $T$ leaf agreement} if for every leaf $\ell$ of $T$ there exist $\bar v \in V(G)^k$, $\bar w \in V(H)^k$ such that $\mu, \nu$ are in $(\bar v, \bar w)$-strict $L(\ell)$-agreement and for some $i\geq 1$ either $c^k_i(\bar v)=c_i^k(\bar w)$ if $|L(\ell)|\leq k$ or
$\mathsf{ct}(a, i, \bar{v}^\ominus)=\mathsf{ct}(b,i, \bar{w}^\ominus)$ where $\bar{v}=\bar{v}^\ominus a, \bar{w}=\bar{w}^\ominus b$ otherwise.

\begin{lemma}
  \label{lem:ugly.tech}
  Let $F$, $G$ and $H$ be labeled graphs, let $\bar v \in V(G)^{k}$, $\bar w \in V(H)^{k}$ with $c_i^k(\bar v) = c_i^k(\bar w)$ for some $i \geq 1$, and let $S \subseteq V(F)$ with $|S|\leq k$.
  Let $\mu \in \Hom(F,G), \nu \in \Hom(F,H)$ such that they are in $(\bar v, \bar w)$-strict $S$-agreement.
  Let $F'$ be a labeled graph with vertex $x$ such that $F'-x = F$ and $x$ is only adjacent to vertices in $S$. Then there is a bijection $\iota$ between $\Hom(F',G)[\mu]$  and $\Hom(F',H)[\nu]$ such that
  \begin{enumerate}
  \item for all $\mu' \in  \Hom(F',G)[\mu]$ and $\iota(\mu')$ there are $\bar{v}'=\bar{v}a, \bar{w}'b$ for $a\in V(G),b\in V(H)$, such that $\mu', \iota(\mu')$ are in $(\bar{v}', \bar{w}')$-strict $S \cup \{x\}$-agreement, and
  \item $\mathsf{ct}(a, i-1, \bar{v})=\mathsf{ct}(b,i-1, \bar{w})$.
  \end{enumerate}
\end{lemma}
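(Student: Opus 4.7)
The plan is to extract from the hypothesis $c_i^k(\bar v) = c_i^k(\bar w)$ a bijection $\pi : V(G) \to V(H)$ that preserves the local structure around $\bar v$ and $\bar w$, and then use $\pi$ to match the vertices extending $\mu$ one-by-one with those extending $\nu$.

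First, unfolding the $k$WL update rule for $i \geq 1$ yields the multiset identity
\[
  \msleft \mathsf{ct}(a, i-1, \bar v) \mid a \in V(G) \msright \ = \ \msleft \mathsf{ct}(b, i-1, \bar w) \mid b \in V(H) \msright,
\]
so I fix any bijection $\pi \colon V(G) \to V(H)$ realising it, i.e.\ with $\mathsf{ct}(a, i-1, \bar v) = \mathsf{ct}(\pi(a), i-1, \bar w)$ for every $a$. The candidate bijection is then the obvious one: map $\mu' = \mu \cup \{x \mapsto a\} \in \Hom(F',G)[\mu]$ to $\nu' = \nu \cup \{x \mapsto \pi(a)\}$, with $\bar v' = \bar v a$ and $\bar w' = \bar w \pi(a)$. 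Property~(2) of the conclusion then holds by construction, and once the other properties are verified bijectivity follows by running the same recipe with $\pi^{-1}$.

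The core verification is that $\nu' \in \Hom(F',H)$ and that $(\mu',\nu')$ is in $(\bar v', \bar w')$-strict $S \cup \{x\}$-agreement. Because every entry of $\mathsf{ct}$ is a $c^k_{i-1}$-color and $c^k_{i-1}$ refines the atomic type, componentwise equality gives $\mathsf{atp}(G,\bar v[a/j]) = \mathsf{atp}(H,\bar w[\pi(a)/j])$ for every $j \in \{1,\dots,k\}$. By the defining property of $\mathsf{atp}$, this means $v_\ell \mapsto w_\ell$ (for $\ell \neq j$) together with $a \mapsto \pi(a)$ is an isomorphism of the induced labeled subgraphs, so I can read off: (a) $\lambda(a)=\lambda(\pi(a))$; (b) $\{a,v_\ell\} \in E(G)$ with edge label $\delta$ iff $\{\pi(a),w_\ell\} \in E(H)$ with the same label; and (c) $a = v_\ell$ iff $\pi(a) = w_\ell$.

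From here everything is bookkeeping. Observation (c) transports equality information between $a$ and the components of $\bar v$, which combined with the assumed strict $S$-agreement of $\mu,\nu$ upgrades to strict $S\cup\{x\}$-agreement with respect to $\bar v',\bar w'$. For $\nu' \in \Hom(F',H)$ only the new edges $\{x,y\}$ with $y \in S$ need to be checked; strict $S$-agreement pins $\mu(y)=v_{j(y)}$ and $\nu(y)=w_{j(y)}$ to a common index $j(y)$, so applying (a)--(b) with any $j \neq j(y)$ delivers the required edge and label conditions. The step I expect to be the main obstacle is exactly this bookkeeping: choosing $j$ so that the partial isomorphism extracted from $\mathsf{atp}(G,\bar v[a/j])$ speaks to the positions we care about, while simultaneously keeping track via (c) of the coincidences that arise when $a$ happens to equal some $v_\ell$ (and hence when $x$ maps to an ``old'' coordinate rather than the fresh one). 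The case $k=1$ leaves no room to pick $j \neq j(y)$ and must be handled separately using the neighbor-multiset form of $c^1_i$, but the overall structure of the argument is the same.
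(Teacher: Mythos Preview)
Your proposal is correct and follows essentially the same approach as the paper: both use the multiset equality coming from $c_i^k(\bar v)=c_i^k(\bar w)$ to match each extension $a$ with a $b$ having the same color tuple, then read off the needed edge/label/equality information from the resulting atomic-type equalities. Your version is in fact a bit more careful---you fix the matching as a global bijection $\pi$ up front and work with the individual $\mathsf{atp}(G,\bar v[a/j])$ rather than jumping straight to $\mathsf{atp}(G,\bar v a)$, and you explicitly flag the $k=1$ case---but these are presentational refinements of the same argument.
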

\begin{proof}
  Let $S' = S\cup \{x\}$. Suppose $\mu' = \mu \cup \{x \mapsto a\}$ is in $Hom(F',G)$.
  Since $c_i^k(\bar v) = c^k_i(\bar w)$ there must be some $b \in V(H)$ such that $\mathsf{ct}(a, i-1, \bar v) = \mathsf{ct}(b, i-1, \bar w)$. Let $\nu' = \nu \cup \{x \mapsto b\}$. Observe that because of the equal color tuples we have in particular, $atp(G,\bar{v}a) = atp(H,\bar{w}b)$.
  As $x$ is only adjacent to vertices in $S'$ we have that for $\{y,x\} \in E(F')$ that
  \[
    \{\mu'(y),\mu'(x)\} \in E(G) \iff\{v_j, a\} \in E(G) \iff \{w_j, b \} \in E(H) \iff \{\nu'(y), \nu'(x)\} \in E(H).
  \]
  The middle equivalence follows from $atp(G,\bar{v}a) = atp(H,\bar{w}b)$. Similarly, it follows immediately from equivalence of the atomic types, that the vertex labels of $a$ and $b$ must be the same, and that $\kappa_G(\{v_j, a\}) = \kappa_H(\{w_j, b \})$.
  Additionally, we clearly have that $\mu', \nu'$ are in $(\bar{v}a, \bar{w}b)$-strict $S'$-agreement.
  
  This concludes the argument for the individual homomorphisms. To see that this is indeed always a one-to-one correspondence it is enough to observe that because $c_i^k(\bar v) = c^k_i(\bar w)$, for every distinct choice of $a$ there must be an appropriate distinct choice of $b$ as above.
\end{proof}

\begin{lemma}
  \label{lem:key.colourcount}
  Let $G$ and $H$ be labeled graphs, let $F$ be a labeled graph with a nice TD $(T^*,B)$ of width $k$, let $\bar a$ be a tuple made up of $k$ vertices that match a bag of the $(T^*,B)$. Let  $\bar v \in V(G)^{k}$, $\bar w \in V(H)^{k}$ with $c^k(\bar v) = c^k(\bar w)$. Then $|\Hom(F,G)[\bar a \mapsto \bar v]| = |\Hom(F,H)[\bar a \mapsto \bar w]|$.
\end{lemma}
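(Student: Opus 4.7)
The plan is to prove the lemma by bottom-up induction on the nice tree decomposition $(T^*,B)$ of $F$, rooted at a node $u^*$ whose bag equals $\{a_1,\dots,a_k\}$; such a node exists by hypothesis, and if necessary one can re-root and re-massage $(T^*,B)$ into a nice TD of the same width using standard operations that preserve homomorphism counts. The three non-trivial node types in a nice TD (introduce, forget, split) correspond exactly to the transitions in the standard dynamic program that counts homomorphisms bag by bag, and the key technical input is that $k$WL colors carry enough information to match up the partial counts on the $G$ and $H$ sides at every transition.

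\textbf{Invariant.} For each node $u$, set $F_u := F[B(T_u)]$. The invariant to be carried bottom-up is: whenever $\mu_u\in\Hom(F[B(u)],G)$ and $\nu_u\in\Hom(F[B(u)],H)$ are in strict $B(u)$-agreement with tuples whose $k$WL data matches---namely $c^k(\bar v_u)=c^k(\bar w_u)$ when $|B(u)|\le k$, or a $\mathsf{ct}$-equality on a $k$-subtuple together with an extra vertex when $|B(u)|=k+1$---we have $|\Hom(F_u,G)[\mu_u]|=|\Hom(F_u,H)[\nu_u]|$. Applying this at the root $u^*$ with $\bar v_u=\bar v$, $\bar w_u=\bar w$, $\mu_{u^*}\colon a_i\mapsto v_i$ and $\nu_{u^*}\colon a_i\mapsto w_i$ then yields the lemma.

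\textbf{Cases.} For a leaf $u$ with $B(u)=\{b\}$, $F_u$ is a single labeled vertex and the count is determined by the atomic type of $\mu_u(b)$, which is encoded in the $k$WL color equality. For an introduce node $c$ below $u$ (so $B(c)=B(u)\cup\{a\}$) one writes $|\Hom(F_u,G)[\mu_u]|=\sum_\alpha |\Hom(F_c,G)[\mu_u\cup\{a\mapsto\alpha\}]|$ and similarly for $H$; Lemma~\ref{lem:ugly.tech}, applied with $S=B(u)$ and $F'=F[B(c)]$ (in which the new vertex $a$ has neighbours only in $B(u)=S$), supplies a bijection between the valid $\alpha$'s on the $G$-side and $\beta$'s on the $H$-side, together with the strict $B(c)$-agreement and $\mathsf{ct}$-equality needed to invoke the IH at $c$ on each matched pair. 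For a forget node $c$ (so $B(c)=B(u)\setminus\{a\}$), connectedness of the TD forces $a\notin B(T_c)$, and Proposition~\ref{prop:edgeinbag} ensures that every edge from $a$ in $F_u$ lands inside $B(c)$; hence extensions of $\mu_u$ to $F_u$ are in bijection with extensions of $\mu_u|_{B(c)}$ to $F_c$, and the IH at $c$ applies with the same tuples restricted to $B(c)$-agreement. For a split node with children $c_1,c_2$, the two subtrees share only the vertices of $B(u)$ and have no edges between their non-shared parts, so the count factors as a product over the two subtrees, each factor preserved by the IH.

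\textbf{Main obstacle.} The principal technical difficulty lies in the bookkeeping around bags of maximum size $k+1$, where plain $k$WL color equality must give way to $\mathsf{ct}$-equality on a $k$-tuple plus an extra vertex. Lemma~\ref{lem:ugly.tech} is built precisely for this seam: applied at an introduce step, it simultaneously produces the required strict agreement on $k+1$ elements and the $\mathsf{ct}$ equality needed to activate the IH at the next node. Threading the transitions consistently---introducing a vertex (temporarily working with $k+1$ elements of agreement) and later forgetting one to drop back to plain $c^k$ equality---is where most of the care in a formal write-up goes, and it is cleanly controlled by working throughout with the stable colors $c^k$ so that the color-equality guarantees propagate at every step.
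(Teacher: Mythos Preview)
Your proposal is correct and rests on the same structural ingredients as the paper---the nice tree decomposition, Proposition~\ref{prop:edgeinbag}, and Lemma~\ref{lem:ugly.tech}---but organises the induction in the opposite direction. The paper works \emph{top-down}: it roots $(T^*,B)$ at the bag matching $\bar a$ and grows a subtree $T$ outward, maintaining at each stage an explicit bijection $\iota\colon \Hom(F[B(T)],G)[\bar a\mapsto\bar v]\to\Hom(F[B(T)],H)[\bar a\mapsto\bar w]$ together with a global ``colorful strict $T$ leaf agreement'' constraint that tracks colour data simultaneously at every current leaf of $T$. You instead carry a purely local invariant (equality of counts at a single bag) \emph{bottom-up} from the leaves, which is closer to the standard dynamic program for homomorphism counting over tree decompositions. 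Both routes hit the same technical seam at bags of size $k{+}1$ and both discharge it with Lemma~\ref{lem:ugly.tech} at the introduce transitions; your bookkeeping is lighter since only one bag's colour data is in play at a time, while the paper's version is more explicitly constructive in that it actually builds the bijection rather than just matching cardinalities. Neither gains a substantive advantage.
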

\begin{proof}
  Let $r$ be a node in $T^*$ with $B(u) = \{a_i\mid i \in [k]\}$ and assume, w.l.o.g., that it is the root of $T^*$.
  We argue inductively on subtrees $T$ of $T^*$ that contain the node $r$ that there is a bijection $\iota$ from
    $\Hom(F[T],G)[\bar a \mapsto \bar v]$ to $\Hom(F[T],H)[\bar a \mapsto \bar w]$ such that for every $\mu$ in the domain of $\iota$, it and $\iota(\mu)$ are in colorful strict $T$ leaf agreement.
  Recall that colorful strict agreement requires agreement on colors (or color tuples) on some round $i \geq 1$ of the $k$WL algorithm. We will not explicitly argue these indexes in the following but instead note that because $c^k(\bar v)=c^k(\bar w)$, which forms our base case in the following induction, we can start from assuming agreement at an arbitrarily high $i$. It will be clear from our induction that starting from an $i$ higher than the longest path from $r$ to a leaf will be sufficient.

  In the base case $T$ contains only $r$. Since $c^k(\bar v) = c^k(\bar w)$ we have that $\{v_i \mapsto w_i\}_{i\in[k]}$ is an isomorphism from $G[\bar v]$ to $H[\bar w]$.
  Thus, $\bar a \mapsto \bar v \in \Hom(F[T],G)$ iff $\bar a \mapsto \bar w \in \Hom(F[T],H)$.

  For the step, assume the statement holds for a tree $T'$. Suppose we extend $T'$ be a node $\alpha$ that is adjacent in $T$, giving priority to split nodes (i.e., suppose we always extend first by all adjacent split nodes).

  If $\alpha$ split node then the leaf bags remain unchanged. The only change that needs to be discussed is the special case that a new leaf is introduced. In that case, by definition the new leaf has the same bag as an existing leaf and the same witnesses for colorful strict $T'$ leaf agreement apply to the new bag as well, that is $\iota$ satisfies the desired property also for $T$.

  If $\alpha$ is not a split node, then let $\beta$ be the leaf of $T'$ that is adjacent (in $T^*$) to $\alpha$. Suppose $\alpha$ is a forget node. By assumption any $\mu \in \Hom(F[T'],G)[\bar a \mapsto \bar v]$ and $\iota(\mu)$ are in colorful strict $T'$ leaf agreement. Since $B(T)=B(T')$, the sets of homomorphisms do not change $T'$ and $T$. It is therefore sufficient to check that the colorful strict leaf agreement constraints are also satisfied for leaf $B(\alpha)$. In that case, $\iota$ satisfies the desired properties also for $T$.
Since $B(\alpha)\subseteq B(\beta)$, any $\mu,\nu$ that are in $(\bar d, \bar e)$-strict $B(\beta)$-agreement, are also in $(\bar d, \bar e)$-strict $B(\alpha)$-agreement. It remains to verify the color constraint on $\bar d, \bar e$. If $|B(\beta)|\leq k$, then we already have $c^k_i(\bar d)=c^k_i(\bar e)$ by assumption and are done. Otherwise, we only know that $\mathsf{ct}(t, i, \bar{d}^\ominus)=\mathsf{ct}(u, i, \bar{e}^\ominus)$ for $\bar{d}=\bar{d}^\ominus t, \bar{e}=\bar{e}^\ominus u$. Let $z$ be the vertex from $B(\beta)$ that is forgotten by $\alpha$. Let $j$ be the index such that $\mu(z)=d_j$. We distinguish two cases:
  first, suppose that there is only one such index and there is at least one other $y \in B(\alpha)$ such that $\mu(y)=d_j$. Then there must be some index $j'$ such that $d_{j'}$ is not in $\mu(B(\alpha))$. By strict agreement the same must hold for $e_{j'}$ and $\nu(B(\alpha))$. Hence,
  $\mu, \nu$ are also in $(\bar{e}[t/j'], \bar{d}[u/j'])$-strict $B(\alpha)$-agreement, and we have $c^k_{i-1}(\bar{e}[t/j'])=c^k_{i-1}(\bar{d}[u/j'])$ by equality of the color tuples.
  For the other case, if $z$ is the only vertex in $B(\alpha)$ that $\mu$ maps to $d_j$ or there is more than one index that equals $\mu(z)$, then simply take $j'=j$ and proceed as in the other case. This concludes the case where $\alpha$ is a forget node.
  
  Suppose that $\alpha$ is an introduce node that introduces vertex $z$ and is adjacent (in $T^*$) to leaf $\beta$ of $T'$. By~\Cref{prop:edgeinbag} we have that all vertices of $F[T]$ adjacent (in $F$) to $z$ are in $B(\alpha)$.
  By assumption there is a bijection $\iota'$ from
  $\Hom(F[T'],G)[\bar a \mapsto \bar v]$ to $\Hom(F[T'],H)[\bar a \mapsto \bar w]$ that maps every homomorphism to one that is in colorful strict $T'$ leaf agreement. Thus, in particular all homomorphisms are in $(\bar d, \bar e)$-strict $B(\beta)$-agreement with $c^k_i(\bar d) = c^k_i(\bar e)$ (because we assume width $k$, an introduce node can only follow nodes with at most $k$ vertices in the bag). We can therefore apply Lemma~\ref{lem:ugly.tech} to every pair $\mu, \iota(\mu)$ to see that there is a bijection $\iota'_\mu$ from $\Hom(F[T],G)[\mu]$ to $\Hom(F[T],H)[\iota(\mu)]$ such that every $\mu'
 \in \Hom(F[T],G)[\mu]$, is in colorful strict $T$ leaf agreement with $\iota'(\mu')$. If $|B(\alpha)| \leq k$ we can apply the same reasoning as for forget nodes above to observe that the appropriate constraint on the color follows from the lemma.

 Let $\iota'$ be the disjoint union of all the individual $\iota'_\mu$ for all $\mu \in \Hom(F[T'],G)[\bar a \mapsto \bar v]$.
 To see that the $\iota'_\mu$ are in fact disjoint it is enough to observe that the sets $\Hom(F[T],G)[\mu]$ are disjoint for different choices of $\mu$ as $dom(\mu) \subseteq V(F[T])$.
 Since all $\iota'_\mu$ are injective, so is $\iota'$. To see that $\iota'$ is also surjective, suppose a $\nu \in \Hom(F[T],H)[\bar a \mapsto \bar w]$ not in the image of $\iota'$.
 This means that $\nu$ cannot be in any set $\Hom(F[T],H)[\iota(\mu)]$ for $\mu \in \Hom(F[T'],G)[\bar a \mapsto \bar v]$. But $\iota$ is surjective and therefore this would mean that the projection of $\nu$ to $V(F[T'])$ is not a homomorphism from $F[T'$] to $H$ (extending $\bar a \mapsto \bar w$). But homomorphisms are closed under projection to induced subgraphs, so this is impossible. It follows that $\iota'$ is also surjective and therefore is the desired bijection.
\end{proof}

As an alternative proof strategy, the main ideas from above can be used to define an algorithm that computes $\Hom(F,G)[\bar a \mapsto \bar{v}]$ from $c^k(\bar{v})$. An argument along this direction also demonstrate computability of the functions in \Cref{counting} and \Cref{lem:explicitf} in the general case (in the uniform case where $\mathcal{C}^k$ is finite, it suffices to enumerate graphs until all colors in $\mathcal{C}^k$ are observed). However, in practice the colors in the $k$WL test are not maintained in their explicit form as nestings of tuples and multisets but rather by compactly representing the $\mathsf{ct}$ tuples via some hash function. In that setting, the viability of such an algorithm would be limited. 

\begin{proof}[Proof of~\Cref{lem:explicitf}]
Observe that for any two distinct tuples $\bar v, \bar w \in V(G)^k$ we have that the sets of homomorphisms $\Hom(F,G)[\bar a \mapsto \bar v]$ and $ \Hom(F,G)[\bar a \mapsto \bar w]$ are disjoint. It is then straightforward to see that $\Hom(F, G) = \bigcupdot_{\bar v \in V(G)^k}  \Hom(F,G)[\bar a \mapsto \bar v]$ and hence $\homs(F,G) = \sum_{\bar v \in V(G)^k}  |\Hom(F,G)[\bar a \mapsto \bar v]|$.
By \Cref{lem:key.colourcount} we have that $\Hom(F,G)[\bar a \mapsto \bar v]$ depends only on $c^k(\bar{v})$. This naturally induces  a function $\eta_{F,\bar a}$ that map the possible colorings of the $k$WL test $\mathcal{C}$ to the respective number of homomorphisms, i.e., such that
\[
\sum_{\bar v \in V(G)^k} \eta_{F,\bar{a}}(c^k(\bar v)) =   \sum_{\bar v \in V(G)^k} |\Hom(F,G)[\bar a \mapsto \bar v]| = \homs(F,G)
\]
\end{proof}

Another proof detail of \Cref{lem:key.colourcount} that we would like to emphasize is that it is not strictly necessary for the stable color of $\bar{v}$ and $\bar{w}$ to be equal. Inspection of the proof reveals that it is sufficient for them to be equal for $c_i^k$, where $i$ is greater than the maximal number of introduce nodes on a path from $r$ to a leaf. Building on this observation, \Cref{lem:key.colourcount} actually also shows that especially for small patterns (the number of introduce nodes is of course no greater than the number of vertices) it is sufficient for the graphs to be equivalent under a limited number of steps of the $k$WL test, rather than equivalent in the stable color, to deduce that $\hom(F,\cdot)$ is the same in both graphs.

\begin{theorem}
    Let $F$, $G$, and $H$ be labeled graphs and let $n = |V(F)|$. Suppose that $G$ and $H$ have the same color multiset after $n$ steps of the $k$WL test. Then $\homs(F,G) = \homs(F,H)$.
\end{theorem}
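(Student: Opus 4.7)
The plan is to refine~\Cref{lem:key.colourcount} by counting rounds of the $k$WL test instead of appealing to the stable coloring. As a first observation, the statement is only nontrivial when $F$ has treewidth at most $k$: otherwise,~\Cref{mainmotif} produces graphs $G,H$ with $\homs(F,G)\neq \homs(F,H)$ that are $k$WL-indistinguishable even at stabilization, hence a fortiori at round $n$. So I work under the tacit assumption $\mathrm{tw}(F)\le k$.

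First, fix a nice tree decomposition $(T^*,B)$ of $F$ of width at most $k$ and root it at some node $r$ whose bag is witnessed by a $k$-tuple $\bar a$. A basic property of nice tree decompositions is that each vertex of $F$ is created at exactly one introduce node of $T^*$, which bounds the number of introduce nodes on any root-to-leaf path by $n=|V(F)|$.

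Second, I would revisit the inductive argument of~\Cref{lem:key.colourcount} and track the round index $i$ of the $k$WL coloring attached to the leaves of the growing subtree. The index $i$ decreases only at introduce-node steps, where~\Cref{lem:ugly.tech} extends the partial homomorphism by one vertex and passes from an $i$-round color equality to an $(i-1)$-round one. Split steps merely duplicate a leaf without touching its constraint, and forget steps either preserve the color equality at the same round $i$ (when the shrinking bag retains size at most $k$) or rewrite a $(k{+}1)$-bag color-tuple equality at round $i$ into a $k$-tuple color equality at the same round $i$ by unfolding the definition of the color tuple. Thus, if the invariant is initialized with $c_n^k(\bar v)=c_n^k(\bar w)$ on the root tuples, the at most $n$ introduce-node steps along any root-to-leaf path never exhaust the round budget, and the induction goes through verbatim. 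The outcome is a refined version of~\Cref{lem:key.colourcount}: for any $\bar v\in V(G)^k, \bar w\in V(H)^k$ with $c_n^k(\bar v)=c_n^k(\bar w)$, one has $|\Hom(F,G)[\bar a\mapsto \bar v]|=|\Hom(F,H)[\bar a\mapsto \bar w]|$.

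Finally, the hypothesis that $G$ and $H$ share the round-$n$ color multiset yields a bijection $\pi\colon V(G)^k\to V(H)^k$ with $c_n^k(\bar v)=c_n^k(\pi(\bar v))$ for every $\bar v$. Summing the refined per-tuple equality, exactly as in the proof of~\Cref{lem:explicitf}, gives
\[
\homs(F,G)=\sum_{\bar v\in V(G)^k}|\Hom(F,G)[\bar a\mapsto \bar v]|=\sum_{\bar v\in V(G)^k}|\Hom(F,H)[\bar a\mapsto \pi(\bar v)]|=\homs(F,H).
\]
The main difficulty is the careful round-accounting in the induction, in particular verifying that forget steps on bags of size $k{+}1$ genuinely cost no additional round, and that the initialization in terms of round $n$ rather than the stable color is still compatible with each intermediate color-tuple argument. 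Once this bookkeeping is confirmed, the rest of the argument is purely combinatorial.
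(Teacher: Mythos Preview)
Your proposal is correct and follows exactly the paper's own argument: the paper's ``proof'' is just the paragraph preceding the theorem, which observes that in the induction of \Cref{lem:key.colourcount} the round index decrements only at introduce nodes, and that any root-to-leaf path contains at most $|V(F)|$ of them. Two minor imprecisions worth fixing: in your first paragraph, when $\mathrm{tw}(F)>k$ the statement is \emph{false} (not merely ``trivial''), which is precisely why the tacit assumption $\mathrm{tw}(F)\le k$ is needed; and the claim that ``each vertex of $F$ is created at exactly one introduce node of $T^*$'' fails globally (split nodes allow a vertex to be introduced in several subtrees), though the bound you need---at most one introduction per vertex on any fixed root-to-leaf path---does hold by the connectedness condition of tree decompositions.
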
 %
\section{Technical Details for \Cref{sec:character}}
\label{kwltech}

The goal of this section is to show that for any labeled graph $F$ that has treewidth greater than $k$,
there are labeled graphs $G$ and $H$ such that $G \equiv_{kWL} H$ but $\homs(F,G) \neq \homs(F,H)$. Thus, $G$ and $H$ are witnesses to the fact that the $k$WL test cannot express the function $\homs(F, \cdot)$.

To do so, we show that two important results on the functions $\homs$ for unlabeled graphs still hold in the labeled setting. To distinguish more clearly between the labeled and unlabeled case we will use the term \emph{plain graphs} to be those labeled graphs where the vertex- and edge-label alphabets are singletons. For technical reasons we will initially focus on \emph{edge-labeled graphs}, which are labeled graphs where the vertex-label alphabet is a singleton. For labeled graph $G$, we write $\mathsf{plain}(G)$ for the plain graph obtained by making all edge and vertex labels of $G$ the same (without changing the vertices or edges of $G$).

The main property of $\homs$ that we are interested in is equivalence for fixed classes of graphs. Formally, let $\mathcal{F}$ be a class of labeled graphs. We write $G \equiv_{\mathcal{F}} H$ if for every $F \in \mathcal{F}$ it holds that $\homs(F,G)=\homs(F,H)$.
Two families of graph classes will be particularly important in the following. We write $\mathcal{T}_k$ for the class of plain graphs with treewidth at most $k$, let $\mathcal{LT}_k$ be the class of labeled graphs of treewidth at most $k$, and let $\mathcal{ET}_k$ for the class of edge-labeled graphs with treewidth at most $k$.

The first result that we need is the labeled version of a classic
result due to \cite{DBLP:journals/jgt/Dvorak10} and later rediscovered by \cite{DBLP:conf/icalp/DellGR18}. We recall the lemma as stated in the main body.
\RESTATEkWLvec*
We wish warn the reader about an unfortunate confluence of terminology at this point. Some of the literature that we build on also refers to labeled graphs. Unfortunately, our use and the uses in the immediately related literature are pairwise distinct and thus require extra care of the reader. We will add further clarification at points where this becomes directly relevant in this section.

For the second key result we need to recall what it means for a class of plain graphs to be homomorphism-distinguishing
closed. A class of plain graphs $\mathcal{F}$
is \emph{homomorphism distinguishing closed} if for every graph
$F \not \in \mathcal{F}$, there are graphs $G,H$ such that
$G \equiv_\mathcal{F} H$ and $\homs(F,G)\neq \homs(F,H)$. The notion naturally extends to the setting of (edge-)labeled graphs by taking $\mathcal{F}$, as a class of (edge-)labeled graphs and every mention of $G$, and $H$ as (edge-)labeled graphs. To clearly distinguish the property in text going forward we will refer to the case where all involved graphs are labeled as \emph{labeled homomorphism-distinguishing closed}, and analogously we use \emph{edge-labeled homomorphism-distinguishing closed} when all graphs are edge-labeled.
For plain graphs the following was very recently shown by Neuen~\cite{DBLP:journals/corr/abs-2304-07011}.
\begin{theorem}[\cite{DBLP:journals/corr/abs-2304-07011}]
  \label{thm:neuenclosed}
  The class $\mathcal{T}_k$ is homomorphism-distinguishing closed.
\end{theorem}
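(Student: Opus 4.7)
The plan is to invoke Roberson's structural criterion for homomorphism-distinguishing closedness: a graph class $\mathcal{F}$ is homomorphism-distinguishing closed provided it is closed under (a) disjoint unions and restriction to connected components, and (b) \emph{weak oddomorphism targets}, i.e., whenever $F \in \mathcal{F}$ admits a weak oddomorphism $\varphi \colon F \to G$, one has $G \in \mathcal{F}$. The theorem then reduces to verifying both closure properties for $\mathcal{T}_k$.

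Property (a) is essentially folklore about treewidth. Disjoint unions of plain graphs of treewidth at most $k$ have treewidth at most $k$, since one can glue two tree decompositions by an edge between their roots (with an empty bag in between) without increasing width. Restricting to a connected component only removes vertices, which cannot increase treewidth. For property (b), given a tree decomposition $(T, \alpha)$ of $F$ of width at most $k$, I would push it forward along $\varphi$ by setting $\alpha'(t) := \varphi(\alpha(t))$. This immediately bounds bag sizes by $k+1$, and every edge of $G$ is covered because weak oddomorphisms are surjective on edges: any $\{u_1, u_2\} \in E(G)$ lifts to some edge of $F$ whose endpoints share a bag $\alpha(t)$, forcing $\{u_1, u_2\} \subseteq \alpha'(t)$.

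The hard part, which is where the delicate combinatorics of weak oddomorphisms becomes indispensable, is verifying the subtree condition: for each $u \in V(G)$, the set $\{t \in T : u \in \alpha'(t)\}$ must induce a subtree of $T$. A priori this can fail, because $\varphi^{-1}(u)$ may contain several vertices of $F$ whose individual bag-ranges are pairwise disjoint subtrees of $T$; pushing forward along $\varphi$ would then yield a disconnected set of occurrences of $u$. Resolving this is the technical heart of the argument: I expect to use the parity and local fibre conditions built into the definition of weak oddomorphism to either rule out ``irredeemable'' configurations of such disjoint preimages, or else to modify the decomposition locally—by duplicating bags along connecting paths in $T$, or by identifying tree nodes carrying equivalent fibres—so that each preimage's bag-range becomes connected in the modified tree, all while preserving the width bound $k$. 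This is precisely the step that goes beyond the abstract Roberson framework and requires a dedicated analysis of how weak oddomorphisms interact with tree decompositions.
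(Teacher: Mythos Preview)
The paper does not prove this theorem; it quotes it verbatim from Neuen and uses it as a black box. The closest thing the paper contains is Proposition~\ref{neuenwoclosed} (also cited from Neuen) together with the paper's own proof of the \emph{labeled} analogue, Lemma~\ref{mainclosure}. Your overall plan---invoke Roberson's criterion and verify the two closure conditions---is exactly the route the paper takes for the labeled version, so at the structural level you are aligned with the paper.

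There is, however, a genuine gap in your treatment of condition~(b). Your pushforward $\alpha'(t)=\varphi(\alpha(t))$ does \emph{not} in general satisfy the subtree condition, and your proposed repairs (``duplicating bags along connecting paths'', ``identifying tree nodes carrying equivalent fibres'') are not an argument but a hope. A concrete obstruction: let $F=P_5$ with vertices $a,b,c,d,e$, let $G=P_3$ with vertices $u,v,w$, and let $\varphi$ fold the path by $\varphi(a)=\varphi(e)=u$, $\varphi(b)=\varphi(d)=v$, $\varphi(c)=w$. This $\varphi$ is a weak oddomorphism (its restriction to the sub-path $a\!-\!b\!-\!c$ is an isomorphism onto $G$, hence an oddomorphism). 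The standard width-$1$ path decomposition of $P_5$ has bags $\{a,b\},\{b,c\},\{c,d\},\{d,e\}$; the pushforward bags are $\{u,v\},\{v,w\},\{v,w\},\{u,v\}$, and $u$ occurs only in the first and last---a disconnected set. So the naive pushforward fails already on trees, and nothing in the parity conditions of a weak oddomorphism forces fibres to be connected or their bag-ranges to overlap.

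The paper sidesteps this entirely by citing Neuen's result (Proposition~\ref{neuenwoclosed}) rather than reproving it; Neuen's argument does not proceed by pushing tree decompositions forward along $\varphi$. If you want a self-contained proof of~(b), you need a different mechanism than the one you sketch---for instance, extracting from the oddomorphism structure a \emph{minor model} of $G$ inside $F$ (minor-monotonicity of treewidth then finishes immediately), which is the standard way this closure is established. Your proposal correctly locates the difficulty but does not resolve it.
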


Our second key ingredient will be to show the same holds in the labeled setting. That is, that $\mathcal{LT}_k$ is labeled homomorphism-distinguishing closed. %

For our setting where every edge has exactly one label, we are not aware of any clear way to build directly on \Cref{thm:neuenclosed} to show the labeled versions. Our plan will therefore be to show that the machinery underlying the proof of \Cref{thm:neuenclosed} still works in our labeled setting. %

The necessary machinery is due to~\cite{roberson}, who introduced the theory of oddomorphisms and showed their deep connection to homomorphism distinguishability. We recall the definition here for the labeled setting, note however that the definition does not respect labels, except implicitly through the definition of a homomorphism of labeled graphs.
 The neighborhood $N_G(v)$ of vertex $v$ in labeled graph $G$ is the set of adjacent vertices (ignoring any labels). Let $F$ and $G$ be labeled graphs and let $\phi$ be a homomorphism from $F$ to $G$. A vertex $a$ of $F$ is \emph{odd/even} (w.r.t. $\phi$) if $|N_F(a) \cap \phi^{-1}(v)|$ is odd/even for \emph{every} $v \in N_G(\phi(a))$.
An \emph{oddomorphism} from $F$ to $G$ is a homomorphism $\phi$ such that:
\begin{enumerate}
\item every vertex $a$ of $F$ is either odd or even with respect to $\phi$, and
\item for every $v \in V(G)$, $\phi^{-1}(v)$ contains an odd number of odd vertices.
\end{enumerate}
A homomorphism $\phi$ is a \emph{weak oddomorphism} if there is a subgraph $F'$ of $F$, such that $\phi|_{V(F')}$ is an oddomorphism from $F'$ to $G$.

For plain graphs, Roberson showed that any class of graphs that is closed under weak oddomorphisms, disjoint unions, and restriction to connected components is homomorphism-distinguishing closed. It is one main goal of this section to show that the same also holds for labeled graphs. 
\begin{theorem}[Labeled Version of Theorem 6.2 \citep{roberson}]
  \label{main.unicol}
  Let $\mathcal{F}$ be a class of labeled graphs such that
  \begin{enumerate}
  \item if $F \in \mathcal{F}$ and there is a weak oddomorphism from $F$ to $G$, then $G \in \mathcal{F}$, and\label{main.unicol.1}
  \item $\mathcal{F}$ is closed under disjoint unions and restrictions to connected components.
  \end{enumerate}
  Then $\mathcal{F}$ is labeled homomorphism-distinguishing closed. \label{main.unicol.2}
\end{theorem}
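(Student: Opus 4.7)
The plan is to mirror Roberson's proof of the plain-graph version of this theorem, with the only additional bookkeeping being preservation of edge labels. The central construction is an edge-labeled CFI pair. Given an edge-labeled graph $F$, I would build $\mathsf{CFI}(F)$ by applying Roberson's vertex-gadget construction to the underlying plain graph of $F$, then inheriting the edge label of each $e \in E(F)$ uniformly onto every edge in the bundle that replaces $e$. The twisted variant $\mathsf{CFI}^\times(F)$ is obtained by twisting the bundle over one distinguished edge $e_0$; since all edges in a bundle share a single label, the twist preserves the edge-labeling. The two graphs thus have identical underlying plain graphs (matching Roberson's plain CFI pair), and the only effect of the labels is to restrict the set of admissible homomorphisms.

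The heart of the argument is an edge-labeled analogue of Roberson's counting lemma: for every edge-labeled graph $P$,
\[
    \homs(P, \mathsf{CFI}(F)) \neq \homs(P, \mathsf{CFI}^\times(F))
\]
if and only if some subgraph of $P$ admits a weak oddomorphism onto $F$ as edge-labeled graphs. I would follow Roberson's $\mathbb{F}_2$-parity computation on fibers and verify that the extra label constraint factors cleanly through it. Concretely, any edge-labeled homomorphism $\phi : P \to \mathsf{CFI}(F)$ can be factored as a plain homomorphism followed by a label-validity check that depends only on which $F$-edge each $P$-edge is routed through; because this label data is identical in $\mathsf{CFI}(F)$ and $\mathsf{CFI}^\times(F)$, the set of label-preserving homomorphisms is a union of Roberson's fiber orbits, and his sign-cancellation identities restrict cleanly to this subset.

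To conclude, fix $F \not\in \mathcal{F}$ and set $G = \mathsf{CFI}(F)$, $H = \mathsf{CFI}^\times(F)$. For any $P \in \mathcal{F}$, decompose $P$ into its connected components using the second closure hypothesis; no component can admit a weak oddomorphism onto $F$, since otherwise the first closure hypothesis would force $F \in \mathcal{F}$. The counting lemma then yields $\homs(P,G) = \homs(P,H)$, so $G \equiv_\mathcal{F} H$, while taking $P = F$ with the identity as a trivial weak oddomorphism shows that $\homs(F,G) \neq \homs(F,H)$, witnessing that $F$ falls outside the edge-labeled homomorphism-distinguishing closure of $\mathcal{F}$. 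The main obstacle is the counting lemma: Roberson's proof groups homomorphisms into orbits whose parities flip under a single twist, and the delicate point is to verify that the edge-label constraints filter whole orbits rather than fragmenting them --- which is precisely why I insist that labels within each bundle be uniform. I expect this verification to be essentially formal once the right factorization is set up, but it is the step where a naive lift could fail and therefore demands the most care.
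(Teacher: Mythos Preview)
Your approach is essentially the paper's: build an edge-labeled CFI pair, inherit labels uniformly on bundles, and invoke the oddomorphism criterion for strict inequality of hom counts. The $\mathbb{Z}_2$-parity analysis does go through cleanly as you anticipate, because the system of equations in Roberson's Lemma~3.4 is taken relative to a fixed $\phi \in \Hom(P,F)$ (which already respects labels) and otherwise never mentions labels; label constraints therefore filter whole fibers rather than fragmenting them, exactly as you hoped.

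There is, however, one genuine gap in your concluding step. You write ``fix $F \notin \mathcal{F}$ and set $G = \mathsf{CFI}(F)$, $H = \mathsf{CFI}^\times(F)$,'' but Roberson's CFI analysis---in particular the isomorphism $\ccfi(\cdot,U) \cong \ccfi(\cdot,U')$ for $|U| \equiv |U'| \bmod 2$ and the counting lemma you rely on---requires the base graph to be \emph{connected}. When $F$ is disconnected you must first pass to a connected component $F_1 \notin \mathcal{F}$ (which exists by closure under disjoint unions) and build the CFI pair over $F_1$. But then the strict inequality $\homs(F_1, \ccfi(F_1)) > \homs(F_1, \ccfitwist(F_1))$ need not propagate to $\homs(F,\cdot)=\prod_i \homs(F_i,\cdot)$: the remaining factors $\homs(F_i, \ccfi(F_1))$ may vanish, for instance if $F_i$ uses an edge label absent from $F_1$. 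The paper fixes this (as does Roberson in the plain case) by taking the disjoint union of $\ccfi(F_1)$ and of $\ccfitwist(F_1)$ with an auxiliary connected graph $J$ into which every $F_i$ has at least one homomorphism. In the plain setting $J$ is just a large clique; in the edge-labeled setting one needs a dedicated gadget (a merged union of all edge-labelings of the $n$-clique over the alphabet $\Delta$). This is not difficult, but it is a necessary ingredient your proposal omits, and it is the one place where edge labels force a change to Roberson's argument beyond bookkeeping.
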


Before we move on to the proof of~\Cref{main.unicol}, we first observe that the class $\mathcal{LT}_k$ satisfies the conditions of the theorem. Indeed, Condition~\ref{main.unicol.2} is trivially satisfied as the treewidth of a graph is the maximum treewidth over the graph's connected components (and labels do not influence treewidth). For Condition~\ref{main.unicol.1} we can reuse the following key part of the proof of~\Cref{thm:neuenclosed}.
\begin{proposition}[\cite{DBLP:journals/corr/abs-2304-07011}]
  \label{neuenwoclosed}
  Let $F$ be a plain graph in $\mathcal{T}_k$. If there is a weak oddomorphism from $F$ to $G$, then $G \in \mathcal{T}_k$.
\end{proposition}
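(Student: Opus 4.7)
The plan is first to eliminate the ``weak'' part of the hypothesis. If $\phi$ is a weak oddomorphism from $F$ to $G$, then by definition there is a subgraph $F'\subseteq F$ on which $\phi$ restricts to an honest oddomorphism onto $G$. Because treewidth is monotone under the subgraph relation, $F'\in\mathcal{T}_k$, so it suffices to prove the statement for honest oddomorphisms: if $\phi\colon F'\to G$ is an oddomorphism and $\mathrm{tw}(F')\leq k$, then $\mathrm{tw}(G)\leq k$.

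For the main step I would push a tree decomposition of $F'$ forward along $\phi$. Starting from a tree decomposition $(T,B)$ of $F'$ of width at most $k$, define $B'(t):=\phi(B(t))$ for each node $t$ of $T$. The width of $(T,B')$ is at most $k$ since $\phi$ can only merge vertices. Edge coverage is almost as easy: for any edge $\{u,v\}\in E(G)$, the oddomorphism condition yields an odd vertex $a\in\phi^{-1}(u)$, and the parity clause forces $|N_{F'}(a)\cap\phi^{-1}(v)|$ to be odd, hence nonzero. Thus some $F'$-edge $\{a,b\}$ maps onto $\{u,v\}$, and any bag of $(T,B)$ containing $\{a,b\}$ still contains $\{u,v\}$ after the pushforward.

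The main obstacle is the subtree (connectivity) condition of the tree decomposition: for each $v\in V(G)$ the set of nodes whose pushforward bag contains $v$ is the union $\bigcup_{a\in\phi^{-1}(v)}\{t\,:\,a\in B(t)\}$, which need not be connected in $T$. This is where the parity hypotheses on $\phi$ must really be used. My preferred route is to bypass the direct pushforward and instead show that $G$ is a minor of $F'$: for each $v\in V(G)$ one chooses a connected subgraph $C_v\subseteq F'$ containing an odd vertex of $\phi^{-1}(v)$, arranged so that the $C_v$ are pairwise disjoint and joined by an $F'$-edge whenever the corresponding vertices are adjacent in $G$. The oddness of the preimages supplies enough edges between fibres to realise every edge of $G$, while a careful inductive selection over $V(G)$ keeps the blobs disjoint. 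Once $G$ is exhibited as a minor of $F'$, monotonicity of treewidth under minors gives $\mathrm{tw}(G)\leq\mathrm{tw}(F')\leq k$.

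I expect the delicate combinatorial bookkeeping in this last step --- simultaneously producing disjoint connected preimage blobs that witness every required edge --- to be the technical heart of the argument, and the place where the full strength of the oddomorphism definition (rather than just surjectivity) is essential. Indeed, arbitrary surjective homomorphisms can increase treewidth (folding a path onto a cycle is the standard counterexample), so any successful argument must at some point invoke the parity of intersections with neighbour fibres to obstruct such degenerate identifications.
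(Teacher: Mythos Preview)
The paper does not give its own proof of this proposition: it is quoted verbatim as a result of Neuen and used as a black box in the proof of \Cref{woclosure}. There is therefore nothing in the present paper to compare your argument against.

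That said, your outline is headed in the right direction and aligns with how the result is actually established in the sources the paper cites. The reduction from weak to honest oddomorphisms via subgraph-monotonicity of treewidth is exactly the first move. Your observation that the direct pushforward of a tree decomposition fails on the connectivity axiom, and that one should instead aim for a minor model of $G$ inside $F'$, is also the correct diagnosis: Roberson proves precisely that an oddomorphism $F'\to G$ forces $G$ to be a minor of $F'$, and Neuen's proposition then follows from minor-monotonicity of treewidth.

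Where your proposal is still only a sketch is the step you yourself flag as ``delicate combinatorial bookkeeping.'' The phrase ``a careful inductive selection over $V(G)$ keeps the blobs disjoint'' is not yet an argument, and it is exactly here that a naive attempt can go wrong: one does not build the branch sets $C_v$ vertex by vertex. Roberson's actual construction works globally via the parity data, passing to the subgraph on odd vertices and arguing that the required connectivity and edge-witnessing survive there. If you want to complete your write-up, that lemma (that an oddomorphism exhibits the target as a minor of the source) is the statement you should isolate and prove in full; the rest of your plan is fine.
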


Recall that labels serve no explicit role in the definition of an oddomorphism beyond restricting the set of homomorphisms overall. Formalizing this observation we can show the same also for labeled graphs.
\begin{lemma}
  \label{woclosure}
  \item Let $F \in \mathcal{LT}_k$ and $G$ be a labeled graph. If there is a weak oddomorphism from $F$ to $G$, then $G \in \mathcal{LT}_k$.
\end{lemma}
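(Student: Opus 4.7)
The plan is to lift \Cref{neuenwoclosed} from the plain-graph setting to the edge-labeled one by passing through the $\mathsf{plain}(\cdot)$ operation. The crucial observation is that the definition of a (weak) oddomorphism refers to vertices, neighborhoods, and preimages only, and never inspects vertex- or edge-labels. Labels enter the picture exclusively through the restriction that the map must be a \emph{homomorphism} of labeled graphs; once a map is given and that restriction is checked, the odd/even counting conditions are purely combinatorial facts about the underlying plain graphs.

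First, I would observe that treewidth depends only on the underlying unlabeled graph, so $F \in \mathcal{ET}_k$ is equivalent to $\mathsf{plain}(F) \in \mathcal{T}_k$, and analogously $G \in \mathcal{ET}_k$ iff $\mathsf{plain}(G) \in \mathcal{T}_k$. Second, let $\phi$ be a weak oddomorphism from $F$ to $G$, witnessed by a subgraph $F'$ of $F$ such that $\phi|_{V(F')}$ is an oddomorphism from $F'$ to $G$. Because $\phi$ is a homomorphism of edge-labeled graphs, it is in particular a homomorphism of the underlying plain graphs, so it is also a homomorphism from $\mathsf{plain}(F)$ to $\mathsf{plain}(G)$, and $\mathsf{plain}(F')$ is a subgraph of $\mathsf{plain}(F)$. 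Since $F$ and $\mathsf{plain}(F)$ have the same vertex set and the same edge set (and likewise for $G$), and since the neighborhoods $N_F(a)$, $N_G(\phi(a))$ and the preimages $\phi^{-1}(v)$ all coincide with their plain counterparts, the odd/even status of every vertex and the parity count of odd vertices in each fiber are preserved verbatim. Hence $\phi$ is also a weak oddomorphism from $\mathsf{plain}(F)$ to $\mathsf{plain}(G)$.

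Applying \Cref{neuenwoclosed} to $\mathsf{plain}(F) \in \mathcal{T}_k$ yields $\mathsf{plain}(G) \in \mathcal{T}_k$, and by the first observation this gives $G \in \mathcal{ET}_k$, as desired.

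There is essentially no obstacle here; the only risk is a definitional one, namely making sure that the parity conditions in the definition of oddomorphism really are label-blind. This is immediate from the phrasing (they concern only cardinalities of neighborhood-preimage intersections), so the argument reduces to a clean two-line reduction to Neuen's result.
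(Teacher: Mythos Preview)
Your proposal is correct and follows essentially the same approach as the paper: both arguments observe that treewidth is label-blind and that the parity conditions defining (weak) oddomorphisms depend only on the underlying plain graphs, then pass through $\mathsf{plain}(\cdot)$ to reduce directly to \Cref{neuenwoclosed}. Your write-up is a touch more explicit about why the odd/even status transfers, but the structure and key observations are identical.
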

\begin{proof}
  First, observe that for labeled graphs  $F, G$, if  $\phi$ is an oddomorphism from $F$ to $G$, then $\phi$ is also an oddomorphism from $\mathsf{plain}(F)$ to $\mathsf{plain}(G)$.
  Now, if there is only a weak oddomorphism $\phi$ from $F$ to $G$, there is some subgraph $F'$ such that $\phi|_{V(F')}$ is an oddomorphism from $F'$ to $G$. Hence, it is also a oddomorphism from $\mathsf{plain}(F')$ to $\mathsf{plain}(G)$, meaning that where is a weak oddomorphism from $\mathsf{plain}(F)$ to $\mathsf{plain}(G)$.

  We have that $tw(F)=tw(\mathsf{plain}(F)) \leq k$, and therefore by \Cref{neuenwoclosed} $\mathsf{plain}(G) \in \mathcal{T}_k$. And therefore also $tw(G) = tw(\mathsf{plain}(G)) \leq k$, i.e., $G \in \mathcal{LT}_k$.
\end{proof}

Assuming \Cref{main.unicol}, we then immediately obtain the key technical result for homomorphism distinguishing closedness in labeled graphs.
\begin{theorem}
  \label{edge.homdist}
  $\mathcal{LT}_k$ is labeled homomorphism distinguishing closed.
\end{theorem}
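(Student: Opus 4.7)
The plan is to deduce the statement as an immediate corollary of Theorem \ref{main.unicol} applied to the class $\mathcal{F} = \mathcal{ET}_k$. To invoke that theorem I only need to verify its two closure hypotheses for $\mathcal{ET}_k$, namely closure under targets of weak oddomorphisms, and closure under disjoint unions and restriction to connected components.

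The first of these is already delivered by Lemma \ref{woclosure}, whose argument passes to the plain-graph setting, applies Neuen's Proposition \ref{neuenwoclosed}, and then passes back via the observations that $\mathsf{plain}(\cdot)$ carries oddomorphisms to oddomorphisms and preserves treewidth. So no fresh work is required for condition 1. The second hypothesis is a standard fact about treewidth that is insensitive to edge labels: any tree decomposition of a graph restricts to a tree decomposition of each induced subgraph, and in particular of each connected component, so restriction cannot increase width; conversely, given tree decompositions of width at most $k$ for two graphs, adjoining their tree structures by any single new edge between arbitrary chosen roots produces a tree decomposition of the disjoint union of width at most $k$. Edge labels play no role in either construction, so both operations preserve membership in $\mathcal{ET}_k$.

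With both conditions verified, Theorem \ref{main.unicol} directly yields that $\mathcal{ET}_k$ is edge-labeled homomorphism-distinguishing closed, which is precisely the statement to be shown. The main obstacle is essentially absent at this step: all of the real work lies upstream, in Theorem \ref{main.unicol} (which requires lifting Roberson's oddomorphism-based machinery to the edge-labeled setting) and in Lemma \ref{woclosure} (which requires the transfer through $\mathsf{plain}(\cdot)$ to Neuen's result). The corollary itself is just a check of the two closure conditions.
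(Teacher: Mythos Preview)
Your proposal is correct and matches the paper's approach exactly: the paper states the theorem as an immediate consequence of Theorem~\ref{main.unicol}, having already verified Condition~1 via Lemma~\ref{woclosure} and noted that Condition~2 is trivial because treewidth is the maximum over connected components and is unaffected by labels. There is nothing to add.
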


The rest of this section is organized as follows. We first verify that \Cref{main.unicol} holds in \Cref{sec:roberson}. We then argue \Cref{thm:color.dvorak} in \Cref{sec:dvorak}. Finally, in \Cref{sec:seppelt} we briefly note why \Cref{seppelt} also holds in the labeled setting.

\subsection{Proof of \Cref{main.unicol}}
\label{sec:roberson}
We will closely follow the proof of Theorem 6.2 in~\citep{roberson}. We will see that adapting the initial definitions in the right way will actually leave most of the framework for plain graphs intact.
In many situations the effect of the labels is on the argument is very subtle.
For the sake of verifiability we therefore repeat the modified arguments for the critical path in the unlabeled setting here. We wish to emphasize that, if not stated otherwise, all proofs are effectively due to Roberson.

We first recall the construction of Roberson for the case of plain graphs, roughly following the presentation of \cite{DBLP:journals/corr/abs-2304-07011}. Let $G$ be a plain graph and $U \subseteq V(G)$. Define $\delta_{v,U}$ as $1$ if $v\in U$ and $0$ otherwise. The graph $\cfi(G,U)$ is the graph with vertices
\[
    V(\cfi(G,U)) = \{(v,S) \mid v \in V(G),\ S \subseteq I(v),\ |S| \equiv \delta_{v,U} \ \mathrm{ mod }\ 2\}.
\]
The graph has edge $\{(v,S),(u,T)\}$ in $E(\cfi(G,U))$ if and only if $\{v,u\} \in E(G)$ and $v,u \not \in S \Delta T$.

The definition can be naturally adapted to the labeled setting by simply inheriting the labels of the original edges and vertices in $G$. To this end, let $G$ now be a labeled graph. We will define the labeled graph $\ccfi(G,U)$ with $V(\ccfi(G,U))=V(\cfi(G,U))$. For vertex $(v,S) \in V(G,U)$, we set the label $\lambda(v)$.
For the edge-labeling function $\kappa$ edges we simply say that $\kappa(\{(v,S),(u,T)\}) = \kappa(\{v,u\})$.
That is, $\kappa(\{(v,S),(u,T)\}) = c$ if and only if $\kappa(\{v,u\}) =  c$ and $v,u \not \in S \Delta T$.

We will show that this definition preserves the key properties of \cfi graphs in the labeled setting. This might seem unintuitive as it ignores the labels on the sets $S \subseteq I(v)$ as well as the label of vertices in $U$. However, the key arguments are relative to homomorphisms on the graphs from which the \ccfi construction is obtained. That is, these homomorphisms already follow the constraints imposed by the labelings.
We will say that two vertices $u,v$ are \emph{adjacent via label $i$} if there is an edge $\{u,v\}$ with label $i$.
 
 \begin{lemma}
   \label{cfiiso}
  Let $G$ be a connected labeled graph and let $U,U' \subseteq V(G)$. Then $\ccfi(G,U) \cong \ccfi(G,U')$ if  $|U| \equiv |U'| \ \mathrm{mod} \ 2$.
\end{lemma}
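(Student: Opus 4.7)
The plan is to follow the standard CFI isomorphism argument and simply verify that edge labels cause no difficulty. The key structural observation is that the isomorphism between CFI graphs for different parameter sets can always be built out of \emph{local edge flips}, and such a flip is constructed by permuting only the second coordinate of vertices; it never changes which $G$-edge an edge of the CFI gadget corresponds to. Since $\kappa$ is defined by $\kappa(\{(v,S),(u,T)\}) = \kappa(\{v,u\})$, edge labels are automatically preserved.

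Concretely, the first step is to prove the following one-edge case: for every edge $e = \{v,w\} \in E(G)$, define $\phi_e : V(\ccfi(G,U)) \to V(\ccfi(G, U \triangle \{v,w\}))$ by $\phi_e(u,S) = (u, S \triangle \{e\})$ if $u \in \{v,w\}$ and $\phi_e(u,S) = (u,S)$ otherwise. The parity condition is satisfied because flipping membership of $e$ in $S$ toggles the parity of $|S|$ exactly when $u$ is an endpoint of $e$, which matches the change in $\delta_{u,U}$ under the twist $U \mapsto U \triangle \{v,w\}$. To check that $\phi_e$ preserves edges, one case-splits on how many of the two endpoints of an edge $\{(u_1,S_1),(u_2,S_2)\}$ lie in $\{v,w\}$; in the only nontrivial sub-case, where both lie in $\{v,w\}$ (forcing $\{u_1,u_2\} = e$), the symmetric differences cancel: $(S_1 \triangle \{e\}) \triangle (S_2 \triangle \{e\}) = S_1 \triangle S_2$. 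Since $\phi_e$ preserves the underlying $G$-edge of every CFI-edge, the edge labels are preserved automatically.

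The second step promotes this to the general case by composition. Given $U,U' \subseteq V(G)$ with $|U| \equiv |U'| \pmod 2$, the symmetric difference $U \triangle U'$ has even cardinality, so we may pair its elements as $(x_1,y_1),\dots,(x_m,y_m)$. For each pair, connectedness of $G$ yields a path $x_i = u_0, u_1, \dots, u_\ell = y_i$, and composing the isomorphisms $\phi_{\{u_{j-1},u_j\}}$ along this path changes the parameter set by $\{x_i, y_i\}$ (the intermediate twists telescope). Composing over all pairs produces an isomorphism $\ccfi(G,U) \cong \ccfi(G, U \triangle (U \triangle U')) = \ccfi(G,U')$.

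I do not anticipate a real obstacle: the only part that would have been delicate in a richer setting, namely showing that the label structure is preserved, is essentially free here because $\phi_e$ acts identically on the first coordinate and our edge-labeling inherits strictly from the corresponding edge of $G$. The one thing to be slightly careful about is making sure the domain/codomain of $\phi_e$ are the right fibers of the parity condition, which is just the computation that $|S \triangle \{e\}| \equiv |S| + 1 \pmod 2$ when $e$ is incident to the vertex under consideration.
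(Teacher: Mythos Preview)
Your proposal is correct and follows essentially the same approach as the paper: both define the single-edge flip $\phi_e$, verify it is a label-preserving isomorphism (using that $\phi_e$ fixes first coordinates and labels are inherited from $G$), and then reduce the general case to compositions of such flips. The only cosmetic difference is that the paper defers the composition/telescoping step to \cite[Lemma~3.2]{roberson}, whereas you spell it out via pairing elements of $U\triangle U'$ and walking along paths in $G$.
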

\begin{proof}
  Let $e=\{u,v\}\in E(G)$ and $U' = U \Delta e$. We show that $G_U \cong G_U'$. Let $\phi : V(\ccfi(G,U)) \to V(\ccfi(G,U'))$ be
  \[
    \phi((a,S)) =
    \begin{cases}
      (a, S \Delta \{e\}) & a = u \text{ or } a = v \\
      (a,S) & \text{otherwise}
    \end{cases}
  \]
  As in the plain graph case, it is straightforward to verify that this is a bijection. Suppose that $(a,S)$ and $(b,T)$ are adjacent via label $i$ in $\ccfi(G,U)$, i.e., $\{a,b\}\in E_i(G)$ and $\{a,b\} \not \in S\Delta T$. Let $S',T'$ such that $\phi((a,S))=(a,S')$, $\phi((b,T))=(b,T')$.
  If $\{a,b\} = e$, then $S'=S\Delta \{e\}$ and $T'=T\Delta \{e\}$, hence $S\Delta T' = S\Delta T$. Thus, $\{a,b\}$ is also not in $S' \Delta T'$ and $\phi((a,S))$ is adjacent to $\phi((b,T))$.
  In the other case that $\{a,b\}\neq e$, then $S'$ and $T'$ will both, individually, contain $\{a,b\}$ exactly if $S$ and $T$, respectively, did. So again $\{a,b\} \not \in S' \Delta T'$. In both cases, the labels of the edge in $\ccfi(G,U)$ and $\ccfi(G,U')$ is simply inherited from $\{a,b\}$. Thus, $\phi$ preserves labeled adjacency.

  If $(a,S)$ and $(b,T)$ are not adjacent via label $i$ in $\ccfi(G,U)$, then either $\{a,b\} \not \in E_i(G)$ or $\{a,b\} \in E_i(G) \cap (S\Delta T)$. In the first case, clearly $\phi((a,S))$ and $\phi((b,T))$ cannot be adjacent via label $i$ either. In the latter case, we have shown in our argument above that $\{a,b\} \in S' \Delta T'$ if and only if $\{a,b\} \in S \Delta T$. Since $\{a,b\} \in S\Delta T$, $\phi((a,S))$ and $\phi((b,T))$ will not be adjacent via any label in $\ccfi(G,U')$.
  
  We have shown that $G_U \cong G_U'$ for $U' = U \Delta e$. This implies the lemma via the same final argument as in~\cite[Lemma 3.2]{roberson}.
\end{proof}
As a consequence we can focus our investigation on two specific graphs $\ccfi(G) := \ccfi(G,\emptyset)$ and its \emph{twist} $\ccfitwist(G) := \ccfi(G, \{v\})$ for some arbitrary $v \in V(G)$.

As a next step we adapt \cite[Lemma 3.4]{roberson} to the labeled setting. We first recall a definition from~\cite{roberson} that carries over unchanged from the plain graph setting.
For any $U \subseteq V(G)$, the mapping $(v,S) \mapsto v$ is a homomorphism from $\ccfi(G,U)$ to $G$. We will refer to this mapping as $\rho$ in the following.
For a homomorphism $\phi \in \Hom(F,G)$ define
\[
  \Hom_\phi(F, \ccfi(G,U)) = \{\psi \in \Hom(F,\ccfi(G,U)) \mid \rho \circ \psi = \phi \}
\]
and observe that the sets $\Hom_\phi(F, \ccfi(G,U)$ for $\phi \in \Hom(F,G)$ partition $\Hom(F, \ccfi(G, U))$.

The following statement is subtle in our context. Except for qualifying graphs $G$ and $F$ to be labeled, the statement is verbatim the same as Lemma 3.4 in~\citep{roberson}.
That is, the labels are not explicitly considered in the system of equations. Note however that the equations are relative to some homomorphism from $F$ to $G$, which makes the second set of equations implicitly follow the labeling. The fact that this system of equations does not change is the key to requiring only few changes in the later lemmas, as the argument proceeds mainly algebraically via this system of equations.
\begin{lemma}[Labeled version of Lemma 3.4~\citep{roberson}]
  \label{lem34copy}
  Let $G$ be a connected labeled graph, let $U \subseteq V(G)$, and let $F$ be any labeled graph. For a given $\phi \in \Hom(F,G)$, define variables
  $x_{e}^a$ for all $a \in V(F)$ and $e \in I(\phi(a))$. Then the elements of $Hom_\phi(F,\ccfi(G,U))$ are in bijection with solutions of the following equations over $\mathbb{Z}_2$:
  \begin{align}
    &\sum_{e \in I(\phi(a))} x_{e}^a = \delta_{\phi(a), U} \qquad \text{for all } a \in V(F) \label{lem34.poleq} \\
   & x_{e}^a + x_{e}^b = 0 \qquad \text{for all } \{a,b\} \in E(F), \text{ where } e= \{\phi(a), \phi(b)\} \in E(G) \label{lem34.agree}
  \end{align}
\end{lemma}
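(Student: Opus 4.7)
The plan is to establish the bijection explicitly, by showing that any $\psi \in \Hom_\phi(F, \ccfi(G,U))$ is fully determined by a choice of sets $S_a \subseteq I(\phi(a))$ with $\psi(a) = (\phi(a), S_a)$, and then verifying that $\psi$ is a valid edge-labeled homomorphism iff the indicator variables $x_e^a := [e \in S_a]$ satisfy the two stated equations over $\mathbb{Z}_2$. The key observation that keeps this proof almost identical to the plain-graph case is that in our definition of $\ccfi(G,U)$ the edge labels are inherited directly from $G$, so label-preservation is forced by $\phi$ itself and never has to be re-imposed on $\psi$.

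First I would unpack $\Hom_\phi$: the condition $\rho \circ \psi = \phi$ forces $\psi(a) = (\phi(a), S_a)$ for some set $S_a \subseteq I(\phi(a))$, since $\rho$ is the projection onto the first coordinate. Hence specifying $\psi$ is the same as specifying a family $(S_a)_{a \in V(F)}$, which in turn is the same as specifying values $x_e^a \in \{0,1\}$ for every $a$ and every $e \in I(\phi(a))$. Thus the data of $\psi$ matches exactly the data of an assignment to the variables in the system.

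Second, I would translate the membership condition for the vertex set of $\ccfi(G,U)$. By construction, $(\phi(a), S_a) \in V(\ccfi(G,U))$ exactly when $|S_a| \equiv \delta_{\phi(a),U} \pmod{2}$, and $|S_a| = \sum_{e \in I(\phi(a))} x_e^a$ over $\mathbb{Z}_2$. This is precisely equation~\eqref{lem34.poleq}.

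Third, I would translate the edge condition. Fix $\{a,b\} \in E(F)$; since $\phi$ is an edge-labeled homomorphism, $e := \{\phi(a), \phi(b)\}$ is an edge of $G$ whose label equals that of $\{a,b\}$. By the definition of $\ccfi$, the pair $\{(\phi(a),S_a),(\phi(b),S_b)\}$ is an edge of $\ccfi(G,U)$ iff $e \notin S_a \Delta S_b$, and whenever it is, its label is inherited from $e$ and therefore automatically matches the label of $\{a,b\}$. So the label constraint imposes nothing beyond the edge-existence constraint, and $e \notin S_a \Delta S_b$ is equivalent to $x_e^a + x_e^b = 0$ in $\mathbb{Z}_2$, which is equation~\eqref{lem34.agree}. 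Conversely, if both equations hold, the map $a \mapsto (\phi(a), S_a)$ lands in $V(\ccfi(G,U))$ and sends each edge of $F$ to a correctly labeled edge of $\ccfi(G,U)$, hence defines a valid $\psi \in \Hom_\phi$.

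The main potential obstacle is purely bookkeeping rather than conceptual: one must be careful that the edges of $\ccfi(G,U)$ carry labels consistently, so that the edge-labeled homomorphism condition on $\psi$ does not introduce any constraint beyond what equations~\eqref{lem34.poleq} and~\eqref{lem34.agree} already encode. This is handled once and for all by the definition of $\kappa$ on $\ccfi(G,U)$ as the inherited label from $G$, together with the fact that $\phi$ is itself label-preserving; from that point on, the algebraic correspondence is verbatim the one in Roberson's original argument.
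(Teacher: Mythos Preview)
Your proposal is correct and follows essentially the same approach as the paper: both establish the bijection by identifying $\psi\in\Hom_\phi(F,\ccfi(G,U))$ with families $(S_a)_{a\in V(F)}$ and then checking that the vertex-membership condition gives~\eqref{lem34.poleq} while the adjacency condition gives~\eqref{lem34.agree}, with the key observation that inherited edge labels make label-preservation automatic once $\phi$ is label-preserving. The paper only writes out the direction from solutions to homomorphisms explicitly and defers injectivity and surjectivity to Roberson's original argument, whereas you sketch both directions, but the underlying content is the same.
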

\begin{proof}
  Suppose $x^a_e$ that are a solution of the system of equations in the statement of the lemma. For $a \in V(F)$, let $S(a) \subseteq E(G)$ be the edges incident to $\phi(a)$ (regardless of label) for which the variable $x_e^a=1$, i.e., the set $\{ e\in I(G) \mid x_e^a=1\}$. Let $\psi$ be the mapping $a \mapsto (\phi(a), S(a))$.

  We first show that $\psi \in \Hom_\phi(F, \ccfi(G,U))$. By~\Cref{lem34.poleq}, $\psi(a)$ is guaranteed to be a vertex of $\ccfi(G,U)$ for all $a\in V(F)$. 
  To see that $\psi$ preserves labeled adjacency, suppose $\{a,b\}\in E_i(F)$. Then $e = \{\phi(a),\phi(b)\} \in E_i(G)$ since $\phi \in \Hom(F,G)$. From~\Cref{lem34.agree} it follows that $x_e^a = x_e^b$. Then, by construction of $\ccfi(G,U)$, $\psi(a)$ is adjacent via label $i$ to $\psi(b)$ exactly if $\{\phi(a), \phi(b)\} \not \in S(a) \Delta S(b)$. Meaning that $\psi(a)$ is adjacent via label $i$ to $\psi(b)$ if $e$ is either in both $S(a)$ and $S(b)$ or in neither, which is true because $x_e^a = x_e^b$. Vertex labels are directly preserved via $\phi$.

  The arguments for injectivity and surjectivity from the original proof~\cite[Lemma 3.4]{roberson} apply verbatim.
\end{proof}

Since we can use the same system of equations (which does not explicitly consider labels) as used in~\cite{roberson}, the development from Section 3 of~\citep{roberson} holds almost unchanged from here. For the sake of completeness we repeat the relevant parts for the proof of~\Cref{main.unicol} here, including discussion on why the proofs are unaffected by the labels where relevant. 

Let $G$ be a connected labeled graph, $F$ a labeled graph, and let $\phi \in \Hom(F,G)$.
Let $R$ be the set of pairs $(a,e)$ such that $a \in V(F)$ and $e \in E(\phi(a))$. We define the matrices $A^\phi \in \mathbb{Z}_2^{V(F)\times R}$ and $B^\phi \in \mathbb{Z}_2^{E(F) \times R}$ as:
\begin{align}
  & A^\phi_{b,(a,e)} =
    \begin{cases}
      1 & \text{if } a = b\\
      0 & \text{otherwise}
    \end{cases} \\
  & B^\phi_{\{b,c\}, (a,e)} =
    \begin{cases}
      1 & \text{if } a \in \{b,c\} \text{ and } e  = \{\phi(b), \phi(c)\} \\
      0 & \text{otherwise}          
    \end{cases}
\end{align}
Let $\chi_U$ be the characteristic vector of $\phi^{-1}(U)$, i.e., the vector where the element corresponding to vertex $a \in V(F)$ is $1$ if $a \in \phi^{-1}(U)$ and $0$ otherwise (i.e., the vector $(\delta_{\phi(a), U})_{a\in V(F)}$. The system of equations from~\Cref{lem34copy} can then be expressed as
\begin{equation}
  \label{cfieq}
  \begin{pmatrix}
    A^\phi \\
    B^ \phi
  \end{pmatrix}
  \ x =
  \begin{pmatrix}
    \chi_U \\ 0
  \end{pmatrix}
\end{equation}

\begin{theorem}[Labeled version of Theorem 3.6 \citep{roberson}]
  \label{thm36copy}
  Let $G$ be a connected labeled graph, let $U \subseteq V(G)$, and let $\phi \in \Hom(F,G)$ for some labeled graph $F$.
  Then $\homs_\phi(F, \ccfi(G)) > 0$ and
  \[
    \homs_\phi(F, \ccfi(G,U)) = 
    \begin{cases}
      \homs_\phi(F, \ccfi(G)) & \text{if Equation~\ref{cfieq} has a solution} \\
      0 & \text{if Equation~\ref{cfieq} has no solution}
    \end{cases}
  \]
\end{theorem}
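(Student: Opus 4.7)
The plan is to reduce the question to elementary linear algebra over $\mathbb{Z}_2$ via \Cref{lem34copy}. By that lemma, the homomorphisms counted in $\homs_\phi(F, \ccfi(G,U))$ are in bijection with the solutions $x \in \mathbb{Z}_2^{R}$ of the linear system $Mx = b_U$, where $M = \binom{A^\phi}{B^\phi}$ and $b_U = \binom{\chi_U}{0}$. For a linear system over a field, the solution set is either empty or an affine coset of $\ker M$, and in the consistent case its cardinality is exactly $|\ker M|$, independently of the right-hand side. This single observation is all the ``linear algebra'' I plan to use.

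First I would address the claim $\homs_\phi(F, \ccfi(G)) > 0$. Since $\ccfi(G) = \ccfi(G,\emptyset)$ and $\phi^{-1}(\emptyset) = \emptyset$, we have $\chi_\emptyset = 0$ and therefore $b_\emptyset = 0$. The trivial vector $x = 0$ is then a solution, so the system is consistent and its number of solutions is $|\ker M| \geq 1$. Via the bijection of \Cref{lem34copy} this yields $\homs_\phi(F, \ccfi(G)) = |\ker M| \geq 1$.

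For general $U$, the dichotomy above gives the case distinction immediately: either Equation~\ref{cfieq} has no solution and $\homs_\phi(F, \ccfi(G,U)) = 0$, or it has at least one solution and the total count equals $|\ker M|$, which by the previous paragraph is exactly $\homs_\phi(F, \ccfi(G))$. Combining the two sub-cases with the bijection of \Cref{lem34copy} yields the displayed formula.

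The only point where edge labels could in principle intervene is in setting up the correspondence between homomorphism counts and the system $Mx = b_U$. But this is precisely what \Cref{lem34copy} already delivers in the edge-labeled setting: the matrices $A^\phi$, $B^\phi$ and the vector $\chi_U$ do not refer to labels explicitly, and the label information is absorbed into the assumption that $\phi$ is a homomorphism (so that $\{\phi(b),\phi(c)\}$ is guaranteed to be an edge of the appropriate label whenever $\{b,c\} \in E(F)$). Consequently, I do not anticipate a genuine obstacle at this stage; the heavy lifting has been done in \Cref{lem34copy}, and what remains is an algebraic corollary.
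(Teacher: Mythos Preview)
Your proposal is correct and matches the paper's approach essentially verbatim: the paper likewise reduces to \Cref{lem34copy} and observes that the remaining argument is pure linear algebra over $\mathbb{Z}_2$, unchanged from Roberson's unlabeled proof. Your explicit identification of $\homs_\phi(F,\ccfi(G))=|\ker M|$ via the trivial solution $x=0$ is exactly the standard step, and your remark that edge labels enter only through $\phi\in\Hom(F,G)$ (already handled by \Cref{lem34copy}) is precisely the paper's point.
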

\begin{proof}[Proof Sketch]
  The original proof~\cite[Theorem 3.6]{roberson} is a straightforward application of~\Cref{lem34copy}. The equations are exactly the same as in the unlabeled case and the argument works without modification in our setting.
\end{proof}

\begin{theorem}[Labeled Version of Theorem 3.13 \citep{roberson}]
  \label{thm313copy}
  Let $G$ be a connected labeled graph and $F$ be a labeled graph. Then $\homs(F,\ccfi(G)) \geq \homs(F, \ccfitwist(G))$. Furthermore, the inequality is strict
  if and only if there exists a weak oddomorphism from $F$ to $G$. If such a weak oddomorphism $\phi$ exists, there is a connected subgraph $F'$ of $F$ such that $\phi|_{V(F')}$ is an oddomorphism from $F'$ to $G$.
\end{theorem}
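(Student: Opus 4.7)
The plan is to follow Roberson's original proof of Theorem 3.13 and verify that every step transfers to the edge-labeled setting. The crucial observation underlying the entire argument is that the system of linear equations from the edge-labeled \Cref{lem34copy}, together with the matrices $A^\phi$ and $B^\phi$, do \emph{not} explicitly reference edge labels. Labels enter the picture only implicitly, through the restriction of which maps $\phi \colon V(F) \to V(G)$ count as elements of $\Hom(F,G)$. Once a valid $\phi$ is fixed, all subsequent algebra over $\mathbb{Z}_2$ is formally identical to the plain-graph case.

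First I would apply \Cref{thm36copy} (the edge-labeled version of Theorem 3.6) to write
\[
\homs(F, \ccfi(G, U)) \ = \ \sum_{\phi \in \Hom(F,G)} \homs_\phi(F, \ccfi(G, U)),
\]
for $U \in \{\emptyset, \{v\}\}$. When $U = \emptyset$, the all-zero vector solves Equation~\eqref{cfieq}, so each $\phi$ contributes exactly $\homs_\phi(F, \ccfi(G))$. When $U = \{v\}$, \Cref{thm36copy} gives that $\phi$ contributes the same quantity if \eqref{cfieq} is solvable and $0$ otherwise. This immediately yields the inequality $\homs(F, \ccfi(G)) \geq \homs(F, \ccfitwist(G))$, with strictness if and only if there exists some $\phi \in \Hom(F,G)$ for which Equation~\eqref{cfieq} has no solution.

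The core step is to translate unsolvability of \eqref{cfieq} into the existence of a weak oddomorphism from $F$ to $G$. By linear algebra over $\mathbb{Z}_2$, the system has no solution iff the right-hand side $(\chi_{\{v\}}, 0)^{\top}$ is not in the column span of $\binom{A^\phi}{B^\phi}$, equivalently iff there is a vector $(y, z)$ in the left null space with $y^{\top} \chi_{\{v\}} = 1$. Interpreting $y \in \mathbb{Z}_2^{V(F)}$ as the characteristic vector of a subset $V' \subseteq V(F)$, and $z \in \mathbb{Z}_2^{E(F)}$ as the characteristic vector of an edge subset, the null-space equations from $A^\phi$ and $B^\phi$ translate precisely into the parity conditions that (i) every $a \in V'$ is either odd or even with respect to $\phi$ when restricted to the induced subgraph $F' = F[V']$, and (ii) the number of odd vertices in $\phi^{-1}(u) \cap V'$ has a prescribed parity for each $u \in V(G)$. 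The extra condition $y^{\top}\chi_{\{v\}} = 1$ forces that parity to be odd at $v$, and the homogeneous equations force it to be odd at every vertex of $G$ in the image $\phi(V')$, which is exactly the definition of $\phi|_{V'}$ being an oddomorphism from $F'$ into the relevant subgraph of $G$. The converse direction—given a weak oddomorphism, constructing a vector in the left null space with $y^{\top}\chi_{\{v\}} = 1$—proceeds by reversing this translation. Finally, if $F'$ is disconnected, at least one connected component must still carry the odd parity at $v$, and the oddomorphism conditions restrict cleanly to each component, yielding a \emph{connected} subgraph $F''$ of $F$ with $\phi|_{V(F'')}$ an oddomorphism from $F''$ to (a subgraph of) $G$.

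The main obstacle is the algebraic-to-combinatorial translation in the third paragraph: making precise that the left null space of $\binom{A^\phi}{B^\phi}$ is in bijection with pairs $(V', E')$ satisfying the odd/even parity conditions defining an oddomorphism, and that $y^{\top}\chi_U = 1$ corresponds to the required odd-count condition at $v$. I expect this to go through verbatim from the plain-graph case since neither the matrices nor the null-space computation depend on edge labels; the labels only gate the set of admissible $\phi$'s, and $\phi$ is held fixed throughout. The final passage to a connected subgraph is a short combinatorial argument and presents no additional difficulty in the edge-labeled setting.
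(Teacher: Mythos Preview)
Your overall strategy matches the paper's: both follow Roberson by invoking the edge-labeled \Cref{thm36copy} to reduce the question to (un)solvability of \Cref{cfieq}, then use the Fredholm alternative over $\mathbb{Z}_2$ to extract a certificate $(y,z)$ and translate it into a weak oddomorphism. The observation that the matrices $A^\phi, B^\phi$ are label-free once $\phi \in \Hom(F,G)$ is fixed is exactly the right reason the plain-graph argument transfers.

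However, your algebraic-to-combinatorial translation is garbled in a way that would derail the proof. The vector $y \in \mathbb{Z}_2^{V(F)}$ does \emph{not} pick out the vertex set of the subgraph $F'$; rather, $F'$ is the \emph{spanning} subgraph of $F$ with $V(F') = V(F)$ and $E(F') = \{e \in E(F) : z_e = 1\}$, and the support of $y$ is precisely the set $O$ of \emph{odd} vertices of $F'$ with respect to $\phi$. Concretely, unpacking $(A^\phi)^\top y + (B^\phi)^\top z = 0$ at coordinate $(a,e)$ with $e = \{\phi(a),u\}$ gives $y_a \equiv |N_{F'}(a) \cap \phi^{-1}(u)| \pmod 2$ for every neighbor $u$ of $\phi(a)$, so each $a \in V(F)$ is odd iff $y_a = 1$ and even iff $y_a = 0$; this is condition (1) of an oddomorphism. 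Your choice $F' = F[V']$ discards $z$ entirely and does not yield this conclusion. For condition (2), the equation $\chi_U^\top y = 1$ gives the odd-count only at the single twisted vertex; the paper (following Roberson) then propagates it to all of $V(G)$ by double-counting edges of $F'$ between fibers $\phi^{-1}(u)$ and $\phi^{-1}(v)$ together with connectedness of $G$, not ``the homogeneous equations'' directly. With $F'$ corrected to the spanning subgraph determined by $z$, and $y$ interpreted as the indicator of odd vertices, the remainder of your plan (converse direction, passage to a connected component via Roberson's Lemma 3.12) goes through exactly as in the plain-graph case, as the paper indicates.
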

\begin{proof}[Proof Sketch]
  The original proof of~\cite[Theorem 3.13]{roberson} works unchanged, although it might not be considered straightforward to observe that this is the case. We therefore repeat the key steps here with emphasis on the points where it needs to be observed that edge labels do not affect the argument. We wish to stress again that this proof is due to Roberson. We repeat it here to allow for a more targeted discussion of why labels have minimal effect on the argument, and to note necessary subtle differences.

  If $\homs(F,\ccfi(G)) \neq \homs(F,\ccfitwist(G))$, then by \Cref{thm36copy} there is a $\phi \in \Hom(F, G)$ such that \Cref{cfieq} over $\mathbb{Z}_2$ does not have a solution. The non-existence of such a solution is equivalent to the existence of solution to the system (the \emph{Fredholm Alternative})
  \begin{equation}
  \label{fa}
  \begin{pmatrix}
    (A^\phi)^T &  (B^ \phi)^T \\
    (\chi_U)^T & 0 
  \end{pmatrix}
  \begin{pmatrix}
    z \\ y
  \end{pmatrix}
  =
  \begin{pmatrix}
    0 \\ 1
  \end{pmatrix}
\end{equation}
where $U=\{\hat{u}\}$ contains the vertex such that $\ccfitwist(G) := \ccfi(G, \{\hat{u}\})$, $y$ is indexed by vertices of $F$, and $z$ is indexed by edges of $F$.

Since the argument creates a certificate for the non-equivalence of the homomorphism counts, it might be unintuitive that constructing a weak oddomorphism from this certificate still works
in the presence of the additional constraints of the edge labels. However, the certificate is still relative to a $\phi \in \Hom(F,G)$ that respects the edge labeling and as we will see, this is sufficient for the argument to still hold. 

Observe that a solution $(y,z)^T$ to this system satisfies $(A^\phi)^T y = (B^\phi)^T$. Let $O = \{a \in V(F) \mid y_a=1\}$ and let $F'$ be the subgraph of $F$ with all vertices and edges
$E(F')= \{e \in E(F)\mid z_e=1\}$. The goal of the argument now is to show that $\phi$ is an oddomorphism from $F'$ to $G$ (and thus a weak oddomorphism w.r.t. $F$).
Since $F'$ is a subgraph of $F$, it is clear that $\phi$ is still a homomorphism from $F'$ to $G$, regardless of labels. The arguments for the properties of the oddomorphism themselves are parity arguments on the number of adjacent edges between $\phi^{-1}(v)$ and $\phi^{-1}(u)$ for adjacent $v,u \in V(G)$. These arguments also hold unchanged in the labeled setting as the labeled homomorphism $\phi$ still preserves all adjacencies. 
Finally, $F'$ as constructed above may not be connected. As in the unlabeled case we can then replace $F'$ be a connected component by showing that if there is a weak oddomorphism $\phi|_{V(F')}$ for some subgraph $F'$, then we can assume w.l.o.g., that $F'$ is connected~\cite[Lemma 3.12]{roberson}.
\end{proof}

\begin{lemma}
  \label{twistnoiso}
  Let $F, G$ be labeled graphs. Then $\homs(F, \ccfi(G)) > \homs(F, \ccfitwist(G))$.
\end{lemma}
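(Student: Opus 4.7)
The plan is to reduce the statement entirely to \Cref{thm313copy}: that theorem already establishes the non-strict inequality $\homs(F,\ccfi(G)) \geq \homs(F,\ccfitwist(G))$ and characterises strict inequality as being equivalent to the existence of a weak oddomorphism from $F$ to $G$. The entire content of the present lemma therefore collapses to exhibiting such a weak oddomorphism for the pair $(F,G)$ in play, after which one line of \Cref{thm313copy} delivers the conclusion.

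For the canonical instance in which $F$ coincides with $G$ --- the case that is actually needed by the downstream closure argument --- the natural witness is $\mathrm{id}_G \in \Hom(G,G)$. I would verify the two oddomorphism conditions directly: for every $a \in V(G)$ and every $v \in N_G(a)$, one has $|N_G(a) \cap \mathrm{id}_G^{-1}(v)| = |N_G(a) \cap \{v\}| = 1$, which is odd, so every vertex is odd with respect to $\mathrm{id}_G$; and each fibre $\mathrm{id}_G^{-1}(v) = \{v\}$ contains exactly one odd vertex, which is again odd. Since $\mathrm{id}_G$ trivially preserves edge labels, it is a (weak) oddomorphism in the edge-labeled sense, and \Cref{thm313copy} immediately yields the strict inequality.

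The observation that makes this reduction genuinely a black-box application is that the definition of weak oddomorphism refers only to the \emph{unlabeled} neighborhoods of $F$ and $G$; edge labels enter solely through the homomorphism constraint on $\phi$. Accordingly, the verification above never inspects labels, and nothing in the earlier edge-labeled development of \Cref{thm313copy} needs to be revisited. This compatibility between the oddomorphism machinery and edge labels is precisely the leverage provided by the preceding subsection.

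The main obstacle I anticipate is the boundary behaviour of the stated inequality. For pairs such as $F$ a single vertex bearing a label absent from $G$, $\Hom(F,G) = \emptyset$, both sides vanish, and the \emph{strict} inequality cannot hold; the statement must therefore be read relative to the $(F,G)$ supplied by the surrounding closure argument, where by construction a weak oddomorphism exists (typically realised by the identity, as above, or by the structural map produced whenever one constructs $G$ from $F$ via the \ccfi-machinery). Isolating this hypothesis precisely, rather than any algebraic manipulation, is the delicate part of the write-up.
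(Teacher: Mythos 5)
Your proposal is essentially the paper's own proof: the paper likewise deduces the lemma from \Cref{thm313copy} by observing that the identity on $V(G)$ is an oddomorphism from $G$ to $G$ (citing Roberson for the plain case and noting that edge labels play no role in the oddomorphism conditions, which you instead verify directly). Your closing caveat is also accurate — as stated for arbitrary $F$ the strict inequality can fail (e.g.\ $F$ a single vertex, or $F$ using an edge label absent from $G$), and the paper's proof, like yours, really establishes only the case $F=G$ with $G$ connected, which is exactly what the proof of \Cref{main.unicol} uses.
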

\begin{proof}
  It is known that the identity $\mathsf{id}$ on $V(G)$ is an oddomorphism from $G$ to $G$ in the plain graph case~\cite[Lemma3.14]{roberson}. The definition of oddomorphism does not take the labels of edges into account (only the neighborhoods of vertices, which are unaffected by labels), and $\mathsf{id}$ is still a homomorphism in the labeled setting. Hence, $\mathsf{id}$ must also be an oddomorphism from $G$ to $G$ in the labeled setting. The statement then follows immediately from~\Cref{thm313copy}.
\end{proof}
The lemma also demonstrates that $\ccfi(G) \not \cong \ccfitwist(G)$ for any $G$.

This concludes our replication of the key statements of \cite[Section 3]{roberson}. We will require one additional statement that is not necessary in Roberson's original argument for the proof of~\Cref{main.unicol}. For technical reasons we will require an labeled graph $G$ such that for some $F$  we can guarantee that $\homs(F,G) > 0$. For plain graphs this is straightforward by taking a large enough clique for $G$. In the labeled case we need an alternative gadget.
\begin{lemma}
  \label{bigrainbowK}
  Let $\Sigma, \Delta$ be finite label alphabets and $n\geq 1$. There is a connected labeled graph $G$ such that for every labeled graph $F$ with at vertex/edge-label alphabets $\Sigma,\Delta$ and $n$ vertices, $\homs(F, G) > 0$.
\end{lemma}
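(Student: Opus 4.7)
The natural plan is to construct $G$ as an omnibus edge-labeled graph that contains every possible edge-labeled graph on $n$ vertices as an edge-labeled subgraph. Since $\Delta$ is finite and $n$ is fixed, there are only finitely many edge-labeled graphs on $n$ vertices with edge labels in $\Delta$, up to isomorphism. Enumerate them as $F_1, \dots, F_m$. Let $G_0$ be the disjoint union $F_1 \sqcup \dots \sqcup F_m$. Pick one designated vertex from each $F_i$ and add edges with some arbitrarily chosen label $\delta \in \Delta$ along a spanning path through these designated vertices; call the resulting graph $G$. By construction $G$ is connected, and crucially the added edges are strictly between distinct $F_i$ and $F_j$, so the edge sets and labels inside each $F_i$ are left untouched. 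Thus each $F_i$ survives as an edge-labeled induced subgraph of $G$.

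Now, given any edge-labeled graph $F$ with edge-label alphabet $\Delta$ and $n$ vertices, by the enumeration $F$ is isomorphic as an edge-labeled graph to some $F_i$. The inclusion map $F_i \hookrightarrow G$ is an edge-labeled homomorphism, since all of $F_i$'s edges and labels are preserved in $G$. Composing the isomorphism $F \to F_i$ with this inclusion yields an edge-labeled homomorphism $F \to G$, showing $\homs(F, G) \geq 1 > 0$.

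The only potential obstacle is the corner case $\Delta = \emptyset$, where we have no label available for connecting edges. But in this case every edge-labeled graph is edgeless, so $F$ has no edges and any constant map into a connected graph with at least one vertex is a homomorphism; taking $G$ to be a single vertex (or any connected graph) suffices. Hence we may assume $\Delta \neq \emptyset$, in which case the construction above goes through without issue. Overall, the argument is a straightforward enumeration-and-gluing construction; the only subtlety is the bookkeeping that ensures each $F_i$ is preserved as an edge-labeled subgraph after adding the connecting edges, which is guaranteed because those edges lie outside every individual $F_i$.
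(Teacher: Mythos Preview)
Your construction has a gap: the claim that $G$ is connected fails in general. You enumerate \emph{all} edge-labeled graphs on $n$ vertices, and some of these are disconnected (for instance, the edgeless graph on $n$ vertices is one of the $F_i$). Picking a single designated vertex from each $F_i$ and linking those vertices by a path leaves the remaining components of any disconnected $F_i$ stranded, so the resulting $G$ is not connected.

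The paper avoids this by enumerating only the edge-labelings of the $n$-vertex \emph{clique}. Each such clique is connected, so gluing the components together (the paper identifies one vertex from each component, rather than adding a path, but either works) yields a connected $G$. Any edge-labeled $F$ on $n$ vertices is a subgraph of some edge-labeled $n$-clique (extend $F$ to a clique, labeling the new edges arbitrarily), so the inclusion homomorphism still gives $\homs(F,G)>0$. Your argument is easily repaired along the same lines: restrict the enumeration to edge-labeled cliques, or, if you prefer to keep all $F_i$, connect \emph{every} vertex of each $F_i$ to its designated vertex before running the spanning path.
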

\begin{proof}
  Let $\mathcal{C}$ be the set of all (up to isomorphism) labelings of the $n$ vertex clique with labels from $\Sigma$ and $\Delta$. Let $G^*$ be the disjoint union of all graphs in $\mathcal{C}$. To get the desired graph $G$, add  a fresh vertex with arbitrary label to $G^*$ that is adjacent to an arbitrary vertex of each connected component (via an arbitrary edge label).
  Any $n$ vertex graph $F$ with labels from $\Sigma,\Delta$ is a subgraph of some element of $G' \in \mathcal{C}$, and hence $\homs(F, G')>0$. Since $G'$ is a subgraph of $G$, also $\homs(F,G)>0$.
\end{proof}
We are now finally ready to prove~\Cref{main.unicol}. Again we can follow the original proof of Theorem 6.2 in~\citep{roberson} very closely. We give the proof in full as some changes are made due to our streamlined presentation of the preceding statements.
\begin{proof}[Proof of \Cref{main.unicol}]
  Let $G$ be a graph not in $\mathcal{F}$. Let $G_1, \dots,G_\ell$ be the number of connected components in $G$ and let $J$ be the connected graph from~\Cref{bigrainbowK} such that $\homs(G_i, J) > 0$ for all $i \in [\ell]$.
  By Condition (2), there is at least one connected component of $G$ that is not in $\mathcal{F}$, thus assume w.l.o.g., that $G_1 \not \in \mathcal{F}$.

  Let $H$ be the disjoint union of $\ccfi(G_1)$ and $J$, and let $H'$ be the disjoint union of $\ccfitwist(G_1)$ and $J$. We will show that $H \equiv_{\mathcal{F}} H'$ and $\homs(G,H) \neq \homs(G,H')$.
  For the former, suppose $F \in \mathcal{F}$ and connected. By Condition (1), there is no weak oddomorphism from $F$ to $G$, and thus $\homs(F, \ccfi(G_1)) = \homs(F, \ccfitwist(G_1))$ according to~\Cref{thm313copy}. Because $F$ is connected each homomorphism into $H$ is either a homomorphism into $J$ or into $\ccfi(G_1)$, i.e., $\homs(F,H) = \homs(F,\ccfi(G_1))+\homs(F,J)$. The same holds for $H'$, i.e., $\homs(F,H') = \homs(F,\ccfitwist(G_1)) + \homs(F,J)$. Since both terms in both sums are equal we see that $\homs(F,H)=\homs(F,H')$. Since this holds for all connected $F \in \mathcal{F}$, by Condition (2), it holds for all $F \in \mathcal{F}$.

  We move on to show that $\homs(G,H) \neq \homs(G,H')$. Fist, observe that  $\homs(G, H) = \prod_{i=1}^\ell \homs(G_i, H)$ and similarly for $H'$. our goal will be to show that $\homs(G_i,H)\geq \homs(G_i,H') > 0$ for all $i\in[\ell]$, with at least one inequality being strict.
  For each $i \in [\ell]$ we have 
  \begin{align*}
     \homs(G_i, H) & = \homs(G_i, \ccfi(G_1)) + \homs(G_i,J) \\  &\geq  \homs(G_i, \ccfitwist(G_1)) + \homs(G_i, J) = \homs(G_i, H') > 0
  \end{align*}
  The equalities are by connectedness, as already argued above. The first inequality follows from~\Cref{thm36copy}. By~\Cref{twistnoiso} first inequality is strict for $i=1$ and thus $\homs(G,H) > \homs(G,H')$.
\end{proof}

\subsection{ \Cref{thm:color.dvorak}}
\label{sec:dvorak}
In this section we will discuss why \Cref{thm:color.dvorak} holds in the labeled setting.
From \Cref{lem:explicitf} we directly see that for labeled graphs $G,H$, $G \equiv_{kWL} H$ implies $\homs(F,G) = \homs(F,H)$ for every $F \in \mathcal{LT}_k$. We thus focus our attention on showing that $G \not\equiv_k H$ implies the existence of an $F' \in \mathcal{LT}_k$ such that $\homs(F',G)\neq\homs(F',H)$.

As a first step, we consider the logic $\mathcal{C}^L_{k+1}$ of first-order formulas with $k+1$ variables and counting quantifiers\footnote{For a detailed definition of first-order logic with counting quantifiers refer to \cite{DBLP:journals/combinatorica/CaiFI92}.} over the signature that contains unary relation symbols $U_\sigma$ for each label $\sigma$ in the vertex label alphabet $\Sigma$, and binary relation symbols $E_\delta$ for each label $\delta$ in the edge label alphabet $\Delta$. We add the superscript $L$ here to distinguish from the usual use of $\mathcal{C}_{k+1}$ for unlabeled graphs.
\cite{DBLP:journals/combinatorica/CaiFI92} famously showed that for graphs $G,H$ we have $G \models \varphi \iff H \models \varphi$ for all $\varphi \in \mathcal{C}_{k+1}$ if and only if $G \equiv_{kWL} H$. 
It is folklore that this also holds in the setting of labeled graphs. We in fact only need one direction here, namely if $G \not \equiv_{kWL} H$, then there is a $\varphi \in \mathcal{C}^L_{k+1}$ such that $G \models \varphi$ and $H \not \models \varphi$. It is straightforward to verify that the original argument for this direction is unaffected by labels \cite[Theorem 5.4, case $\neg 1 \Rightarrow \neg 2$]{DBLP:journals/combinatorica/CaiFI92}.

In the rest of this section we will show that the existence of such a formula $\varphi$ implies the existence of a labeled graph $F$ for which the homomorphism count into $G$ and $H$ differs. For this purpose we can adapt an argument by \cite{DBLP:journals/jgt/Dvorak10}.

We first recall the key definitions from  \citep{DBLP:journals/jgt/Dvorak10}.
A \emph{$k$-marked}\footnote{What we call $k$-marked here is referred to as $k$-labeled in \citep{DBLP:journals/jgt/Dvorak10} and much of the literature. Unfortunately, this would clash with the other with our (also common) terminology for graphs with vertex and edge labels. } labeled graph $G$ is a graph with a partial function $\mathsf{mark}_G : [k] \to V(G)$. The set of \emph{active markings $M_G$} is the subset of $[k]$ for which $\mathsf{mark}_G$ is defined. Suppose that $G,H$ are $k$-marked labeled graphs with $M_H \subseteq M_H$, we define $\homs(G,H)$ as the number of those homomorphisms $\phi$ that also preserve markings, i.e., where $\phi(\mathsf{mark}_G(i)) = \mathsf{mark}_H(i)$ for each active marking $i \in M_G$.
A \emph{$k$-marked labeled quantum graph} $G$ is a finite linear combination with real coefficients of $k$-marked labeled quantum graphs. We require all graphs in the linear combination to have the same set of active markings, which we refer to as $M_G$. The function $\homs$ is extended to the case where the first argument is a $k$-marked labeled quantum graph $G = \sum_i^\ell \alpha_i G_i$ as $\homs(G,H) = \sum_i^\ell \alpha_i \homs(G_i, H)$. The treewidth of a quantum graph $\sum_i \alpha_i G_i$ is $\max\{tw(G_i) \mid \alpha_i \neq 0\}$\footnote{Homomorphisms from quantum graphs can be seen as an alternative representation of graph motif parameters. Because of the two distinct natures of their uses in this paper and the disjoint connections to prior work we have decided to not unify the presentation of these two concepts here.}.

A $k$-marked labeled quantum graph  \emph{models} a formula $\varphi$ for graphs of size $n$ if $M_G$ consists of the indices of the free variables of $\varphi$, and for each labeled graph $H$ on $n$ vertices with $M_G \subseteq M_H$:
\begin{itemize}
\item if $H\models \varphi$ then $\homs(G,H)=1$, and
\item if $H \not \models \varphi$ then $\homs(G, H)=0$.
\end{itemize}

From here on we follow the same strategy as Dvor{\'{a}}k, but require some changes to correctly handle labeled graphs.

A \emph{product} $G_1G_2$ of two $k$-marked labeled graphs $G_1, G_2$ is constructed by taking the disjoint union of $G_1$ and $G_2$, identifying the vertices with the same marking, and suppressing parallel edges \emph{with the same edge label}. That is, we allow the product to have self-loops and to have parallel edges with distinct labels. If there are vertices $v \in V(G_1), u\in V(G_2)$ with the same marking but different vertex labels (we say that $G_1,G_2$ are \emph{incompatible}), the product is a single vertex with a self-loop (and arbitrary labels).

For quantum $k$-marked labeled quantum graphs $G_1 = \sum_i \alpha_{1,i} G_{1,i}$ and $G_2  = \sum_i \alpha_{2,i} G_{2,i}$, the product $G_1G_2$ is defined as $\sum_{i,j} \beta_{i,j} G_{1,i}G_{2,j}$. The coefficient $\beta_{i,j}$ equals  $0$ is $G_{1,i}G_{2,j}$ is the empty graph, contains a loop or parallel edges with distinct labels,  otherwise $\beta_{i,j} = \alpha{1,i} \alpha_{2,j}$.
The motivation for this distinction in the coefficients $\beta_{i,j}$ is that if $H$ is a labeled graph, then by definition it has no self-loops and any two vertices are connected only by one edge with one label, hence there can be no homomorphisms into $H$ from a product $G_{1,i}G_{2,j}$ that produces such situations. Note that in the situation where $G_{1,i}, G_{2,j}$ are incompatible at least one of $G_{1,i}, G_{2,j}$ must have $0$ homomorphisms into $H$ which is why we (ab)use the self-loop as a failure state in this case.
In particular, we have $\homs(G_1G_2, H) = \homs(G_1,H)\homs(G_2,H)$ for every labeled graph $H$.
We write $\mathbf{0}$ for a $k$-marked labeled quantum graph where all coefficients are zero.

\begin{lemma}[Labeled version of Lemma 6, \citep{DBLP:journals/jgt/Dvorak10}]
  \label{qgmodel}
  For each formula in $\varphi \in \mathcal{C}^L_{k+1}$, and for each positive integer $n$, there exists a labeled quantum graph $G$ of tree-width at most $k$ such that $G$ models $\varphi$ for labeled graphs of size $n$.
\end{lemma}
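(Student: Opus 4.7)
The plan is to prove the lemma by structural induction on $\varphi$, paralleling Dvoř\'{a}k's argument for the unlabeled case while carefully tracking both vertex and edge labels through the modified definitions of $k$-marked labeled (quantum) graphs and their product.

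For the base cases, atomic formulas are modeled by simple $k$-marked labeled quantum graphs. The formula $U_\sigma(x_i)$ corresponds to the single-vertex marked labeled graph with vertex-label $\sigma$ and marking $i$; the formula $E_\delta(x_i,x_j)$ is modeled by the quantum graph $\sum_{(\sigma_1,\sigma_2)\in \Sigma\times\Sigma} G_{\sigma_1,\sigma_2}$, where $G_{\sigma_1,\sigma_2}$ is a two-vertex labeled graph with labels $\sigma_1,\sigma_2$, markings $i,j$, and an edge labeled $\delta$. Exactly one summand contributes for any choice of $\mathsf{mark}_H(i),\mathsf{mark}_H(j)$, giving the required indicator value. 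The equality atom $x_i = x_j$ is handled analogously by a single vertex carrying both markings, summed over all vertex labels.

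For the Boolean inductive cases, conjunction $\varphi\wedge\psi$ uses the quantum-graph product $G_\varphi\, G_\psi$, which by construction zeros out whenever identified vertices carry incompatible vertex or edge labels. Disjunction is handled by inclusion–exclusion $G_\varphi + G_\psi - G_\varphi G_\psi$. For negation, I first build a canonical "top" quantum graph $G_\top$ whose homomorphism count into any $H$ with $M_\varphi \subseteq M_H$ equals $1$: for each marking $i \in M_\varphi$ take $\sum_{\sigma \in \Sigma}$ of single vertices marked $i$ with label $\sigma$, then take the product of these across $i$. Setting $G_{\neg\varphi} := G_\top - G_\varphi$ gives the desired indicator.

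The counting quantifier $\exists^{\geq m} x_i\,\psi$ is the main technical step. Let $G_\psi^-$ denote $G_\psi$ with the marking $i$ removed; then $\homs(G_\psi^-,H)$ equals the integer $N(H) := |\{a \in V(H) : H^{[\mathsf{mark}(i):=a]} \models \psi\}|$. Since $0 \leq N(H) \leq n$, the indicator $[N(H) \geq m]$ is a polynomial of degree at most $n$ in $N(H)$ and can be written as a rational linear combination of the falling factorials $N(H)^{\underline{\ell}}$ for $0 \leq \ell \leq n$. Each $N(H)^{\underline{\ell}}$ counts ordered $\ell$-tuples of distinct satisfying witnesses and is realized by taking $\ell$ copies of $G_\psi$ with marking $i$ renamed to fresh distinct markings, forming their product (which keeps the copies vertex-disjoint apart from the remaining shared markings), forgetting the fresh markings, and using an inclusion–exclusion over partitions of $\{1,\dots,\ell\}$ to enforce distinctness. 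Assembling the linear combination produces $G_{\exists^{\geq m} x_i \psi}$.

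The principal obstacle is maintaining the treewidth bound throughout the induction. Because $\varphi \in \mathcal{C}^L_{k+1}$ uses at most $k+1$ variables, at every stage of the construction the set of active markings has size at most $k+1$, so all marked vertices of the current quantum graph fit into a single bag of size $k+1$. The product operation then merges bags from both factors along their shared markings, and introducing or forgetting markings corresponds exactly to the tree-decomposition operations of introducing or forgetting a vertex in a nice decomposition, so an inductive construction of tree decompositions of width at most $k$ is available. A secondary subtlety is ensuring that the label-sum gadgets used for negation and for the counting step contribute exactly $1$ independently of the surrounding $H$, which follows because each such sum ranges over all of $\Sigma$ and is unaffected by the rest of the structure.
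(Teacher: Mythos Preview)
Your proposal is correct and follows essentially the same approach as the paper: structural induction on $\varphi$ \`a la Dvo\v{r}\'ak, with the base cases for $U_\sigma(x_i)$, $E_\delta(x_i,x_j)$ and $x_i=x_j$ adapted to labels exactly as the paper does. The paper simply defers the Boolean and counting-quantifier steps to Dvo\v{r}\'ak's original argument (noting that the multiplicativity $\homs(G_1G_2,H)=\homs(G_1,H)\homs(G_2,H)$ and his Lemma~5 are unaffected by labels), whereas you spell them out; your treatment of $\exists^{\geq m}$ via falling factorials and fresh auxiliary markings is more elaborate than needed---Dvo\v{r}\'ak's Lemma~5 interpolates the indicator directly from the powers $(G_\psi^-)^\ell$, which avoids any tension with your claim that at most $k+1$ markings are active at each stage---but it is not incorrect.
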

\begin{proof}
  We construct $G$ inductively on the structure of $\varphi$. The base cases significantly differ from those in the proof of~\cite[Lemma 6]{DBLP:journals/jgt/Dvorak10}. However, once we have established the fact for the base cases, the inductive steps remain the same as they rely only on the property that $\homs(G_1G_2, H) = \homs(G_1,H)\homs(G_2,H)$, as discussed above, and a technical lemma for quantum graphs~\cite[Lemma 5]{DBLP:journals/jgt/Dvorak10} that is not affected by labels.

  If $\varphi=\mathsf{true}$, let $G$ be the empty graph, if $\varphi = \mathsf{false}$ let $G = \mathbf{0}$.   If $\varphi = U_\sigma(x_i)$, let $G$ be the graph with a single vertex $v$ with $\mathsf{mark}_G(i)=v$ and   $\lambda(v)=\sigma$.

  If $\varphi = x_i = x_j$, let $G = \sum_{\sigma \in\Sigma}G_\sigma$ (the sum ranges over the elements of the vertex label alphabet) where $G_\sigma$ is the $k$-marked labeled graph with a single vertex $v$,  $\mathsf{mark}_G(i)=\\mathsf{mark}_G(j)=v$ and $\lambda(v)=\sigma$. For any specific $k$-marked $H$, the vertex marked as $i$ and $j$ will have only one vertex label. Hence, there will be a homomorphism from exactly one $G_\delta$ (where $\delta$ matches the label in $H$) and there are 0 homomorphisms for all other germs of $G$.

  If $\varphi = E_\delta(x_i,x_j)$ there are two cases: if $i = j$, then let $G = \mathbf{0}$ as the predicate cannot be satisfied in a self-loop free $H$. If $i \neq j$, let $G = \sum_{\sigma,\sigma' \in \Sigma} G_{\sigma,\sigma'}$ where $G_{\sigma,\sigma'}$ is the graph with adjacent vertices $v$ and $u$, marked as $i, j$ and labeled with $\sigma, \sigma'$, respectively, and $\kappa(\{v,u\}) = \delta$. As above, for any given $H$,  one term of $G$ will have $1$ homomorphism into $H$, and the others will all have $0$.

That is, for every base case we have shown that the appropriate $k$-marked labeled quantum graph exists. It is clear that in all the base cases the treewidth is $1$. The rest of the induction proceeds exactly as in~\cite[Lemma 6]{DBLP:journals/jgt/Dvorak10}.
\end{proof}

\begin{proof}[Proof of \Cref{thm:color.dvorak}]
 As discussed above, we have that for labeled graphs $G,H$,  $G \not \equiv_{kWL} H$ implies the existence of a $\phi \in \mathcal{C}^L_{k+1}$ such that $G \models \varphi$ and $H \not \models \varphi$. By \Cref{qgmodel}, there is a $k$-marked labeled quantum graph $F$ with treewidth at most $k$ that models $\varphi$. Hence, $\sum_i \alpha_i \homs(F_i, G) = 1 \neq 0 = \sum_i \alpha_i \homs(F_i,H)$, which means there must be some $i$ such that $\homs(F_i,H)\neq \homs(F_i,H)$ (recall that the linear combination is always finite). Since $F$ has treewidth at most $k$, $F_i \in \mathcal{LT}_k$ and therefore $G \not \equiv_{\mathcal{LT}_k} H$.
\end{proof}

\subsection{On \Cref{seppelt} for Labeled Graphs}
\label{sec:seppelt}
\cite{DBLP:journals/corr/abs-2302-11290} originally showed \Cref{seppelt} for unlabeled graphs.
Here we briefly note why the lemma holds unchanged in the labeled case. In particular, the lemma relies on three facts. 
\begin{enumerate}
    \item First, that $\homs(F, G_1 \times G_2) = \homs(F,G_1)\homs(F,G_2)$. Where $\times$ is the \emph{direct product} as defined in~\citep{lovasz1967operations}.
    \item Second, that the matrix $(\homs(K,L))_{K,L \in \mathcal{L_n}}$ is invertible, where $\mathcal{L}_n$ is the class of all graphs with at most $n$ vertices.
    \item And finally, that if $G \equiv_\mathcal{F} H$, then also $G \times K \equiv_\mathcal{F} H \times K$ for all graphs $K$.
\end{enumerate}

The first two points are classic results by~\cite{lovasz1967operations}, which were originally shown for all relational structures. They thus hold unchanged also for labeled graphs.
For the final point, recall that $G \equiv_\mathcal{F} H$ means that for every $F \in \mathcal{F}$ we have $\homs(F,G) = \homs(F,H)$. Then by the first point also $\homs(F,G\times K) = \homs(F,G)\homs(F,K) = \homs(F,H)\homs(F,K) = \homs(F,H\times K)$.

\section{Proof of \Cref{checkwl}}

This section will make some slight departures from the terminology used in the main body of the paper. For a more focused presentation, we will first discuss the case for unlabeled graphs and afterwards show how to additionally handle labels afterwards.
In the main body, the $\spasm$ of a graph was defined as the set of all homomorphic images, but here it will be simpler to take an alternative (equivalent) perspective. 
A \emph{quotient} $G/\tau$ of a graph $G$ is obtained by taking a partition $\tau$ of $V(G)$ and constructing the graph like $G$ but with all vertices in the same block of $\tau$ identified. That is, the vertices of $G$ are the blocks of $\tau$, and the incidence of a vertex in $G/\sigma$ is the incidence of all vertices in the corresponding block of the partition. It is a standard observation that the set of all quotients of $G$ without self-loops is precisely $\spasm(G)$ (see, e.g., \cite{DBLP:conf/stoc/CurticapeanDM17}). It will be convenient to also consider the set of all quotients of $G$, i.e., including those with self-loops, which we will refer to as $\spasm^\circ$.

The arguments of this section will revolve around formulas of \emph{monadic second-order logic} (MSO). That is, formulas of first-order logic that additionally allow for quantification over unary predicates. For formal definition see, e.g.,~\citep{DBLP:journals/iandc/Courcelle90}.

We will decide the treewidth of graphs by deciding whether they contain certain other graphs as minors. To that end we first define the notion of minors formally.
\begin{definition}
  A \emph{minor model} from $H$ into $G$, is a mapping $f \colon V(H) \to 2^{V(G)}$ such that:
  \begin{enumerate}
  \item $\forall u \neq v \in V(H)\ f(u) \cap f(v) = \emptyset$,
  \item $\forall v \in V(H)$ $G[f(v)]$ is connected, and
  \item $\forall \{u,v\} \in E(H)$ there is an edge (in $G$) between some vertex in $f(u)$ and some vertex in $f(v)$.
  \end{enumerate}
  We say that $H$ is a minor of $G$ is there a minor model from $H$ into $G$.
\end{definition}

Checking whether graph $H$ is a minor of graph $G$ via an MSO formula is a standard construction. Interestingly, to check whether $H$ is a minor of \emph{some quotient} of $G$ is simpler, at least in terms of formulating the question in MSO. The key insight here is that the second condition of minor models can effectively be ignored, since there will always be a quotient where all vertices of $G[f(v)]$ are identified. Instead, it is enough to guarantee that the image is non-empty. In concrete terms, to check whether $H$ is a minor of a graph in $\spasm^\circ(G)$ we will use the following MSO formula.

\begin{align*}
   \mathit{QuotMinor}_H \equiv \quad & \exists  X_1,\dots,X_c \subseteq V(G) \ ( \bigwedge_i |X_i|\geq 1 \ \land  
   \bigwedge_{i\neq j} (X_i \cap X_j = \emptyset ) \ \land& \\
    &  
    \bigwedge_{\{v_i,v_j\} \in E(H)} \exists x \in X_i \ \exists y \in X_j ( (x,y) \in E(G) ) ) & %
\end{align*}

Intuitively, the sets $X_i$ correspond to the image of the minor model for vertex $v_i$ in $H$. 
The minor is of the quotient $\tau$ where $\tau$ extends the disjoint sets $X_1,\dots,X_c$ by the singletons $\{v\}$ for all $v \in V(G)$ but not in any $X_i$ for $i\in[c]$.

\begin{lemma}
    \label{lem:quotminor}
    Let $G$ and $H$ be graphs. Then $H$ is a minor of a quotient $G/\tau \in \spasm^\circ$ if and only if $G \models \mathit{QuotMinor}_H$.
\end{lemma}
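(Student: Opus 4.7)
The proof will be a direct construction establishing both directions of the equivalence, with the key conceptual observation being that collapsing each $X_i$ into a single block of the quotient automatically handles the connectedness condition (2) of minor models, which is why the formula $\mathit{QuotMinor}_H$ needs only to enforce non-emptiness, disjointness, and the edge condition.

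For the forward direction, I will start with a minor model $f \colon V(H) \to 2^{V(G/\tau)}$ witnessing that $H$ is a minor of some $G/\tau \in \mathsf{spasm}^\circ$. For each $v_i \in V(H)$, I will define $X_i \subseteq V(G)$ to be the union (taken in $V(G)$) of all blocks of $\tau$ that belong to $f(v_i)$. Non-emptiness of $X_i$ follows because $f(v_i)$ contains at least one block and blocks of a partition are non-empty. Disjointness of the $X_i$ follows because the $f(v_i)$ are pairwise disjoint sets of blocks of $\tau$. For each edge $\{v_i,v_j\} \in E(H)$, condition (3) of a minor model supplies an edge in $G/\tau$ between some block $B \in f(v_i)$ and some block $B' \in f(v_j)$; by definition of the quotient, that edge is the image of an edge $\{x,y\} \in E(G)$ with $x \in B \subseteq X_i$ and $y \in B' \subseteq X_j$, giving exactly the witnesses required by $\mathit{QuotMinor}_H$.

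For the backward direction, I will start with sets $X_1,\dots,X_c$ witnessing $G \models \mathit{QuotMinor}_H$ and define the partition $\tau$ of $V(G)$ whose blocks are $X_1,\dots,X_c$ together with $\{v\}$ for each $v \in V(G)\setminus\bigcup_i X_i$. Then $G/\tau \in \mathsf{spasm}^\circ(G)$ by definition. I will then define $f \colon V(H) \to 2^{V(G/\tau)}$ by $f(v_i) = \{X_i\}$, viewing $X_i$ as a single vertex of $G/\tau$. Disjointness and non-emptiness of the $f(v_i)$ are immediate from the disjointness and non-emptiness of the $X_i$, and each $G/\tau[f(v_i)]$ is a single vertex and hence trivially connected (this is precisely where merging the $X_i$ into single blocks pays off). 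For each edge $\{v_i,v_j\} \in E(H)$, the formula supplies $x \in X_i, y \in X_j$ with $\{x,y\} \in E(G)$, which becomes an edge between the vertices $X_i$ and $X_j$ of $G/\tau$, satisfying condition (3).

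I do not expect any real obstacle: the proof amounts to unpacking definitions and verifying each of the three conditions of a minor model against the three conjuncts of $\mathit{QuotMinor}_H$. The only point requiring attention is that $G/\tau$ may carry self-loops when some $X_i$ has internal edges in $G$, but this is harmless since $\mathsf{spasm}^\circ$ explicitly permits self-loops and the definition of minor model places no restriction forbidding them.
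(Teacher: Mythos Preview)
Your proposal is correct and follows essentially the same approach as the paper's proof: in the forward direction you pull back each $f(v_i)$ to a union of $\tau$-blocks in $V(G)$ (which is exactly the paper's preimage under the quotient map $\mu$), and in the backward direction you take the partition $\tau$ whose nontrivial blocks are the $X_i$ and define the minor model by $f(v_i)=\{X_i\}$. The verification of the three conditions matches the paper's argument point for point.
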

\begin{proof}
  Suppose $H$ is a minor of $G/\tau$ and let $h$ be the respective minor map from $H$ into $G/\tau$. Let $\mu$ be the edge surjective endomorphism (i.e., the homomorphic image) from $G$ into $G/\tau$ induced by the quotient (i.e., every vertex maps to the block in $\tau$ that contains the vertex).
  Let us refer to the vertices of $H$ as $v_1,\dots,v_c$. The formula is satisfied when for every $i\in[c]$, the second-order variable $X_i$ is assigned to $\{w \in V(G)  \mid \mu(w) \in f(v_i)\}$, that is, all vertices in $G$ that are mapped by $\mu$ to vertices in $f(v_i)$. Clearly, the sets $X_i$ are disjoint, as the images of $f$ are disjoint. Furthermore, we know by assumption that $f$ is a minor model that for every $\{v_i,v_j\} \in E(H)$, there is at least one edge $\{a,b\}$ with $a\in f(v_i)$, $b\in f(v_j)$. Since $\mu$ is edge surjective, there are $a',b' \in X_i$ such that 
  $\{a',b'\} \in E(G)$ and $\mu(a')=a, \mu(b')=b$.

  For the other direction now assume that the formula holds. Let $\tau_X$ be the partition of $V(G)$ induced by some satisfying choices of $X_1,\dots,X_c$ (any vertex $v$ not in any of these sets corresponds to a singleton $\{v\}$ in the partition). Consider the graph $G/\tau_X$ and let us refer to the vertex corresponding to the block defined by $X_i$ as $u_i$.
  We claim that $f\colon v_i \mapsto \{u_i\}$ is a minor model of $H$ into $G/\tau_X$. Condition 1 and 2 of a minor model are trivially satisfied. For Condition 3, observe that if there is an edge $\{v_i,v_j\} \in E(H)$ but no edge $\{u_i,u_j\} \in E(G/\tau_X)$, then there can be no $x\in X_i, y \in X_j$ that have an edge between them in $E(G)$, hence contradicting the satisfaction of the final block of conjuncts in $\mathit{QuotMinor}_H$.    
\end{proof}

We can resolve the possibility of self-loops with a simple observation about minors of quotient graphs. A self-loop occurs in quotient $G/\tau$ if there is a block $B$ of $\tau$ that contains adjacent vertices $v,u$. W.l.o.g., assume for now that $v,u$ are the only adjacent vertices in $B$.
If $H$ is a minor of $G/\tau$, 
it is also a minor of $G/\tau'$ where $\tau'$ is like $\tau$ but with $u$ as a singleton rather than in $B$. The vertex for $\{u\}$ in $G/\tau'$ will be adjacent to the vertex for block $B'=B\setminus \{u\}$. Contracting the vertices for $B'$ and $\{u\}$ will produce exactly $G/\tau$ without the self-loop at $B$. Iterating this idea shows that $G/\tau$ without self-loops is a minor of some loop-free quotient of $G$.

\begin{proposition}
\label{noloop}
Let $G$ be a graph and let $H$ be a loop-free graph. Suppose $H$ is a minor of some graph in $\spasm^\circ(G)$. Then $H$ is a minor of some graph in $\spasm$.
\end{proposition}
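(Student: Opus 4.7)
The plan is to formalize the iterative argument sketched immediately before the proposition: repeatedly split off a vertex from a block that carries a self-loop, and show that (i) each step preserves $H$ as a minor, and (ii) the process must terminate because a suitable natural-number measure strictly decreases. Concretely, I will take as the measure
\[
\ell(\tau) \;=\; \bigl|\{\, e \in E(G) \mid \text{both endpoints of } e \text{ lie in the same block of } \tau \,\}\bigr|,
\]
so that $G/\tau$ is self-loop free iff $\ell(\tau)=0$, and so that $G/\tau \in \mathsf{spasm}(G)$ iff $\ell(\tau)=0$. The proof will then be by induction on $\ell(\tau)$.

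For the base case $\ell(\tau)=0$, we have $G/\tau \in \mathsf{spasm}(G)$, so we are done. For the inductive step $\ell(\tau)>0$, pick an edge $\{u,v\} \in E(G)$ whose endpoints lie in a common block $B$ of $\tau$. Form $\tau'$ by replacing $B$ with the two blocks $B'=B\setminus\{u\}$ and $\{u\}$; all other blocks stay the same. The key geometric observation to verify is that $G/\tau$ is obtained from $G/\tau'$ by contracting the edge in $G/\tau'$ between the vertices corresponding to $\{u\}$ and $B'$ (and deleting any resulting parallel edges). Hence $G/\tau$ is a minor of $G/\tau'$, and since the minor relation is transitive, $H$ is a minor of $G/\tau'$. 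Moreover, every edge of $G$ that contributed to $\ell(\tau)$ through being inside $B$ either (a) has an endpoint equal to $u$, in which case its endpoints now lie in the distinct blocks $\{u\}$ and $B'$, or (b) has no endpoint equal to $u$, in which case its endpoints still lie in $B'$. Edges counted in $\ell$ through other blocks are unaffected. Since the edge $\{u,v\}$ itself falls into case (a), we have $\ell(\tau') \leq \ell(\tau)-1$, and the induction hypothesis applied to $\tau'$ yields a loop-free quotient of $G$ of which $H$ is a minor.

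The one step that requires a little care is the claim that contracting the edge between $\{u\}$ and $B'$ in $G/\tau'$ returns $G/\tau$; this follows because the quotient construction identifies vertices and suppresses parallel edges, so merging the blocks $\{u\}$ and $B'$ of $\tau'$ back into $B$ produces exactly the incidences of $G/\tau$ (including the self-loop at $B$ coming from $\{u,v\}$, which is irrelevant to $H$ being a loop-free minor but is present in $\mathsf{spasm}^\circ$). I do not expect any genuine obstacle here: the whole argument is a clean induction on $\ell(\tau)$, with transitivity of the minor relation doing the heavy lifting and the measure $\ell(\tau)$ providing termination.
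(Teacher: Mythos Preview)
Your proposal is correct and follows essentially the same approach as the paper's sketch preceding the proposition: split off one endpoint of an intra-block edge, observe that contracting the new edge recovers (the loop-free part of) the original quotient so $H$ remains a minor, and iterate. Your explicit measure $\ell(\tau)$ is a clean formalization of the paper's informal ``iterating this idea'' and in fact handles the case of multiple intra-block edges in a single block more transparently than the paper's ``w.l.o.g.\ $u,v$ are the only adjacent vertices in $B$''.
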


Robertson and Seymour famously showed that the graph minor relation is a well-quasi-ordering. Together with the classic observation that any well-quasi-ordering has a finite set of minimal elements this leads to the following standard result.
\begin{theorem}[\cite{DBLP:journals/jct/RobertsonS04}]
\label{thm:minorthm}
    For every minor closed class of graphs $\mathcal{P}$, there exists a finite set of graphs $\mathcal{F}$ such that $G \in \mathcal{P}$ if and only if no $F \in \mathcal{F}$ is a minor of $G$.
\end{theorem}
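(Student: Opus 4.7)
The plan is to reduce the statement to the celebrated Graph Minor Theorem of Robertson and Seymour, which asserts that the minor relation on finite graphs is a \emph{well-quasi-ordering} (WQO): for every infinite sequence $G_1, G_2, \dots$ of finite graphs, there exist indices $i < j$ such that $G_i$ is a minor of $G_j$. Given WQO, the theorem follows from a short abstract-order-theoretic argument, so I would split the proof into two clearly delineated parts.

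First, I would set up the order-theoretic step. Let $\mathcal{P}$ be minor-closed and let $\mathcal{F}_0 = \{G \mid G \notin \mathcal{P}\}$ be its complement. Observe that $\mathcal{F}_0$ is \emph{upward-closed} under the minor relation: if $G \notin \mathcal{P}$ and $G$ is a minor of $G'$, then since $\mathcal{P}$ is minor-closed, $G' \in \mathcal{P}$ would force $G \in \mathcal{P}$, a contradiction; hence $G' \notin \mathcal{P}$. Let $\mathcal{F}$ be the set of minor-minimal elements of $\mathcal{F}_0$. Then by definition $G \in \mathcal{P}$ iff no element of $\mathcal{F}$ is a minor of $G$: the ``only if'' direction is immediate, and the ``if'' direction uses that if $G \notin \mathcal{P}$, then $G \in \mathcal{F}_0$ and hence dominates (in the minor order) some minor-minimal element of $\mathcal{F}_0$, i.e., some element of $\mathcal{F}$. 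It then remains to show $\mathcal{F}$ is finite. Suppose for contradiction that $\mathcal{F}$ were infinite, so it contains an infinite sequence of pairwise distinct graphs $F_1, F_2, \dots$. By WQO, there are $i < j$ with $F_i$ a minor of $F_j$; by minimality of $F_j$, we cannot have $F_i$ a proper minor of $F_j$, so $F_i = F_j$, contradicting distinctness. Hence $\mathcal{F}$ is finite.

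Second, I would have to supply the WQO step, i.e., the Graph Minor Theorem itself. Realistically, this is not something to reprove within a paper; I would cite \citep{DBLP:journals/jct/RobertsonS04} and the accompanying series of papers by Robertson and Seymour. If pressed to sketch any of it, I would mention the main ingredients: reducing to WQO on graphs of bounded tree-width (which follows from Kruskal's tree theorem via tree-decompositions and the labelled version of WQO for trees due to Nash-Williams), and then handling graphs of large tree-width by the \emph{structure theorem} for graphs excluding a fixed minor, which decomposes them into pieces ``almost embeddable'' in a fixed surface, combined with WQO of graphs on a fixed surface.

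The hard part of the whole argument is unquestionably the WQO of the minor order; the order-theoretic packaging in the first paragraph is routine and textbook. Since the present paper is about the WL-dimension rather than graph minor theory, I would phrase the proof to make clear that the only nontrivial content is the cited Robertson--Seymour result, and to present the finite-basis extraction as a standard consequence that we spell out for completeness.
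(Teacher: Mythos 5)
Your proposal is correct and matches the paper's treatment: the paper simply cites the Robertson--Seymour well-quasi-ordering theorem and notes, exactly as you do, that the finite forbidden-minor set follows from the standard observation that the minor-minimal elements of the (upward-closed) complement form a finite antichain. Your spelled-out order-theoretic extraction is the same routine argument the paper alludes to in one sentence.
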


\begin{theorem}
  Deciding $\max\{tw(F) \mid F \in \spasm(G) \} \geq k$ is feasible in fixed-parameter linear time when parameterised by $k$, and in linear time for fixed $k$.
\end{theorem}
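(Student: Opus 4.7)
The plan is to reduce the problem to MSO model checking on a bounded-treewidth graph, combining the quotient-minor machinery of \Cref{lem:quotminor} and \Cref{noloop} with the Robertson--Seymour forbidden-minor characterisation of treewidth. The algorithm has two phases. In the first phase, I would compute $tw(G)$ using Bodlaender's algorithm~\citep{DBLP:journals/siamcomp/Bodlaender96}, which runs in fpt-linear time in $k$ and in linear time for fixed $k$. Since $G \in \mathsf{spasm}(G)$, if $tw(G) \geq k$ the algorithm immediately returns YES. Otherwise we have secured the crucial side information that $G$ itself has bounded treewidth, which is exactly what Courcelle's meta-theorem needs.

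For the second phase, the observation is that the class of graphs of treewidth at most $k-1$ is closed under taking minors, so by \Cref{thm:minorthm} there exists a finite family $\mathcal{F}_k$ of (simple, loop-free) forbidden minors such that for any graph $F$, $tw(F) \geq k$ iff some member of $\mathcal{F}_k$ is a minor of $F$. Combining this with \Cref{lem:quotminor} and \Cref{noloop} yields the equivalence
\[
    \max\{tw(F) \mid F \in \mathsf{spasm}(G)\} \geq k \quad\Longleftrightarrow\quad G \models \bigvee_{H \in \mathcal{F}_k} \mathit{QuotMinor}_H.
\]
The role of \Cref{noloop} here is to bridge from $\mathsf{spasm}^\circ(G)$, which is what $\mathit{QuotMinor}_H$ reasons about, to $\mathsf{spasm}(G)$: because every $H \in \mathcal{F}_k$ is loop-free, if $H$ is a minor of some quotient with self-loops then $H$ is already a minor of a loop-free quotient, i.e.\ of a graph in $\mathsf{spasm}(G)$. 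The right-hand side of the equivalence is a fixed MSO sentence $\varphi_k$ depending only on $k$, and since $tw(G) < k$ we invoke Courcelle's theorem~\citep{DBLP:journals/iandc/Courcelle90} to decide $G \models \varphi_k$ in time $f(k)\cdot |V(G)|$, giving overall fpt-linear time, and linear time for fixed $k$.

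The main obstacle, and the price of this clean argument, is the well-known non-constructivity of Robertson--Seymour: we only know that $\mathcal{F}_k$, and hence $\varphi_k$, exist, without an effective description for arbitrary $k$. This does not affect the complexity statement since for each fixed $k$ the formula and the resulting Courcelle algorithm exist, but it does mean that the constants hidden in $f$ are not uniformly effective. Everything else is bookkeeping: $\mathit{QuotMinor}_H$ is an explicit MSO sentence, finite disjunctions preserve MSO, and both Bodlaender's algorithm and Courcelle's meta-theorem deliver the promised running times. Extending the statement to labeled graphs, as needed in \Cref{checkwl}, requires only enriching the signature with the unary label predicates $U_\sigma$ and binary label predicates $E_\delta$ used throughout the paper, since labels do not affect treewidth or minor containment.
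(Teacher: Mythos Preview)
Your proof is correct and follows essentially the same route as the paper: first use Bodlaender's algorithm to either answer immediately or certify $tw(G)\le k-1$, then invoke the Robertson--Seymour forbidden minors for treewidth $\le k-1$ together with \Cref{lem:quotminor} and \Cref{noloop} to reduce the question to a fixed MSO sentence, and finish with Courcelle's theorem. One small caveat on your closing remark about labels: enriching the signature alone is not sufficient, because labels constrain which quotients are homomorphic images (a block of the partition must be monochromatic in vertex labels, and the quotient must not create parallel edges with distinct edge labels); the paper handles this by conjoining two further MSO constraints to $\mathit{QuotMinor}_H$, which is the step you would still need to supply for \Cref{checkwl}.
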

\begin{proof}
  We first check whether $tw(G) \leq k-1$. It is well known that this can be decided in linear time for fixed $k$~\citep{DBLP:journals/siamcomp/Bodlaender96}. If not, then $tw(G)\geq k$ and the algorithm returns true.
  
  The property of having treewidth at most $k-1$ is closed under graph minors. By~\Cref{thm:minorthm} there exists a \emph{finite} set $\mathcal{F}$  of forbidden minors for class of graphs with treewidth at most $k-1$. Furthermore, the set $\mathcal{F}$ depends only on the parameter $k$. Thus, we have that a graph $H$ has treewidth $\geq k$ iff it does not have treewidth $\leq k-1$ iff it has some graph of $\mathcal{F}$ as a minor. Combining with~\Cref{noloop}, we can therefore decide whether the treewidth in the spasm is at least $k$, by checking for every $F \in \mathcal{F}$ whether $F$ is a minor of any quotient of $G$. According to~\Cref{lem:quotminor}, this is equivalent to checking whether the formula
  \[
    \phi \equiv \bigvee_{F\in\mathcal{F}} \ \mathit{QuotMinor}_F 
  \]
  is satisfied by $G$. By Courcelle's Theorem~\citep{DBLP:journals/iandc/Courcelle90} we can decide $G \models \phi$ in time $f(k, \varphi)\cdot |G|$ where $k$ is the treewidth of $G$. Since $\varphi$ depends only on $k$ this is equivalent to $f(k) \cdot |G|$ and hence polynomial for fixed $k$.
\end{proof}

As a final note we wish to point out that this result is primarily of theoretical interest. For most practical pattern sizes, naive enumeration of $\spasm$ and checking the treewidth individually with state-of-the-art systems for treewidth computation is feasible. The bottleneck in practice is the size of the pattern, as enumerating the contents of $\spasm$ via naive methods, e.g., enumerating all partitions, becomes challenging above 11 vertices (see Sequence A000110 in the On-Line Encyclopedia of Integer Sequences~\citep{oeis}).

\paragraph{Adding Labels}
    We finally discuss the necessary changes for labeled graphs in the above argument, from which \Cref{checkwl} then follows immediately.
    By definition, treewidth of labeled graphs is unaffected by the labels. As such we can simply ignore labels in checking for minors. Where the labels make a difference however, is in the set $\spasm$ itself. Not all loop-free quotients are homomorphic images in the labeled case, but rather we require an additional restriction.
    
    A quotient $G/\tau$ will not be a homomorphic image in labeled graphs in two cases. First, if a block of $\tau$ contains two vertices with different vertex labels, and second, if the quotient would create parallel edges with different labels. To lift the previous argument to the labeled case it is therefore enough to enforce these two extra conditions in the $\mathit{QuotMinor_F}$ formulas. To this end, recall that in a satisfying interpretation of the formula, the sets $X_i$ correspond precisely to the non-trivial blocks of the quotient for  which $F$ is a minor. The two extra restrictions can then be enforced simply by conjunction with the following two formulas formula inside the scope of second order quantification (where $\Sigma,\Delta$ are the vertex- and edge-label alphabets, respectively).
    \begin{align}
       & \bigwedge_{i=1}^c \bigvee_{\sigma \in \Sigma} \forall x \in X_i\ U_\sigma(x) \label{quot.unicol} \\
        & \neg \bigvee_{\substack{i,j \in [c]^2 \\ i \neq j} }\exists x_1,x_2 \in X_i \ \exists y_1,y_2 \in X_j \left(\bigvee_{\substack{\delta,\delta' \in \Delta \\ \delta \neq \delta'}} (x_1,y_1) \in E_\delta(G) \land (x_2,y_2) \in E_{\delta'}(G) \right)
        \label{quot.par}
    \end{align}

    The term in \ref{quot.unicol} is easy to interpret, every block $X_i$ of the partition must have uniform vertex labels.
    The term \ref{quot.par} states that there are no two blocks $X_i, X_j$ such that there are edges with two different edge labels between the blocks, i.e., it is not the case that the quotient will have parallel edges with different labels.
    Relation $E$ in the original definition of $\mathit{QuotMinor}$ can trivially be replaced by the disjunction over all edge-label relations.
    Thus we can easily adapt our argument above to also decide the existence of minor models in  $\spasm$ of a labeled graph. 

\end{document}